\newtheorem{theorem}{Theorem}[section]
\newtheorem{lemma}{Lemma}[section]
\theoremstyle{definition}
\newtheorem{definition}{Definition}[section]
\newtheorem{corollary}{Corollary}[section]
\DeclareMathOperator{\Tr}{Tr}
\newcommand{\spaces}{\hspace{2mm}}
\DeclareMathOperator{\softmaxOp}{softmax}
\newcommand{\softmax}[1]{\softmaxOp\left(#1\right)}
\DeclareMathOperator{\lip}{Lip}
\DeclareMathAlphabet{\pazocal}{OMS}{zplm}{m}{n}
\newcommand{\unif}{\pazocal{U}}
\DeclareMathOperator{\diag}{diag}
\newcommand{\norm}[1]{\left\lVert#1\right\rVert}
\icmltitlerunning{The Lipschitz Constant of Self-Attention}
\begin{document}

\twocolumn[
\icmltitle{The Lipschitz Constant of Self-Attention}



\icmlsetsymbol{equal}{*}

\begin{icmlauthorlist}
\icmlauthor{Hyunjik Kim}{dm}
\icmlauthor{George Papamakarios}{dm}
\icmlauthor{Andriy Mnih}{dm}
\end{icmlauthorlist}

\icmlaffiliation{dm}{DeepMind, UK}

\icmlcorrespondingauthor{Hyunjik Kim}{hyunjikk@google.com}

\icmlkeywords{Machine Learning, ICML}

\vskip 0.3in
]



\printAffiliationsAndNotice{} 

\begin{abstract}
Lipschitz constants of neural networks have been explored in various contexts in deep learning, such as provable adversarial robustness, estimating Wasserstein distance, stabilising training of GANs, and formulating invertible neural networks.
Such works have focused on bounding the Lipschitz constant of fully connected
or convolutional networks, composed of linear maps and pointwise non-linearities. 
In this paper, we investigate the Lipschitz constant of
self-attention, a non-linear neural network module widely used in sequence modelling.
We prove that the standard dot-product self-attention is \emph{not} Lipschitz for unbounded input domain, and propose an alternative L2 self-attention that \emph{is} Lipschitz. 
We derive an upper bound on the Lipschitz constant of L2 self-attention and provide empirical evidence for its asymptotic tightness. 
To demonstrate the practical relevance of our theoretical work, we formulate invertible self-attention and use it in a Transformer-based architecture for a character-level language modelling task.
\end{abstract}

\section{Introduction}
\label{sec:intro}
Lipschitz continuity is a strong form of continuity for functions. 
Loosely speaking, a function is \textit{Lipschitz continuous} if changing its input by a certain amount cannot change its output by more than $K$ times that amount. 
The constant $K$ is a hard constraint on how rapidly the function's output can vary, and the smallest such $K$ is known as the function's \textit{Lipschitz constant}.
For example, \smash{$f_1(x) = \sqrt{|x|}$} and $f_2(x) = \exp(x)$ for $x\in\mathbb{R}$ are not Lipschitz continuous, because their output can change arbitrarily fast as $x$ approaches $0$ and $+\infty$ respectively. 
On the other hand, $g_1(x) = \tanh(x)$ and $g_2(x) = \alpha x$ are Lipschitz continuous, because their rate of change (derivative) is bounded.

In deep learning, we often use Lipschitz continuity as a constraint for neural networks, to control how much a network's output can change relative to its input. 
Such Lipschitz constraints are useful in several contexts. 
For example, Lipschitz constraints can endow models with provable robustness against adversarial pertubations \citep{cisse2017parseval, tsuzuku2018lipschitz, anil2019sorting}, and guaranteed generalisation bounds \citep{sokolic2017robust}.
Moreover, the dual form of the Wasserstein distance is defined as a supremum over Lipschitz functions with a given Lipschitz constant, hence Lipschitz-constrained networks are used for estimating Wasserstein distances \citep{peyre2019computational}. 
Further, Lipschitz-constrained networks can stabilise training for GANs, an example being spectral normalisation \citep{miyato2018spectral}.
Finally, Lipschitz-constrained networks are also used to construct invertible models and normalising flows. 
For example, Lipschitz-constrained networks can be used as a building block for invertible residual networks and hence flow-based generative models \citep{behrmann2018invertible, chen2019residual}.
Additionally, Neural ODEs \citep{chen2018neural, grathwohl2018ffjord} are typically defined using vector fields parameterized via Lipschitz networks, so that the flow generated by the vector field is guaranteed to exist for all times.

Nonetheless, designing Lipschitz-continuous neural networks and computing (or even upper-bounding) their Lipschitz constant is a hard problem. 
Previous work mostly focused on fully-connected and convolutional networks, not only because they are common in deep learning, but also because they are relatively simple to analyze, as compositions of linear maps and pointwise non-linearities. 
Even in this case however, exact evaluation of the Lipschitz constant
of fully-connected and convolutional networks is NP-hard \citep{virmaux2018lipschitz} and obtaining a tight upper bound remains a challenging task \citep{virmaux2018lipschitz, fazlyab2019efficient, Latorre2020Lipschitz}.

Fully-connected and convolutional networks are not the only neural networks worthy of interest.
Recently, \textit{self-attention} \citep{vaswani2017attention} has become a popular alternative to recurrent neural networks. Self-attention is a key component of the Transformer \citep{vaswani2017attention}, that has found success as a building block in models of various data modalities, starting with natural-language processing \citep{vaswani2017attention, devlin2018bert, brown2020language} and extending to computer vision \mbox{\citep{zhang2018self, ramachandran2019stand}}, audio generation \citep{huang2018music}, and reinforcement learning \citep{parisotto2019stabilizing}. However, so far no previous work has analysed the Lipschitz properties of self-attention, and thus it has been unclear whether self-attention is a viable option in applications that require Lipschitz constraints.
In this work, we address this gap in the theory of self-attention by providing a thorough analysis of its Lipschitz properties. In particular, we make the following contributions: 
\begin{itemize}[leftmargin=*]
    \item We prove that the widely used \textit{dot-product self-attention} is \emph{not} Lipschitz, and therefore not suitable to use in applications requiring Lipschitz constraints.
    \item We formulate \textit{L2 self-attention} as an alternative, and show that it \emph{is} Lipschitz.
    \item We derive a theoretical upper bound on the Lipschitz constant of L2 self-attention, and provide empirical evidence of the asymptotic tightness of the bound.
    \item As a practical demonstration of the theory, we use this bound to formulate invertible self-attention, and explore its use in a Transformer architecture for character-level language modelling. We compare its test log-likelihood and stability to dot-product self-attention.
\end{itemize}

\section{Lipschitz Constant of Fully-Connected/Convolutional Layers}
We first define the notion of Lipschitz continuity, and proceed to define the Lipschitz constant.
\begin{definition}\label{Lipschitz_definition}
Given two metric spaces $(\mathcal{X}, d_{\mathcal{X}})$ and $(\mathcal{Y}, d_{\mathcal{Y}})$, a function $f:\mathcal{X} \rightarrow \mathcal{Y}$ is called \textit{Lipschitz continuous} (or $K$-\textit{Lipschitz}) if there exists a constant $K\geq 0$ such that 
\begin{equation}
d_{\mathcal{Y}}(f(\mathbf{x}),f(\mathbf{x'})) \leq K d_{\mathcal{X}}(\mathbf{x},\mathbf{x'}) \spaces\spaces \text{for all } \mathbf{x},\mathbf{x'} \in \mathcal{X}.
\end{equation}
The smallest such $K$ is the \textit{Lipschitz constant} of $f$, denoted $\lip(f)$.
\end{definition}
In this paper, we focus on the common case where $\mathcal{X} = \mathbb{R}^n$, $\mathcal{Y} = \mathbb{R}^m$, and $d_{\mathcal{X}}, d_{\mathcal{Y}}$ are induced by a $p$-norm \smash{$\|\mathbf{x}\|_p \coloneqq (\sum_{i} |x_i|^p)^{{1}/{p}}$}. 
We will primarily consider the cases $p=2$ and $p=\infty$, where $\|\mathbf{x}\|_{\infty} \coloneqq \max_i |x_i|$.
To emphasise the dependence of the Lipschitz constant on the choice of $p$-norm, we will often denote it by $\lip_p(f)$. 
In this case, it follows directly from Definition \ref{Lipschitz_definition} that the Lipschitz constant is given by
\begin{equation}\label{eq:lipschitz_supremum_definition}
    \lip_p(f) = \sup_{\mathbf{x}\neq \mathbf{x'} \in \mathbb{R}^n} \frac{\|f(\mathbf{x})-f(\mathbf{x'})\|_p}{\|\mathbf{x}-\mathbf{x'}\|_p}.
\end{equation}
Next, we outline some basic results that are useful for estimating Lipschitz constants, also covered in related works \citep{virmaux2018lipschitz, behrmann2018invertible}. 
We describe how these results are used to provide bounds on the Lipschitz constant of fully-connected networks (\verb!FCN!) and convolutional neural networks (\verb!CNN!), using the fact that both are compositions of linear maps and pointwise non-linearities.
To begin with, the following theorem suggests a way to bound $\lip_p(f)$ for a differentiable Lipschitz function $f$:
\begin{theorem}[\citealp{federer1969geometric}] \label{thm:jacobian} Let $f: \mathbb{R}^n \rightarrow \mathbb{R}^m$ be differentiable and Lipschitz continuous under a choice of $p$-norm $\|\cdot\|_p$. 
Let $J_f(x)$ denote its total derivative (Jacobian) at $x$. Then $\lip_p(f) = \sup_{\mathbf{x}\in \mathbb{R}^n} \|J_f(\mathbf{x})\|_p$ where $\|J_f(\mathbf{x})\|_p$ is the induced operator norm on $J_f(\mathbf{x})$. 
\end{theorem}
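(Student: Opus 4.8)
The plan is to prove the equality by establishing the two matching inequalities $\sup_{\mathbf{x}} \|J_f(\mathbf{x})\|_p \le \lip_p(f)$ and $\lip_p(f) \le \sup_{\mathbf{x}} \|J_f(\mathbf{x})\|_p$, and then combining them.

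For the first (easier) inequality, I would fix an arbitrary point $\mathbf{x}$ and an arbitrary direction $\mathbf{v} \neq \mathbf{0}$, and exploit the definition of the total derivative. Since $J_f(\mathbf{x})\mathbf{v} = \lim_{t \to 0} (f(\mathbf{x}+t\mathbf{v}) - f(\mathbf{x}))/t$, the Lipschitz bound $\|f(\mathbf{x}+t\mathbf{v}) - f(\mathbf{x})\|_p \le \lip_p(f)\,|t|\,\|\mathbf{v}\|_p$ passes to the limit $t \to 0$ to give $\|J_f(\mathbf{x})\mathbf{v}\|_p \le \lip_p(f)\,\|\mathbf{v}\|_p$. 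As $\mathbf{v}$ was arbitrary, taking a supremum over unit vectors yields $\|J_f(\mathbf{x})\|_p \le \lip_p(f)$ for the induced operator norm, and a further supremum over $\mathbf{x}$ gives the claim.

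For the reverse inequality, I would write $M = \sup_{\mathbf{x}} \|J_f(\mathbf{x})\|_p$ and connect $f(\mathbf{x}') - f(\mathbf{x})$ to the Jacobian along the straight-line path $\gamma(t) = \mathbf{x} + t(\mathbf{x}'-\mathbf{x})$ for $t \in [0,1]$. Applying the chain rule to $g(t) = f(\gamma(t))$ and the fundamental theorem of calculus componentwise gives $f(\mathbf{x}') - f(\mathbf{x}) = \int_0^1 J_f(\gamma(t))(\mathbf{x}'-\mathbf{x})\,dt$. Bounding the norm of the integral by the integral of the norm, and using $\|J_f(\gamma(t))(\mathbf{x}'-\mathbf{x})\|_p \le M \|\mathbf{x}'-\mathbf{x}\|_p$ uniformly in $t$, I obtain $\|f(\mathbf{x}') - f(\mathbf{x})\|_p \le M \|\mathbf{x}'-\mathbf{x}\|_p$. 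Since $\mathbf{x}, \mathbf{x}'$ are arbitrary, this says $f$ is $M$-Lipschitz, i.e. $\lip_p(f) \le M$ by the characterisation in \eqref{eq:lipschitz_supremum_definition}.

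The main obstacle is the reverse inequality, which rests on the vector-valued fundamental theorem of calculus. Because there is no exact mean value theorem for maps into $\mathbb{R}^m$ with $m > 1$, I would be careful to use the integral form rather than a single evaluation point, and to justify the inequality $\left\|\int_0^1 h(t)\,dt\right\|_p \le \int_0^1 \|h(t)\|_p\,dt$ for the integrand $h(t) = J_f(\gamma(t))(\mathbf{x}'-\mathbf{x})$. This requires $t \mapsto J_f(\gamma(t))$ to be integrable along the segment; continuity of the derivative would suffice, and more generally measurability together with the a priori Lipschitz (hence absolutely continuous) hypothesis on $f$ makes the argument fully rigorous.
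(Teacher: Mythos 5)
Your proposal is correct. Note that the paper itself gives no proof of this statement --- it is quoted with a citation to \citet{federer1969geometric} --- so the comparison is with the standard textbook argument, which is exactly what you give. Both halves are sound: the easy direction via directional difference quotients is airtight, and you correctly flag the one genuine subtlety in the reverse direction, namely that ``differentiable'' does not mean $C^1$, so $t \mapsto J_f(\gamma(t))$ need not be continuous and the fundamental theorem of calculus must be invoked in its Lebesgue form. Your fix works: $g(t) = f(\gamma(t))$ is Lipschitz, hence absolutely continuous, and differentiable everywhere with $g'(t) = J_f(\gamma(t))(\mathbf{x}'-\mathbf{x})$, which is bounded and measurable (a pointwise limit of continuous difference quotients), so $g(1)-g(0) = \int_0^1 g'(t)\,dt$ holds componentwise and the norm-of-integral bound follows by duality. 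One alternative worth knowing, which sidesteps integration entirely: pick a norming functional $\mathbf{u}$ for $f(\mathbf{x}')-f(\mathbf{x})$ in the dual ($q$-)norm, apply the scalar mean value theorem to $\varphi(t) = \mathbf{u}^\top g(t)$, and conclude $\|f(\mathbf{x}')-f(\mathbf{x})\|_p = \varphi(1)-\varphi(0) = \mathbf{u}^\top J_f(\gamma(t^*))(\mathbf{x}'-\mathbf{x}) \leq M\|\mathbf{x}'-\mathbf{x}\|_p$; this needs only everywhere-differentiability and no measure theory. Your version buys a template that generalises to a.e.-differentiable (e.g.\ merely Lipschitz) maps, which is closer to the setting Federer actually treats; the dual-functional version buys brevity under the stated hypotheses.
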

Hence if $f$ is a linear map represented by a matrix $W$ then
\begin{align*}
    \lip_p(f)&= \|W\|_p \coloneqq \sup_{\|\mathbf{x}\|_p=1} \|W\mathbf{x}\|_p \\
    &=
    \begin{cases}
        \sigma_{\max}(W), & \text{if } p=2\\
        \max_i \sum_j |W_{ij}|      & \text{if } p = \infty
    \end{cases}
\end{align*}
where $\|W\|_p$ is the operator norm on matrices induced by the vector $p$-norm, and $\sigma_{\max}(W)$ is the largest singular value of $W$. 
Under this choice of norm, many common non-linearities (including \verb!relu!, \verb!sigmoid!, \verb!tanh!, \verb!elu!) are $1$-Lipschitz. 
$\|W\|_2= \sigma_{\text{max}}(W)$ is usually estimated via \textit{power iteration}; we provide details on how this is done in Appendix \ref{apd:power_iteration}.

Since we now know the Lipschitz constants of the components of both \verb!FCN! and \verb!CNN!, we can bound their Lipschitz constants by applying the following lemma:
\begin{lemma}[\citealp{federer1969geometric}]
Let $g,h$ be two composable Lipschitz functions. Then $g \circ h$ is also Lipschitz with $\lip(g \circ h) \leq \lip(g) \lip(h)$.
\end{lemma}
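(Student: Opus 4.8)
The plan is to prove the bound directly from Definition~\ref{Lipschitz_definition} by chaining the two Lipschitz inequalities transitively. First I would fix notation to make the composition meaningful: since $g$ and $h$ are composable, write $h : (\mathcal{X}, d_{\mathcal{X}}) \to (\mathcal{Y}, d_{\mathcal{Y}})$ and $g : (\mathcal{Y}, d_{\mathcal{Y}}) \to (\mathcal{Z}, d_{\mathcal{Z}})$, so that $g \circ h : \mathcal{X} \to \mathcal{Z}$ is well-defined and the goal is to control $d_{\mathcal{Z}}$ between its outputs in terms of $d_{\mathcal{X}}$ between its inputs.

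The core step is a two-line estimate. I would fix arbitrary $\mathbf{x}, \mathbf{x'} \in \mathcal{X}$ and apply the defining inequality of Definition~\ref{Lipschitz_definition} to $g$ at the points $h(\mathbf{x}), h(\mathbf{x'}) \in \mathcal{Y}$, giving
\begin{equation*}
d_{\mathcal{Z}}\big(g(h(\mathbf{x})), g(h(\mathbf{x'}))\big) \leq \lip(g)\, d_{\mathcal{Y}}\big(h(\mathbf{x}), h(\mathbf{x'})\big).
\end{equation*}
Then I would apply the same inequality to $h$ at $\mathbf{x}, \mathbf{x'}$ to get $d_{\mathcal{Y}}(h(\mathbf{x}), h(\mathbf{x'})) \leq \lip(h)\, d_{\mathcal{X}}(\mathbf{x}, \mathbf{x'})$, and substitute to obtain $d_{\mathcal{Z}}((g\circ h)(\mathbf{x}), (g\circ h)(\mathbf{x'})) \leq \lip(g)\lip(h)\, d_{\mathcal{X}}(\mathbf{x}, \mathbf{x'})$.

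Since this holds for every pair $\mathbf{x}, \mathbf{x'}$, the constant $K = \lip(g)\lip(h)$ witnesses that $g \circ h$ is Lipschitz. The only point needing care is the final inference: because $\lip(g \circ h)$ is \emph{by definition} the smallest admissible Lipschitz constant, exhibiting one valid constant $\lip(g)\lip(h)$ yields $\lip(g \circ h) \leq \lip(g)\lip(h)$, and one should resist claiming equality. I do not expect any real obstacle here — the argument is a direct chaining of the two inequalities and uses nothing beyond Definition~\ref{Lipschitz_definition}. The one substantive remark worth making explicit is \emph{why} the bound can be strict rather than tight: cancellation between $g$ and $h$ (for instance when $g$ contracts precisely along the directions in which $h$ expands) can make the composite's Lipschitz constant strictly smaller than the product, so the inequality should not be upgraded to an equality.
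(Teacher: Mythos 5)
Your proof is correct: the paper itself gives no proof of this lemma (it simply cites \citealp{federer1969geometric}), and your chaining argument --- applying the Lipschitz inequality for $g$ at the points $h(\mathbf{x}), h(\mathbf{x'})$ and then the inequality for $h$, followed by the observation that exhibiting one admissible constant bounds the infimum --- is exactly the standard proof of this fact. Your closing remark that the inequality can be strict (e.g.\ when $g$ contracts along the directions in which $h$ expands) is also accurate and worth keeping.
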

\begin{corollary} \label{cor:lip_conv}
For a fully-connected network (\verb!FCN!) or a convolutional neural network (\verb!CNN!) $f=W_K \circ \rho_{K-1} \circ W_{K-1} \circ \ldots \circ \rho_1 \circ W_1$, we have $\lip_p(f) \leq \prod_k \|W_k\|_p$ under a choice of $p$-norm with $1$-Lipschitz non-linearities $\rho_k$.
\end{corollary}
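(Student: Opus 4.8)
The plan is to treat this as a direct consequence of the composition lemma, applied inductively, together with the two facts already established in the excerpt: that a linear map represented by $W_k$ has $\lip_p(W_k) = \|W_k\|_p$, and that each non-linearity $\rho_k$ is $1$-Lipschitz by assumption. The only mild gap is that the composition lemma is stated for two composable functions, whereas $f$ is a composition of $2K-1$ maps, so the first step is to lift the lemma to a finite composition.

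Concretely, I would first prove by induction on $m$ that for composable Lipschitz functions $h_1, \ldots, h_m$ one has $\lip_p(h_m \circ \cdots \circ h_1) \leq \prod_{i=1}^m \lip_p(h_i)$. The base case $m=1$ is trivial, and the inductive step writes $h_m \circ \cdots \circ h_1 = h_m \circ (h_{m-1} \circ \cdots \circ h_1)$, applies the two-function composition lemma to bound this by $\lip_p(h_m)\,\lip_p(h_{m-1} \circ \cdots \circ h_1)$, and then invokes the inductive hypothesis on the inner composition. (One should also note that each factor is indeed Lipschitz, so the composition lemma applies at every stage: the $W_k$ are Lipschitz because they are bounded linear maps, and the $\rho_k$ are Lipschitz by hypothesis, which also guarantees $f$ itself is Lipschitz.)

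Applying this finite-composition bound to $f = W_K \circ \rho_{K-1} \circ W_{K-1} \circ \cdots \circ \rho_1 \circ W_1$ gives
\begin{equation*}
\lip_p(f) \leq \lip_p(W_K) \prod_{k=1}^{K-1} \lip_p(\rho_k)\,\lip_p(W_k).
\end{equation*}
I would then substitute $\lip_p(W_k) = \|W_k\|_p$ (from the linear-map computation following Theorem \ref{thm:jacobian}) and $\lip_p(\rho_k) \leq 1$ (the $1$-Lipschitz assumption). The non-linearity factors contribute at most $1$ and hence drop out of the product, leaving $\lip_p(f) \leq \prod_{k=1}^K \|W_k\|_p$, as claimed. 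There is essentially no hard step here: the entire content is the inductive extension of the two-function composition lemma plus the observation that the $1$-Lipschitz factors are absorbed. The only point warranting care is keeping the indexing of the interleaved linear and non-linear factors straight so that exactly the $K$ weight matrices survive in the final product.
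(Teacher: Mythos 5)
Your proposal is correct and matches the paper's (implicit) argument exactly: the paper presents the corollary as an immediate consequence of the two-function composition lemma applied repeatedly, together with $\lip_p(W_k)=\|W_k\|_p$ for linear maps and the $1$-Lipschitz assumption on the $\rho_k$. Your explicit induction on the number of factors simply fills in the routine step the paper leaves to the reader, so there is nothing further to reconcile.
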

The above bound is not necessarily tight; there are various works that compute tighter bounds for \verb!FCN! and \verb!CNN! \citep[e.g.][]{virmaux2018lipschitz, fazlyab2019efficient, Latorre2020Lipschitz}.

\section{Lipschitz Constant of Self-Attention}

\subsection{Dot-product self-attention is \emph{not} Lipschitz}

Moving on, we investigate whether self-attention is Lipschitz. We first consider the widely used \textit{(scaled) dot-product multihead self-attention}  as formulated by \citet{vaswani2017attention}.
Let $\mathbf{x}_1, \ldots, \mathbf{x}_N$ be a sequence of $N$ elements, where $\mathbf{x}_i \in \mathbb{R}^D$ for $i=1,\ldots,N$.
We represent this sequence as a matrix $X$:
\begin{equation}
    X = \begin{bmatrix}
    \text{---} & \mathbf{x}_1^\top & \text{---}\\
    & \vdots & \\
    \text{---} & \mathbf{x}_N^\top & \text{---} \\
    \end{bmatrix}\in \mathbb{R}^{N \times D},
\end{equation}

Dot-product multihead self-attention (\verb!DP-MHA!) is a map from $\mathbb{R}^{N \times D}$ to $\mathbb{R}^{N \times D}$ consisting of $H$ `heads', where $H$ is chosen to divide $D$. Each head is a map from $\mathbb{R}^{N \times D}$ to $\mathbb{R}^{N \times D/H}$ defined by
\begin{align*} \label{eq:dot_product_self_attention_definition}
    \mathit{DP}(X) & \coloneqq \softmax{\frac{X W^Q (X W^K)^\top}{\sqrt{D/H}}} X W^V \\
    &= P X W^V, 
\end{align*}
where $W^Q, W^K, W^V \in \mathbb{R}^{D \times D/H}$ are learnable parameters specific to each head, and $P \in \mathbb{R}^{N \times N}$ is the output of the softmax (we suppress the dependence of $P$ on $X$ to reduce clutter below). The input to the softmax is an $N\times N$ matrix of pairwise dot products (hence \textit{dot-product} self-attention), and the softmax is applied to each row of this matrix. Finally, the outputs of all heads are concatenated into an $N\times D$ matrix and are right multiplied by $W^O\in \mathbb{R}^{D \times D}$, thus \verb!DP-MHA! is defined by
\begin{equation}
    \mathit{MHA}_{DP}(X) \coloneqq \left[\mathit{DP}^1(X), \ldots, \mathit{DP}^H(X)\right] W^O.
\end{equation}
In what follows, we will prove that $\mathit{MHA}$ as defined above is  \emph{not} Lipschitz, assuming that the $\mathit{MHA}$ map is non-trivial, i.e.~$W^Q, W^K, W^V, W^O \neq 0$. It is sufficient to show that a single head $\mathit{DP}$ is not Lipschitz, since $\mathit{MHA}$ is a linear combination of the outputs of each head. Also note that $P$ is a stochastic matrix, i.e.~its entries are non-negative and its rows sum to $1$.
Since the rows of $X$ are the $\mathbf{x}_i$'s, a linear transformation of each $\mathbf{x}_i$ by some matrix $A$ is equivalent to right multiplication of $X$ by $A^\top$. 
So right multiplication of $X$ by $W^V$ is a linear map and thus Lipschitz.
Therefore, we are interested in the mapping $f(X) = PX$; this is \emph{not} a linear mapping because $P$ itself is a non-linear function of $X$. 
In fact, we show that $f$ 
is \emph{not} Lipschitz, thus proving the first main result of the paper:
\begin{theorem} \label{thm:dp_not_lipschitz}
\verb!DP-MHA! is not Lipschitz for any vector $p$-norm $\|\cdot\|_p$ with $p \in [1, \infty]$.
\end{theorem}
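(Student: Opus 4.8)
The plan is to argue by contradiction using Theorem~\ref{thm:jacobian}. As noted in the excerpt, it suffices to show that the single-head map $f(X) = PX$ is not Lipschitz, since $\mathit{DP}$ and $\mathit{MHA}$ are obtained from $f$ by composing with the Lipschitz linear maps $\,\cdot\,W^V$, concatenation, and $\,\cdot\,W^O$. The map $f$ is smooth (the softmax is $C^\infty$), so if it were $K$-Lipschitz in the $p$-norm, Theorem~\ref{thm:jacobian} would force $\sup_X \|J_f(X)\|_p = \lip_p(f) < \infty$. I will instead exhibit a sequence of inputs $X(t)$ along which $\|J_f(X(t))\|_p \to \infty$, giving the contradiction. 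Moreover, since $\mathbb{R}^{N\times D}$ is finite-dimensional, all $p$-norms are equivalent and hence so are the operator norms they induce, up to constants independent of the point $X$; it therefore suffices to establish the divergence for a single convenient $p$, say $p=2$, and the conclusion for every $p\in[1,\infty]$ follows at once.

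First I would compute the Jacobian of $f$. Writing the logits as $a_{ij} = \mathbf{x}_i^\top A\, \mathbf{x}_j$ with $A \coloneqq W^Q (W^K)^\top / \sqrt{D/H}$, note that if $A = 0$ then $P$ is constant and $f$ is linear, hence Lipschitz; so I assume $A \neq 0$. Differentiating the $i$-th output row $f_i = \sum_j P_{ij}\mathbf{x}_j$ with respect to the query $\mathbf{x}_i$ and using the softmax identity $\partial P_{ij}/\partial a_{ik} = P_{ij}(\delta_{jk} - P_{ik})$, the diagonal block $\partial f_i/\partial \mathbf{x}_i$ contains the term $\Sigma_i A^\top$, where $\Sigma_i \coloneqq \sum_j P_{ij}\mathbf{x}_j\mathbf{x}_j^\top - \bar{\mathbf{x}}_i\bar{\mathbf{x}}_i^\top$ is the softmax-weighted covariance of the rows and $\bar{\mathbf{x}}_i \coloneqq \sum_j P_{ij}\mathbf{x}_j$; the remaining contributions (the value term proportional to $P_{ii} I$, and the symmetrisation of the diagonal logit) are of lower order. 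Since this block is a principal submatrix of $J_f(X)$, one has $\|J_f(X)\|_2 \ge \|\partial f_i/\partial \mathbf{x}_i\|_2$, so it is enough to blow up a single such block.

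Next I would build the diverging sequence. Take $N=2$, so that $\Sigma_1 = P_{11}P_{12}\,(\mathbf{x}_1 - \mathbf{x}_2)(\mathbf{x}_1 - \mathbf{x}_2)^\top$ is rank one with norm $P_{11}P_{12}\,\|\mathbf{x}_1 - \mathbf{x}_2\|^2$. Choose fixed directions $\mathbf{a}, \mathbf{b} \in \mathbb{R}^D$ with $\mathbf{a}^\top A \neq 0$ (possible because $A\neq 0$) and $\mathbf{b} \neq \mathbf{a}$ lying on the affine hyperplane $\{\mathbf{b} : \mathbf{a}^\top A\, \mathbf{b} = \mathbf{a}^\top A\, \mathbf{a}\}$, and set $X(t) = [\,t\mathbf{a}\,;\, t\mathbf{b}\,]$. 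The hyperplane condition makes the two logits in row $1$ equal for every $t$, so $P_{11} = P_{12} = \tfrac12$ is pinned independently of the scale, while $\|\mathbf{x}_1 - \mathbf{x}_2\| = t\|\mathbf{a} - \mathbf{b}\| \to \infty$. Hence $\|\Sigma_1\|_2 = \Theta(t^2)$ and $\|\partial f_1/\partial \mathbf{x}_1\|_2 \to \infty$, so $\|J_f(X(t))\|_2 \to \infty$ and $f$ cannot be Lipschitz.

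The main obstacle is precisely keeping the attention distribution non-degenerate as the inputs grow. Because the logits are \emph{quadratic} in the input scale, naively enlarging $X$ drives the softmax toward a one-hot vector, which collapses $\Sigma_i$ and leaves the Jacobian bounded; the entire argument hinges on engineering the input family (here via the hyperplane tie between competing logits) so that the weighted covariance diverges while the weights stay bounded away from $0$ and $1$. The remaining care is routine: checking that the subdominant terms in the block do not cancel the $\Theta(t^2)$ growth (they are $O(1)$ and $O(t)$ respectively), and verifying that right-multiplication by $W^V \neq 0$ and the concatenation with $W^O \neq 0$ preserve the blow-up, which holds for a generic choice of $\mathbf{a}, \mathbf{b}$ compatible with the constraints. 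Combined with the norm-equivalence reduction, this establishes that \verb!DP-MHA! is not Lipschitz for every $p \in [1,\infty]$.
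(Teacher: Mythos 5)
Your overall strategy---exhibit a family $X(t)$ along which $\|J_f(X(t))\|\to\infty$, invoke Theorem~\ref{thm:jacobian}, and use finite-dimensional norm equivalence to cover all $p\in[1,\infty]$---is sound and matches the paper's in spirit. However, your specific input family has genuine gaps. First, the construction is vacuous when $D=1$: with $\mathbf{a}^\top A\neq 0$, the constraint set $\{\mathbf{b}:\mathbf{a}^\top A\mathbf{b}=\mathbf{a}^\top A\mathbf{a}\}$ is the single point $\{\mathbf{a}\}$, so no admissible $\mathbf{b}\neq\mathbf{a}$ exists, yet the theorem covers $D=1$ (the paper's main-text proof is exactly the case $D=H=1$). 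Second, whenever $A$ has rank one---which is \emph{forced} whenever the head dimension is $D/H=1$, since $A=W^Q W^{K\top}/\sqrt{D/H}$ has rank at most $D/H$---your tie constraint kills your divergent term: writing $A=\mathbf{p}\mathbf{q}^\top$ with $\mathbf{p}^\top\mathbf{a}\neq 0$, the hyperplane condition $\mathbf{a}^\top A(\mathbf{b}-\mathbf{a})=0$ forces $\mathbf{q}^\top\mathbf{v}=0$ for $\mathbf{v}=\mathbf{b}-\mathbf{a}$, hence $A\mathbf{v}=\mathbf{0}$ and $\Sigma_1 A^\top=\tfrac{t^2}{4}\mathbf{v}(A\mathbf{v})^\top=0$ identically along the family. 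Third, your order bookkeeping is wrong: because you scale $\mathbf{x}_1=t\mathbf{a}$ as well, the ``symmetrisation of the diagonal logit'' term $P_{11}\bigl(\mathbf{x}_1-\sum_k P_{1k}\mathbf{x}_k\bigr)\mathbf{x}_1^\top A$ is $\Theta(t^2)$, not $O(t)$. Explicitly, along your family the diagonal block is $\tfrac12 I+\tfrac{t^2}{4}\mathbf{v}\bigl(A\mathbf{v}-A^\top\mathbf{a}\bigr)^\top$. This quadratic correction happens to rescue the rank-one case (there it equals $-\tfrac{t^2}{4}(\mathbf{p}^\top\mathbf{a})\,\mathbf{v}\mathbf{q}^\top\neq 0$), but it could in principle also cancel against your main term (when $A\mathbf{v}=A^\top\mathbf{a}$), and your closing ``generic choice'' remark cannot do this work because the proposal treats the term as subdominant and never examines it.

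The repair is essentially the paper's construction: instead of pinning the attention via a hyperplane tie with $\mathbf{a}^\top A\neq 0$, take the degenerate choice $\mathbf{a}^\top A=0$, simplest of all $\mathbf{x}_1=\mathbf{0}$. Then the entire logit row vanishes, so $P_{1:}$ is uniform \emph{regardless} of the other inputs; the problematic symmetrisation term vanishes identically (it carries a factor $\mathbf{x}_1^\top$), and the direction of the remaining inputs is completely unconstrained, so it can be chosen outside $\ker A$ (and outside the kernels of $W^{V\top}$, $W^{O\top}$, making the lifting to the full multihead map unproblematic). The surviving term $X^\top P^{(1)}X\,A$---the softmax-weighted covariance times $A$---then diverges with the sample variance of the remaining inputs, for every nonzero $A$, every $D\geq 1$, and, by your norm-equivalence reduction, every $p\in[1,\infty]$.
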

\textit{Summary of Proof}. We use Theorem \ref{thm:jacobian}, noting that if the supremum of the norm of the Jacobian is infinite, then the mapping is not Lipschitz.
In particular, we show that when $\mathbf{x}_i=\mathbf{0}$ for some $i$, some elements of the Jacobian of $f$ grow proportionally to the sample variance of $\mathbf{x}_{\neq i}$, which is unbounded.
\begin{proof}
We show the proof for the case $D=H=1$ (i.e.~$X \in \mathbb{R}^{N \times 1}$, a column vector, and $x_i \in \mathbb{R}$) for readability. See Appendix \ref{apd:general_d} for the general case, which follows the same logic. 

The mapping $f$ can be written as
\begin{align*}
f(X) = PX = & \softmax{a X X^\top} X = \begin{bmatrix}
    f_1(X) \\
    \vdots \\
    f_N(X)
\end{bmatrix} \in \mathbb{R}^{N \times 1}, \\
\text{where} \quad & f_i(X) = \sum_{j=1}^N P_{ij}x_j \in \mathbb{R}
\end{align*}
and $a = W^K W^Q \in \mathbb{R}$ (we assume $a \neq 0$ such that self-attention is non-trivial).
Hence $f$ can be interpreted as a map of each $x_i$ to a point in the convex hull of ${x_1,...,x_N}$.
Since $f$ is a map from $\mathbb{R}^{N \times 1}$ to $\mathbb{R}^{N \times 1}$, its Jacobian is
\begin{equation}
    J_f = \begin{bmatrix}
    J_{11} & \dots & J_{1N} \\
    \vdots & \ddots & \vdots \\
    J_{N1} & \dots & J_{NN} \\
    \end{bmatrix}\in \mathbb{R}^{N \times N},
\end{equation}
where $\smash{J_{ij} = \frac{\partial f_i(X)}{\partial x_j} \in \mathbb{R}}$. 
By taking partial derivatives we can show that
\begin{equation*}
    J_{ij} = a X^\top P^{(i)} \left[E_{ji}X + \delta_{ij}X \right] + P_{ij}I
\end{equation*}
where 
\begin{itemize}
    \item $E_{ij} \in \mathbb{R}^{N \times N}$ is a binary matrix with zeros everywhere except the $(i,j)$th entry
    \item $\delta_{ij} \in \{0,1\}$ is the Kronecker delta
    \item $P^{(i)} \coloneqq \diag(P_{i:}) - P_{i:}^\top P_{i:} \in \mathbb{R}^{N \times N}$.
\end{itemize}
See Appendix \ref{apd:identities} for useful identities in deriving the above Jacobian.

So for $i=j$:
\begin{align}
J_{ii} =
 a X^\top P^{(i)} e_{ii} X + a X^\top P^{(i)} X + P_{ii} \label{eq:jac_dot}
\end{align}

Let us investigate the scalar $X^\top P^{(i)}X$. We observe that it is in fact a variance of a discrete distribution. Specifically:
\begin{equation} \label{eq:cov}
    X^\top P^{(i)}X  = \textstyle\sum_k P_{ik} x_k^2 - \left(\textstyle\sum_k P_{ik}  x_k\right)^2 = \mathrm{Var}(\mathbb{X}),
\end{equation}
where $\mathbb{X}$ is a discrete distribution with support at the inputs $\{x_1,\ldots,x_N \}$ and probability mass function given by their softmax probabilities $\mathbb{P}(\mathbb{X}=x_j)=P_{ij}$. 
A consequence of this interpretation is that $P^{(i)}$ is \textit{positive semi-definite} (PSD) since $X^\top P^{(i)} X = \mathrm{Var}(\mathbb{X}) \geq 0$, with equality if and only if the $x_j$ are all equal.

We use this observation to show that $J_{ii}$ is unbounded, and so $\|J_f\|_p$ is unbounded, hence \verb!DP-MHA! is \emph{not} Lipschitz.
Consider the case $x_i=0$. Then 
\begin{equation*}
    P_{i:}^\top = \softmax{XAx_i} = \frac{1}{N} \mathds{1},
\end{equation*}
i.e.\ we have uniform attention regardless of $x_{ \neq i}$. 
The first term of $J_{ii}$ in Equation \eqref{eq:jac_dot} disappears since $e_{ii} X = [0, \ldots, x_i, \ldots, 0] = \mathbf{0}$, and the last term becomes $\frac{1}{N} I$. Now consider the second term $a X^\top P^{(i)}X = a \mathrm{Var}(\mathbb{X}_l)$. Note $\mathbb{X}$ is uniformly distributed, since $\mathbb{P}(\mathbb{X}=x_j)=P_{ij}= 1/N$. Hence the second term is equal to $a$ times the sample variance of ${x_1,\ldots,x_N}$, which can be arbitrarily large. Hence $J_{ii}$ can become arbitrarily large, so the full Jacobian $J_f$ is unbounded.
\end{proof}
\textit{High-level intuition for proof.}
At $x_i=0$, $f_i(X) = \frac{1}{N} \sum_{k} x_k$, the mean of the inputs. 
The rate of change of $f_i$ is governed by how fast the softmax saturates when $x_i$ is perturbed, which is determined by how spread out the $x_{\neq i}$ are. 
The more spread out they are (the higher the sample variance), the greater the rate of saturation of the softmax, and the faster the rate of change of $f_i$.
Since the sample variance of $x_{\neq i}$ can be arbitrarily large, the rate of change of $f_i$ can also be arbitrarily large, i.e.~the entries of the Jacobian (and hence its $p$-norm) can become arbitrarily large. In Appendix \ref{apd:bias}, we show that adding bias terms to $\mathbf{x}_i^\top W^Q$ and $\mathbf{x}_j^\top W^K$ does \emph{not} resolve the issue.

The implications of this result are the following.
\begin{inparaenum}[(1)]
\item There can be undesirable behaviour (e.g.~training instabilities) for the Transformer when some inputs are close to zero and others have large magnitude.
\item Dot-product self-attention (and hence the standard Transformer) is not a suitable choice when we require a Lipschitz neural network, such as for formulating invertible residual networks \citep{behrmann2018invertible}.
\end{inparaenum}
Therefore, to use self-attention and Transformers in such applications, a Lipschitz formulation of self-attention is required, together with an explicit (ideally tight) upper bound to its Lipschitz constant, to quantify how much the output can change with respect to changes in the input.

One method to make dot-product self-attention Lipschitz is by ensuring its inputs are bounded. Indeed, if the input space is compact, e.g.\ $[0,1]^{N \times D}$, any continuously differentiable function is Lipschitz, including dot-product self-attention.
However, as we further discuss in Section \ref{sec:conclusion}, such an approach has its own challenges, since it makes the Lipschitz constant depend on the input range. Instead, in the next section we formulate a version of self-attention that is provably Lipschitz on all of $\mathbb{R}^{N\times D}$, allowing us to derive an upper bound that holds for any subset of $\mathbb{R}^{N\times D}$.

\subsection{L2 self-attention: a Lipschitz formulation of self-attention}
The pathology in dot-product self-attention arises because the softmax probabilities $P_{i:}$ are constant with respect to $\mathbf{x}_{\neq i}$ when $\mathbf{x}_i=0$. 
This behaviour can be undesirable as we want $P_{ij}$ to vary according to $\mathbf{x}_j$, regardless of whether $\mathbf{x}_i$ is zero or not.
Hence we propose an alternative form of self-attention based on L2 distance:
\begin{align}\label{eq:L2_self_attention_definition}
P_{ij} \propto \exp(L_{ij}) \coloneqq \exp\left(-\frac{\norm{ \mathbf{x}_i^\top W^Q - \mathbf{x}_j^\top W^K}_2^2}{\sqrt{D/H}}\right),
\end{align}
with the normalisation constant ensuring that $\sum_j P_{ij} = 1$. 
We will refer to it as \textit{L2 self-attention}. 
It is reminiscent of the standard squared-exponential kernel, but with softmax normalisation that ensures that each row of the kernel matrix sums to $1$. 
Normalisation is usually necessary to deal with inputs of varying length $N$ \citep{wang2018non}, hence we keep the softmax for L2 self-attention. Similarly to dot-product self-attention, L2 self-attention can be computed efficiently with matrix operations; see Appendix \ref{apd:l2_att_computation} for details, with a comparison of wall-clock runtimes between different choices of attention. 

We first state the mathematical formulation of L2 multihead self-attention (\verb!L2-MHA!) before proving the main result --- the upper bound of its Lipschitz constant with respect to $\|\cdot\|_p$ for $p=2, \infty$. The full \verb!L2-MHA! map $F: \mathbb{R}^{N \times D} \rightarrow \mathbb{R}^{N \times D}$ is defined as
\begin{align*}
    F(X) &\coloneqq \left[f^1(X)W^{V,1}, \ldots, f^H(X)W^{V,H}\right] W^O \\
    & \quad\text{where}\quad
    f^h(X) \coloneqq P^h X A_h.
\end{align*}
In the above, $W^{V,h} \in \mathbb{R}^{D \times D/H}$, $W^O \in \mathbb{R}^{D \times D}$, $P^h$ is defined as in Equation \eqref{eq:L2_self_attention_definition} with $W^{Q,h}=W^{K,h} \in \mathbb{R}^{D \times D/H}$, and $A_h \coloneqq W^{Q,h} W^{{Q,h}^\top} / \sqrt{D/H} \in \mathbb{R}^{D \times D}$. 
There are two changes from the usual form of multihead self-attention:
\begin{enumerate}[label=(\arabic*), leftmargin=*]
    \item We require $W^{Q,h} = W^{K,h}$ for each head $f^h(X)$ to be Lipschitz. In Lemma \ref{lemma:tie_weights} of Appendix \ref{apd:proof} we show that \verb!L2-MHA! is \emph{not} Lipschitz for arbitrary $W^{Q,h}$, $W^{K,h}$, and that tying $W^{Q,h} = W^{K,h}$ is sufficient for \verb!L2-MHA! to be Lipschitz, with intuition for why tying is sufficient.
    \item In each head of the self-attention $f^h(X)$, right multiplication by $A_h$ has been included for the theorem below to hold (details are in the proof). In practice, there is little harm done by this extra linear transformation, since when the heads are combined together in $F$, each $f^h(X)$ is additionally transformed by $W^{V,h}$, a free parameter.
\end{enumerate}

The second main result of the paper is the following:
\begin{theorem} \label{thm:main}
\verb!L2-MHA! is Lipschitz, with the following bound on $\lip_{\infty}(F)$:
\begin{align*}
    \lip_{\infty}(F)  \leq &\left(4 \phi^{-1}(N-1) + \frac{1}{\sqrt{D/H}}\right) \|W^{O^\top}\|_{\infty} \\
    &\max_h \|W^{Q,h}\|_{\infty} \|W^{{Q,h}^\top}\|_{\infty} \max_h \|W^{{V,h}^\top}\|_{\infty} 
\end{align*}
and the following bound on $\lip_{2}(F)$:
\begin{align*}
    \lip_2(F) \leq & \frac{\sqrt{N}}{\sqrt{D/H}}
    \left(4 \phi^{-1}(N-1) + 1 \right) \\ 
    & \left(\sqrt{\textstyle\sum_h \|W^{Q,h}\|_2^2\, \|W^{V,h}\|_2^2}\right) \|W^O\|_2 
\end{align*}
where $\phi(x) \coloneqq x\exp(x+1)$ is an invertible univariate function on $x > 0$, and $N$ is the input sequence length.

Specifically, $\phi^{-1}(N-1) = W_0(\frac{N}{e})$ where $W_0$ is the Lambert $W$-function, which grows sub-logarithmically as $O(\log N - \log \log N)$ \citep{corless1996lambertw}. Hence the above bounds can be simplified to $O(\log N)$ for $p=\infty$ and $O(\sqrt{N} \log N)$ for $p=2$.
\end{theorem}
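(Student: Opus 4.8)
The plan is to strip $F$ down to its nonlinear core --- a single head $f^h(X) = P^h X A_h$ --- bound the operator norm of that head's Jacobian uniformly in $X$ via Theorem~\ref{thm:jacobian}, and then reattach the linear pieces. First I would peel off the linear maps: $F$ concatenates the $H$ head outputs, post-multiplies head $h$ by $W^{V,h}$ and finally right-multiplies by $W^O$, and since right-multiplication of $X$ by a matrix $A$ acts on each row $\mathbf{x}_i$ as left-multiplication by $A^\top$, the composition lemma and the operator-norm formula for linear maps let me factor out $\|W^{O\top}\|_\infty$ and $\max_h\|W^{V,h\top}\|_\infty$ (resp.\ $\|W^O\|_2$ and the $\|W^{V,h}\|_2$). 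The one subtlety is how the $H$ heads recombine: under $\|\cdot\|_\infty$ the block concatenation contributes a $\max_h$, whereas under $\|\cdot\|_2$ it contributes a $\sqrt{\sum_h(\cdot)^2}$, and this is precisely the structural difference between the two stated bounds.

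Next I would compute $J_{f^h}(X)$ explicitly. Writing the tied-weight L2 logits as $L_{ij} = -\|(\mathbf{x}_i-\mathbf{x}_j)^\top W^{Q,h}\|_2^2/\sqrt{D/H}$, differentiating the softmax $P$ reproduces the same PSD object $P^{(i)} = \diag(P_{i:}) - P_{i:}^\top P_{i:}$ as in the dot-product case, but now each logit gradient is proportional to $(\mathbf{x}_i-\mathbf{x}_j)^\top W^{Q,h}W^{Q,h\top}$, i.e.\ it carries exactly the factor $A_h$. This is the reason $A_h$ was inserted into $f^h$: it makes the Jacobian expressible through a centred second moment (a variance) of the points under the distribution $P_{i:}$, so each Jacobian block reduces to a combination of $\diag(P_{i:})$, $P_{i:}$, and a variance-type quadratic form weighted by $W^{Q,h}W^{Q,h\top}$, plus the benign diagonal $P_{ii}I$ term.

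The crux is the scalar inequality that controls this variance term. Unlike the dot-product case, the logits here decay quadratically in the pairwise distances, so each summand of the form (squared distance)$\times\exp(-\text{squared distance})$ is self-limiting. Since the softmax mean minimises mean-squared deviation, the variance around it is dominated by the weighted squared distance from the query $\mathbf{x}_i$ itself, and it is the latter I would bound. Placing $N-1$ points at a common distance and the query at distance zero (so the normaliser is $\geq 1$ from the $j=i$ term) reduces the worst case to maximising $g(t) = \frac{(N-1)\,t}{e^{t}+(N-1)}$ over $t\geq 0$; the stationarity condition $g'(t)=0$ is $(t-1)e^{t}=N-1$, i.e.\ $\phi(t-1)=N-1$ with $\phi(x)=x\exp(x+1)$, so the maximiser is $t^\ast = 1+\phi^{-1}(N-1)$ and, after simplification, $g(t^\ast)=t^\ast-1=\phi^{-1}(N-1)$. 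The factor $4$ in the $\infty$-bound collects the constants produced when differentiating the squared-distance logits, while the residual $P_{ii}I$ term supplies the additive $1/\sqrt{D/H}$. I expect this optimisation --- and in particular arguing rigorously that the one-far-cluster configuration is genuinely the supremum in the full $D$-dimensional, length-$N$ setting rather than merely a convenient ansatz --- to be the main obstacle.

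Finally I would assemble the estimates. For $p=\infty$ the operator norm is the maximum absolute row sum, so I bound each block's row sums using the scalar bound together with $\|W^{Q,h}\|_\infty\|W^{Q,h\top}\|_\infty$ and the stochasticity of $P$ (rows summing to $1$), and the $\max_h$ over heads yields the first inequality. For $p=2$ I bound the spectral norm of the full $ND\times ND$ Jacobian through the block structure of $P^{(i)}$ and the same variance bound; converting these per-block/per-row estimates into a spectral-norm bound on the whole matrix is what introduces the extra $\sqrt{N}$ factor, and combining heads via $\sqrt{\sum_h(\cdot)^2}$ gives the second inequality. The asymptotics then follow by identifying $\phi^{-1}(N-1)=W_0(N/e)$ and invoking the sub-logarithmic growth $W_0(N/e)=O(\log N-\log\log N)$, giving $O(\log N)$ for $p=\infty$ and $O(\sqrt{N}\log N)$ for $p=2$.
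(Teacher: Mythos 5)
Your proposal follows essentially the same route as the paper's proof in Appendix F: peel off $W^{V,h}$ and $W^O$ (with a $\max_h$ for $p=\infty$ versus $\sqrt{\sum_h(\cdot)^2}$ for $p=2$), use the inserted $A_h$ so that the Jacobian blocks contain $Y^\top P^{(i)}Y = \mathrm{Cov}(\mathbb{Y})$ with $Y = X\sqrt{A_h}$, bound the variance about the softmax mean by the weighted squared distance to the query, maximise the resulting scalar expression to obtain exactly $\phi^{-1}(N-1)$ (your stationarity equation $(t-1)e^{t}=N-1$ and value $t^{*}-1$ coincide with Lemma \ref{lemma:key}), and pay a $\sqrt{N}$ when converting block-row spectral bounds into a bound on the full Jacobian (Lemma \ref{lemma:block_rows}), with heads recombined by $\max_h$ or root-sum-square exactly as you describe.

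The one step you leave open --- justifying that the equal-distance configuration is the true supremum --- is closed in the paper by a short argument, and your worry about the ``full $D$-dimensional setting'' dissolves earlier than you think: once you invoke $\sum_j P_{ij}\|\mathbf{y}_j - \sum_k P_{ik}\mathbf{y}_k\|_2^2 \le \sum_j P_{ij}\|\mathbf{y}_j-\mathbf{y}_i\|_2^2$, both the weights $P_{ij}$ and the summands depend on the configuration only through the scalars $z_j = \|\mathbf{y}_j-\mathbf{y}_i\|_2$, so the maximisation is the $N$-variable scalar problem $g(\mathbf{z}) = \sum_j z_j^2 e^{-z_j^2}/\sum_k e^{-z_k^2}$ with $z_i=0$, not a $D$-dimensional one. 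The paper then computes $\partial g/\partial z_j$ and finds it vanishes iff $z_j=0$ or $z_j^2 = 1 + g(\mathbf{z})$, so at any maximum all nonzero coordinates share a common value; this reduces the supremum to your one-parameter family and validates the ansatz. Two minor misattributions in your accounting, neither fatal: the factor $4$ is $2\times 2$ --- one $2$ from differentiating the squared-distance logits, the other because each block row contributes \emph{two} terms each bounded by $\phi^{-1}(N-1)$ (the covariance term in $J_{ii}$, and the Cauchy--Schwarz-controlled sum over $j\ne i$ of the rank-one terms $P_{ij}(\mathbf{y}_j-\sum_k P_{ik}\mathbf{y}_k)(\mathbf{y}_i-\mathbf{y}_j)^\top$, as in Lemma \ref{lemma:low_rank}) --- and the additive $1/\sqrt{D/H}$ comes from $\sum_j P_{ij}\,\|A\| = \|A\|$ by stochasticity of $P$, i.e.\ from all the $P_{ij}I$ terms collectively, not from $P_{ii}I$ alone.
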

\begin{proof}
See Appendix \ref{apd:proof}, which uses the key observation that $X^\top P^{(i)}X$ is a covariance matrix (c.f.\ Equation \eqref{eq:cov}) to bound $\|J_F\|_p$, the norm of the Jacobian of $F$. Appendix \ref{apd:masking} shows how the argument can be modified to prove the analogous result for the case with masking in the self-attention.
\end{proof}

These bounds are complemented by the concurrent work of \citet{vuckovic2020attention}, which provides a $O(\sqrt{D\log N})$ bound on $\lip_1(F)$ using measure-theoretic tools.
\section{Application: Invertible Self-Attention}
\subsection{Invertible residual network} \label{sec:invertible_resnet}

Consider the residual function $g(x) \coloneqq \mathbf{x} + f(\mathbf{x})$. \citet{behrmann2018invertible} give the following sufficient condition for its invertibility: 
if $f$ is a \textit{contraction} with respect to some
metric, i.e.~if $\lip(f) < 1$, and the metric space on which $f$ is defined is complete,
then $g$ is invertible. 
(A Euclidean space with a metric induced by a $p$-norm $\|\cdot\|_p$ for $p \in [1, \infty]$ is always complete.)
Specifically, the inverse $g^{-1}(\mathbf{y})$ is the unique fixed point of the recursion $\mathbf{x}^{i+1} \coloneqq \mathbf{y} - f(\mathbf{x}^i)$, since by the definition of the inverse we have $\mathbf{y} = g^{-1}(\mathbf{y}) + f(g^{-1}(\mathbf{y}))$. 
Because $f$ is a contraction, \textit{Banach's Fixed Point Theorem} guarantees that this fixed point exists and is unique for all $\mathbf{y}$, and that the recursion converges for all initial values $\mathbf{x}^0$ (often set to $\mathbf{y}$ in practice) exponentially fast. 
Hence the inverse can be computed to arbitrary accuracy (up to numerical precision in practice) by the above fixed-point iteration.

Note that a composition of such invertible residual blocks is also invertible. 
\citet{behrmann2018invertible} use this observation to design invertible ResNets: they take $f$ to be a \verb!CNN! normalised by an upper bound on $\lip(f)$ given by Corollary \ref{cor:lip_conv}, making the resulting function \textit{contractive}. 
For the $2$-norm $\|\cdot\|_2$, a hyperparameter $c < 1$ is chosen and each linear map (convolution) $W$ in the \verb!CNN! is multiplied by $c/\|W\|_2$ if $c < \|W\|_2$ where $\|W\|_2$ is estimated by power iteration (c.f.\ Appendix \ref{apd:power_iteration}).
This multiplicative factor determines the scale of the Lipschitz constant of the normalised function.

\subsection{Invertible self-attention}

\begin{wrapfigure}{r}{0.2\textwidth}
    \centering
    \includegraphics[width=0.2\textwidth]{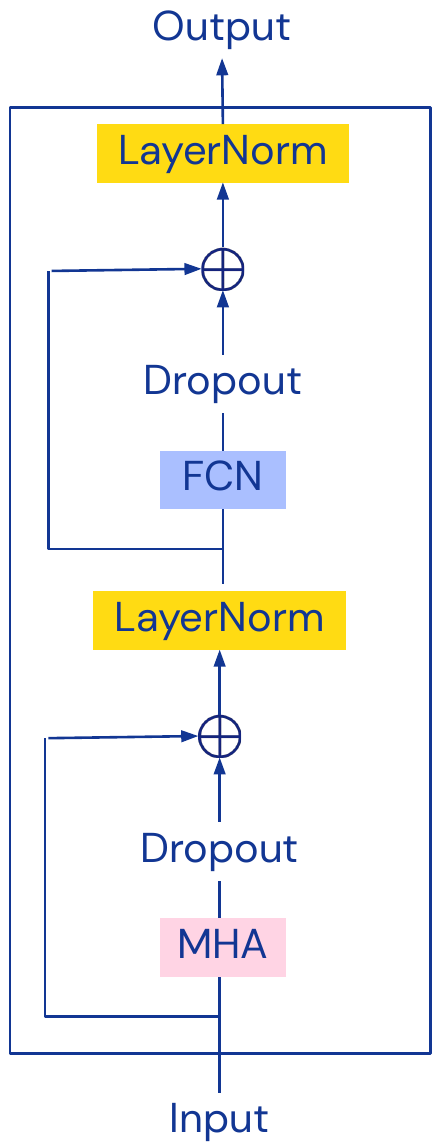}
    \caption{Transformer block.}
    \label{fig:transformer_block}
\end{wrapfigure}

The standard use case of self-attention is with a skip connection inside the Transformer. A Transformer block is composed of residual blocks of multihead self-attention (\verb!MHA!) and fully-connected (\verb!FCN!) layers (Figure \ref{fig:transformer_block}).
Hence similarly to invertible ResNets, we can normalise \verb!L2-MHA! by the upper bounds given in Theorem \ref{thm:main} to obtain \verb!Contractive-L2-MHA! $f$, with which we can obtain invertible self-attention $g(\mathbf{x}) = \mathbf{x} + f(\mathbf{x})$. 
Since \verb!Dropout! is also part of the residual branch along with \verb!Contractive-L2-MHA!, we should check that it is also contractive.
At test time, \verb!Dropout! multiplies inputs by the dropout keep probability $p < 1$, so it is a contraction with Lipschitz constant $p$ at evaluation time.
At training time, \verb!Dropout! amounts to setting some inputs to zero, while keeping other inputs constant. This can be expressed as right multiplication by a diagonal binary matrix $M$, and for such matrices we can verify $\|M\|_p \coloneqq \sup_{\|x\|_p=1} \|Mx\|_p  \leq 1$.
Notice that \verb!LayerNorm! is not part of the residual branch, hence its Lipschitz continuity is not relevant for invertibility; rather, we can replace it with an invertible normalisation such as \verb!ActNorm! \cite{kingma2018glow}. 
However, architectures that place \verb!LayerNorm! inside the residual branch (termed \verb!pre-LN! as opposed to the traditional \verb!post-LN! in Figure \ref{fig:transformer_block}) have become more prevalent in the literature \cite{wang2019learning, xiong2020layer}, and in this case it makes sense to investigate its Lipschitz continuity.
We show that \verb!LayerNorm! is Lipschitz in Appendix \ref{apd:layernorm}, with a bound on its Lipschitz constant.

In the next section, we investigate the properties of invertible self-attention and how it compares with the standard dot-product self-attention; we replace \verb!DP-MHA! in the Transformer with \verb!Contractive-L2-MHA!, hence replacing the residual self-attention module with invertible self-attention.
We are not interested in the modified Transformer per se, but rather in comparing the properties of invertible self-attention to standard self-attention --- we only use the Transformer as a testbed for this purpose, since self-attention is commonly used in a Transformer.
Given the theoretical focus of the paper, we believe that a more challenging application of invertible self-attention, such as normalising flow-based modelling, would be more suitable as a separate paper focused on that particular application.

\section{Experimental Results}
\subsection{Asymptotic tightness of the upper bound on $\boldsymbol{\lip_{\infty}(F)}$} \label{sec:asymptotic}

\begin{figure}[htb!]
    \centering
    \includegraphics[width=0.5\textwidth]{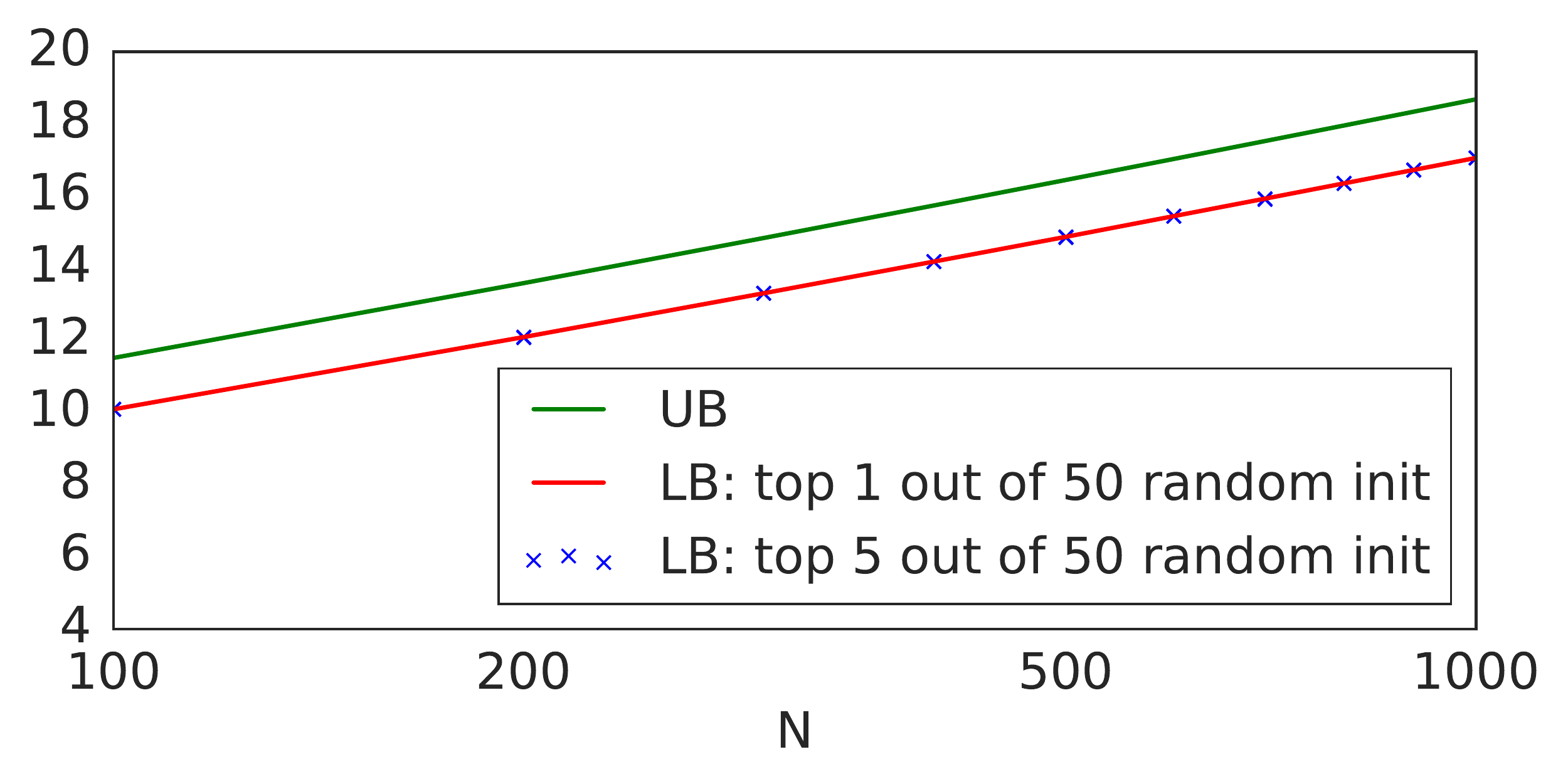}
    \caption{Lower and upper bound on $\lip_{\infty}(f)$ for L2-MHA $f$, with $H=D=1$ and varying $N$.}
    \label{fig:jacobian_opt}
\end{figure}

A tight bound on the Lipschitz constant of self-attention is desirable for all listed applications in Section \ref{sec:intro}; it leads to tighter generalisation bounds, lighter constraints for provable robustness, and better expressiveness in residual flow models.
Hence we investigate the tightness of our bound on the Lipschitz constant of \verb!L2-MHA!. 
The Lipschitz constant is a supremum over the space of inputs $X \in \mathbb{R}^{N \times D}$ (c.f.\ Equation \eqref{eq:lipschitz_supremum_definition}) and approximating it requires solving an intractable 
optimisation problem. 
Hence it is infeasible to estimate accurately in general, especially when $X$ is high-dimensional. 
However, we may compute a lower bound on the Lipschitz constant by maximising the norm of the Jacobian $\|J_f(X)\|$ with respect to $X$ until convergence.
This local optimum will form a lower bound by Theorem \ref{thm:jacobian}, and we can expect this lower bound to be fairly tight for the low-dimensional case, provided the optimisation is thorough.

We use this observation to provide empirical evidence for the asymptotic tightness of the upper bound on $\lip_{\infty}(f)$ in Theorem \ref{thm:main}. 
In Figure \ref{fig:jacobian_opt}, we show the upper bound as well as the lower bound on $\lip_{\infty}(f)$ obtained by optimising $\|J_f(X)\|_{\infty}$ with respect to $X$ for \verb!L2-MHA! $f$ with 50 different random initialisations of $X$, with $H=D=1$ and $N$ varying between $100$ and $1000$. 
See Appendix \ref{apd:experimental_details} for further details.
Note that we use a log-scale for the x-axis, and recall that the upper bound is $O(\log N - \log \log N)$, dominated by the $O(\log N)$ term for large $N$.
Hence the plot for the upper bound shows a linear trend.
We also observe that the slope of the lower bound is very similar, providing empirical evidence that the $O(\log N - \log \log N)$ upper bound is asymptotically tight.

There are at least two possible explanations for the gap between the upper and lower bounds.
\begin{inparaenum}[(1)]
\item The lower bound is only a local optimum --- the true Lipschitz constant is a global optimum across inputs, which can be difficult to attain especially for high values of $N$.
\item The multiplicative constant of the upper bound may be loose.
\end{inparaenum}
Assuming asymptotic tightness, it remains an open question whether the multiplicative constant can be tightened.
We show the analogous plot for $\lip_2(F)$ and discuss the results in Appendix \ref{apd:jacobian_opt2}.
Additionally in Appendix \ref{apd:jacobian_opt_dp}, we show that optimising $\|J_f(X)\|_{\infty}$ w.r.t.~$X$ for \verb!DP-MHA! $f$ causes the norm to diverge, providing empirical verification of Theorem \ref{thm:dp_not_lipschitz}, that \verb!DP-MHA! is indeed \emph{not} Lipschitz.

\subsection{Numerical invertibility of MHA residual map}\label{sec:numerical-invertibility}

\begin{figure}[htb!]
    \centering
    \includegraphics[width=0.5\textwidth]{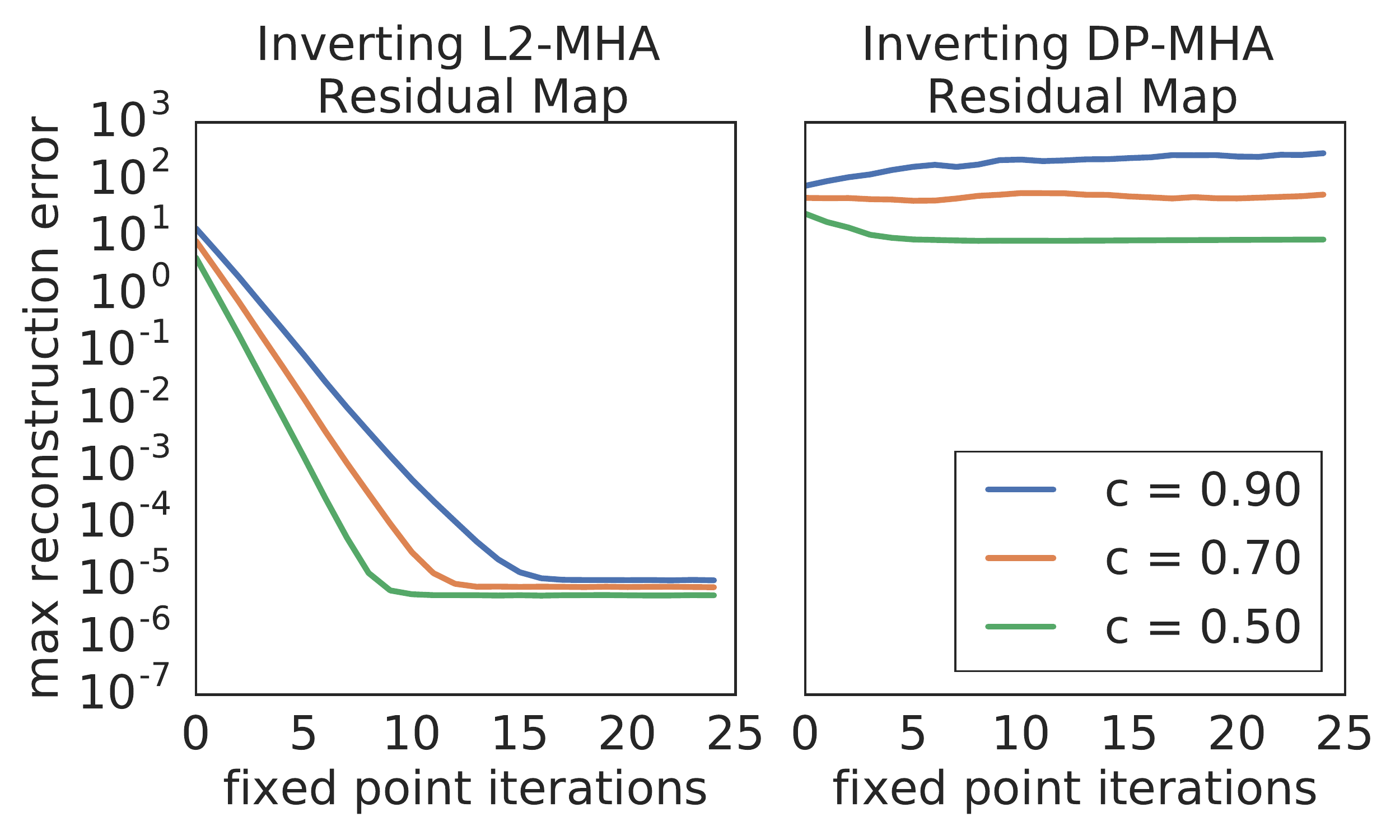}
    \caption{Invertibility of $g(\mathbf{x})= \mathbf{x} + c f(\mathbf{x})$ where $f$ is L2-MHA (left) and DP-MHA (right).}
    \label{fig:mha_invertibility_small}
\end{figure}

\begin{figure*}[t!] 
  \includegraphics[width=\textwidth]{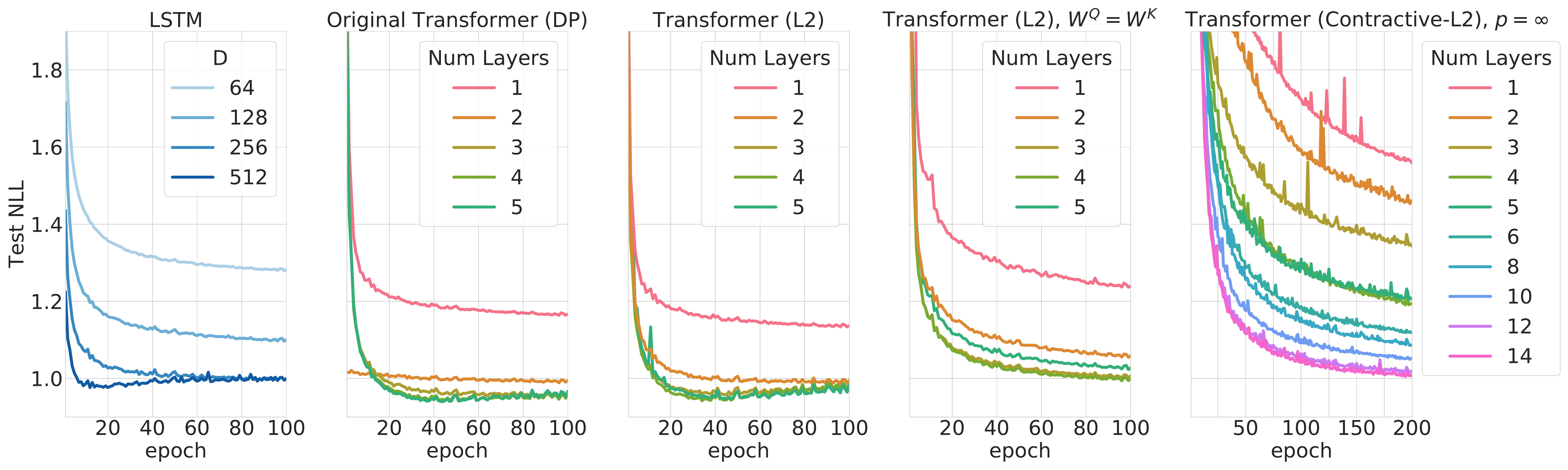}
  \caption{Test NLL curves during training for various LSTM/Transformer models on PTB character level language modelling.} \label{fig:ptb}
\end{figure*}

Recall from Section \ref{sec:invertible_resnet} that $g(\mathbf{x}) = \mathbf{x} + f(\mathbf{x})$ is invertible if $f$ is contractive. 
Hence if $f$ is \verb!Contractive-L2-MHA!, $g$ is necessarily invertible. 
However, technically we do not disprove the invertibility of \verb!DP-MHA!, since the converse does not hold in general i.e.~if $f$ is \verb!DP-MHA!, which we have shown is \emph{not} Lipschitz hence \emph{not} contractive, it may still be the case that $g$ \emph{is} invertible.
To verify that \verb!DP-MHA! (with the skip connection) is \emph{not} invertible in practice, we compare the numerical invertibility of the residual map  $g(\mathbf{x})= \mathbf{x} + c f(\mathbf{x})$ between the cases where $f$ is \verb!L2-MHA! and \verb!DP-MHA! in Figure \ref{fig:mha_invertibility_small}.
For each, we take \verb!MHA! with $8$ heads and randomly initialised weights, and quantify the maximum reconstruction error across a batch of $128$ inputs whose outputs are inverted via the fixed-point iteration described in Section \ref{sec:invertible_resnet}. We use $N=64$, 
$D=64$,
and $c \in \{0.5,0.7,0.9\}$
(see Appendix \ref{apd:numerical_invertibility} for analogous results for a wider range of $N$ and $D$ and for \verb!DP-MHA! with trained weights).
To highlight the difference between the two types
of self-attention, recall in the proof of Theorem \ref{thm:dp_not_lipschitz} (showing that \verb!DP-MHA! is not Lipschitz) that when one of the inputs $\mathbf{x}_i$ is $0$, some terms of the Jacobian grow with the sample variance of $\mathbf{x}_{\neq i}$. 
Hence we check numerical invertibility at a set of $N$ inputs where $\mathbf{x}_i=0$ and $\mathbf{x}_{\neq i}$ are chosen uniformly at random. 

In Figure \ref{fig:mha_invertibility_small}, we see that \verb!DP-MHA! is \emph{not} invertible whereas \verb!L2-MHA! \emph{is} invertible for sufficiently small $c$.
This shows how not having the theoretical guarantee of $f$ being contractive can cost us invertibility in practice.
We note that the figure shows local invertibility at the sampled inputs, as opposed to global invertibility across the whole input space, yet this clearly highlights the difference between the two choices of self-attention.
Experiments with the globally invertible self-attention obtained by normalising with the Lipschitz upper bound are provided in the next section.

\subsection{Expressiveness of L2-MHA and invertible self-attention}
\label{sec:invertible_self-attention}

A natural question to ask is: how does the expressiveness of \verb!L2-MHA! and \verb!Contractive-L2-MHA! (that leads to invertible self-attention with the skip connection) compare with the original \verb!DP-MHA!?
We expect that the Lipschitz constraint will limit the expressiveness of the Transformer, and would like to find out by how much.
We investigate this by comparing the performance of the original Transformer and the Transformer with invertible self-attention (c.f.\ Figure \ref{fig:transformer_block}) at character-level language modelling on the Penn Treebank dataset \citep{marcus1993building}. 
We compare the test negative log-likelihood (NLL) of a baseline LSTM, the original Transformer (\verb!DP-MHA!), and a series of models between the original Transformer and the Transformer with invertible self-attention (\verb!Contractive-L2-MHA!),
making one change at a time and tuning the hyperparameters on a validation set.
For \verb!Contractive-L2-MHA!, we normalise $F=$\verb!L2-MHA! by the bound on $\lip_{\infty}(F)$ as it is tighter than the bound on $\lip_{2}(F)$. 
During training we backpropagate through these contractive blocks $F/\lip_{\infty}(F)$ (including the denominator) to update the model parameters.
We found that only backpropagating through the numerator (i.e. applying \verb!stop-gradient! to denominator) gave slightly worse performance.
See Appendix \ref{apd:experimental_details} for experimental details.

The results are shown in Figure \ref{fig:ptb}. 
The first plot shows the best performing LSTM reaching a test NLL of around $1.0$, and the second plot shows the best performing Transformer reaching a slightly improved performance for $3$--$5$ layers of Transformer blocks. 
We observe instabilities in training for a higher number of layers, requiring careful tuning of the learning rate schedule for stability at the cost of performance, a commonly observed phenomenon in the literature of deep Transformer architectures \mbox{\citep{bapna2018training, parisotto2019stabilizing}}.
The third plot shows results for the Transformer with \verb!DP-MHA! replaced with \verb!L2-MHA! but without tying $W^Q$ and $W^K$, and we observe a very similar test performance. 
The fourth plot shows the change when we further tie the query and key weights (making $W^Q=W^K$); we see that there is a small degradation in performance.
Here the number of trainable parameters has been reduced, but in Appendix \ref{apd:wq_wk_experiment} we show that matching parameter count does not help performance, suggesting that the reduction in performance when tying queries and keys is not solely due to having fewer parameters.
We note that performance saturates at around $5$ layers for each Transformer model so far.
On the rightmost plot we show results when further dividing self-attention in each block by the upper bound on $\lip_{\infty}(F)$, to obtain invertible self-attention. 
This does give reduced performance for the same number of layers, but we can attain similar performance with more layers, no longer saturating at $5$ layers.

\begin{table*}[!htb]
\centering
 \begin{tabular}{|c||c|c|c|c|c|c|c|c|c|} 
 \hline
  Number of Layers & 2 & 4 & 6 & 8 & 10 & 12 & 14 & 16 & 18 \\
 \hline \hline
 Transformer (\textbf{DP}) & 1.061 & 1.032 & 1.021 & \textbf{1.017}  & 1.025 & - & - & - & - \\ 
 \hline
Transformer (\textbf{L2}), $W^Q=W^K$  & 1.168 & 1.040 & 1.023 & 1.024 & 1.019 & \textbf{1.008}  & 1.018 & 1.027 & 1.034 \\
 \hline
Transformer (\textbf{Contractive-L2}) & 1.246 & 1.135 & 1.103 & 1.079 & 1.072 & 1.060 & 1.039 & \textbf{1.029} & 1.031 \\
 \hline
\end{tabular}
\caption{Test NLL for Transformer models trained with \textbf{fixed learning rate} on PTB character level language modelling.} \label{tab:ptb_fixed}
\end{table*}

Thus we conclude the following. \begin{inparaenum}[(1)]
\item Replacing the dot-product with the L2 distance incurs hardly any loss in expressiveness.
\item Tying the query and key weights to obtain Lipschitz self-attention incurs a small loss in expressiveness.
\item Dividing by the upper bound on $\lip_{\infty}(F)$ to obtain invertible self-attention incurs a noticeable loss in expressiveness, but also has a stabilising effect on the optimisation of the Transformer, thus allowing one to compensate for the apparent loss in expressiveness by increasing the number of layers. 
\end{inparaenum}

\subsection{Training Stability of DP-MHA vs L2-MHA}
\label{sec:stability}

In Figure \ref{fig:mha_output_variance}, we compare the output variance of trained \verb!L2-MHA! against trained \verb!DP-MHA!, with weights from the one-layer Transformer (L2), $W^Q=W^K$ model and (DP) model used for Figure \ref{fig:ptb} respectively. We take the same distribution of inputs as used for the numerical invertibility experiment in Section \ref{sec:numerical-invertibility}, and show the histogram of inputs and outputs after flattening the input/output tensors. We see that the range of outputs remains similar to the range of inputs for Lipschitz \verb!L2-MHA!, whereas for \verb!DP-MHA! the outputs have a much wider range, because the Jacobian norm is large for \verb!DP-MHA! at these inputs. 

\begin{figure}[!htb]
    \centering
    \includegraphics[width=\columnwidth]{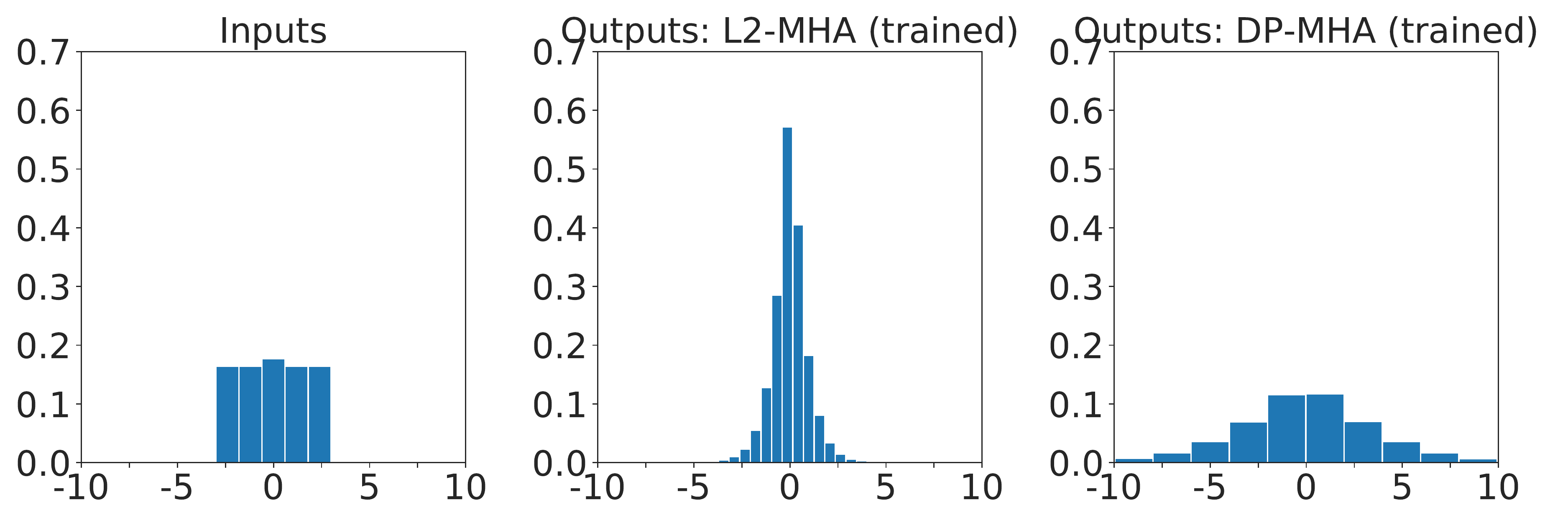}
    \caption{Histogram showing distribution of inputs/outputs of trained L2-MHA and DP-MHA}
    \label{fig:mha_output_variance}
\end{figure}

In practice, this leads to instabilities in training for \verb!DP-MHA!, hence requiring careful tuning of the learning rate schedule for training deeper Transformer models: linear warmup and square root decay, as detailed in Appendix \ref{apd:experimental_details}. 
We investigate the behaviour of the different Transformer models on the above PTB task when using a \textbf{fixed learning rate}.
We observe that \verb!DP-MHA! fails to train at all beyond 10 layers, whereas both \verb!L2-MHA! ($W^Q=W^K$) (i.e. Lipschitz L2-MHA but not contractive) and \verb!Contractive-L2-MHA! shows stable training for up to 18 layers (see Appendix \ref{apd:stability} for the training curves). 
This was the deepest model we could fit on a single GPU, and we expect to be able to train even deeper models with these two. 
In Table \ref{tab:ptb_fixed} we show the best Test NLL across training for each of the Transformer models. Note that for \verb!DP-MHA! training becomes unstable beyond 10 layers, so we are only able to provide results up to 10 layers. The generalisation performance of the best model for each setting of self-attention is similar.

\section{Conclusion and Discussion}
\label{sec:conclusion}
We have shown that the widely used dot-product self-attention is \emph{not} Lipschitz, and that the proposed L2 self-attention \emph{is} Lipschitz, by deriving an $O(\log N - \log \log N)$ Lipschitz bound for $p=\infty$ and an $O(\sqrt{N} (\log N - \log \log N))$ bound for $p=2$, where $N$ is the input sequence length. 
We also provided empirical evidence of the asymptotic tightness of the bound for $p=\infty$.
We demonstrated that Lipschitz-constrained self-attention can be used to formulate invertible self-attention, which we experimentally evaluated on a character-level language modelling task.
And finally, we also showed that L2-MHA is more stable during training, allowing the use of fixed learning rate for stable training of deep architectures.

Our approach to Lipschitz self-attention has been to replace the dot-product kernel with an L2 kernel. An alternative would be to constrain the inputs of self-attention to be bounded; if the input space is compact, e.g.\ $[0,1]^{N \times D}$, \emph{any} continuously differentiable function is Lipschitz, including dot-product self-attention. However, while being simple to implement, this solution has its own difficulties.
First, it makes the Lipschitz constant depend on the range of the input, and thus obtaining a tight bound would require non-trivial mathematical work.
We stress that a guarantee that the function is Lipschitz does not tell us anything about its Lipschitz constant; without a tight Lipschitz bound, the true Lipschitz constant can be very large, at which point it is unhelpful that the function is Lipschitz.
Second, since self-attention is typically applied at multiple layers within a model (e.g.\ Transformer), the input to each self-attention will live in a different compact set that depends on the parameters of the previous layers, complicating the analysis for subsequent layers. 
A solution is to constrain the inputs of each layer to be in the same compact set, e.g.\ by passing them through a sigmoid non-linearity. 
This however can have undesirable side effects such as vanishing gradients when the sigmoids are saturated.
Despite these difficulties, this could be a worthwhile alternative route for obtaining Lipschitz self-attention to explore in the future.

Having a provably Lipschitz self-attention module at our disposal makes it possible to use Transformer-based architectures in applications requiring Lipschitz constraints, while enjoying theoretical guarantees.
A natural application of Lipschitz self-attention is for residual flows \mbox{\citep{behrmann2018invertible}}, and for parameterising Neural ODEs \citep{chen2018neural} where a Lipschitz vector field guarantees the existence of a unique solution to the ODE for all times. 
These models can be used for density estimation and generative modelling of sets.
Another interesting direction for future work would be to analyse different variants of self-attention based on kernels other than dot-product and L2, as  \cite{tsai2019transformer} do from an experimental perspective, for which we believe the mathematical tools developed in this paper may aid the analysis.

\section*{Acknowledgements}
We would like to thank Adam Kosiorek, Arnaud Doucet, Yee Whye Teh, Michalis Titsias, Emilien Dupont and Theophane Weber for helpful discussion and feedback.

\bibliographystyle{icml2021}
\bibliography{refs}

\newpage
{
\appendix
\onecolumn
{\Large \bf Appendix}
\section{Useful Identities for deriving Jacobian expressions}
\label{apd:identities}
In this section, we list some useful identities for deriving the Jacobians of the expressions in the paper.

Suppose $\lambda$ is a scalar, $\mathbf{u},\mathbf{v},\mathbf{x} \in \mathbb{R}^{n \times 1}$ are column vectors, and $f(\mathbf{u})$ is a vector valued function. We use the standard convention that for $\mathbf{a} \in \mathbb{R}^m$, $\mathbf{b} \in \mathbb{R}^n$, we have $\frac{\partial{\mathbf{a}}}{\partial{\mathbf{b}}} \in \mathbb{R}^{m \times n}$. Then we have the following chain rule identities:
\begin{itemize}
    \item $\frac{\partial}{\partial \mathbf{x}}[\lambda \mathbf{u}] = \lambda \frac{\partial \mathbf{u}}{\partial \mathbf{x}} + \mathbf{x} \frac{\partial \lambda}{\partial \mathbf{x}}$
    \item $\frac{\partial f(\mathbf{u})}{\partial \mathbf{x}} = \frac{\partial f(\mathbf{u})}{\partial \mathbf{u}} \frac{\partial \mathbf{u}}{\partial \mathbf{x}}$
    \item $\frac{\partial}{\partial \mathbf{x}}[\mathbf{u}^\top \mathbf{v}] = \mathbf{u}^\top \frac{\partial \mathbf{v}}{\partial \mathbf{x}} + \mathbf{v}^\top \frac{\partial \mathbf{u}}{\partial \mathbf{x}}$
\end{itemize}
Note $\frac{\partial \lambda}{\partial \mathbf{x}}$ is a row vector, so $\mathbf{u}\frac{\partial \lambda}{\partial \mathbf{x}}$ is a matrix.

The Jacobian of the softmax is also well-known. Suppose $\mathbf{v} = \softmax{\mathbf{u}} \in \mathbb{R}^{n \times 1}$. Then
\begin{equation*}
    \frac{\partial \mathbf{v}}{\partial \mathbf{u}} = \diag(\mathbf{v}) - \mathbf{v} \mathbf{v}^\top 
    = \begin{bmatrix}
    v_1 (1-v_1) & - v_1 v_2 & \ldots & - v_1 v_n \\
    - v_2 v_1  & v_2 (1-v_2) & \ldots & -v_2 v_n \\
    \vdots & \vdots & \ddots & \vdots \\
    -v_n v_1 & -v_n v_2 & \ldots & v_n(1-v_n)
    \end{bmatrix}.
\end{equation*}

\section{Power Iteration} \label{apd:power_iteration}
Although $\|W\|_{\infty}$ can be computed efficiently in $O(nm)$ time for $W \in \mathbb{R}^{m \times n}$, na{\"i}vely computing $\|W\|_2= \sigma_{\text{max}}(W) \coloneqq \sqrt{\lambda_{\text{max}}(W^\top W)}$ requires $O(n^3)$ operations. (By $\lambda_{\text{max}}(A)$ we denote the greatest eigenvalue of a symmetric matrix $A$.) We can however obtain an underestimate $\tilde{\sigma}(W)$ via \textit{power iteration}:
\begin{equation} \label{eq:sigma_tilde}
    b_{k+1} = \frac{W^\top W b_k}{\|W^\top W b_k\|_2},
    \quad \tilde{\sigma}_k(W) = \sqrt{\frac{b_k^\top W^\top W  b_k}{b_k^\top b_k}}, 
\end{equation}
with each iteration taking $O(n^2)$ time. Then using $K\ll n$ iterations gives us an underestimate $\tilde{\sigma}_K$ in $O(Kn^2)$ time.
Since this is an underestimate, the resulting approximation to the Lipschitz constant of the linear map will not be an upper bound. 
However the number of power iterations is usually chosen so that $\tilde{\sigma}$ is accurate enough --- $K=5$ is shown to be sufficient in the context of fully connected networks or convolutions considered by \citet{behrmann2018invertible}.

The iteration will converge if $W^\top W$ has an eigenvalue that is strictly greater in magnitude than its other eigenvalues, and the starting vector $b_0$ has a nonzero component in the direction of an eigenvector associated with the dominant eigenvalue. 
This happens with probability $1$ if $b_0$ is chosen at random, and the convergence is geometric with ratio $|\lambda_2/\lambda_{\max}|$ where $\lambda_2$ is the eigenvalue with second largest magnitude \citep{mises1929praktische}.

\newtheorem{innercustomthm}{Theorem}
\newenvironment{customthm}[1]
  {\renewcommand\theinnercustomthm{#1}\innercustomthm}
  {\endinnercustomthm}

\section{Proof of Theorem 3.1 for General $D$} \label{apd:general_d}
\begin{customthm}{3.1}
\verb!DP-MHA! is not Lipschitz for any vector $p$-norm $\|\cdot\|_p$ with $p \in [1, \infty]$.
\end{customthm}
\vspace{-4mm}
\begin{proof}
The mapping $f$ can be written as
\vspace{-5mm}
\begin{equation}
f(X) = PX = \softmax{X A^\top X^\top} X = \begin{bmatrix}
    f_1(X)^\top \\
    \vdots \\
    f_N(X)^\top
\end{bmatrix} \in \mathbb{R}^{N \times D},
\end{equation}
where $A = W^K W^{Q^\top} / \sqrt{D/H} \in \mathbb{R}^{D \times D}$ and
$f_i(X) = \sum_{j=1}^N P_{ij}\mathbf{x}_j$ with $P_{i:}^\top = \softmax{XA\mathbf{x}_i}$.
Hence $f$ can be interpreted as a map of each $\mathbf{x}_i$ to a point in the convex hull of ${\mathbf{x}_1,...,\mathbf{x}_N}$.
Since $f$ is a map from $\mathbb{R}^{N \times D}$ to $\mathbb{R}^{N \times D}$, its Jacobian is
\begin{equation}
    J_f = \begin{bmatrix}
    J_{11} & \dots & J_{1N} \\
    \vdots & \ddots & \vdots \\
    J_{N1} & \dots & J_{NN} \\
    \end{bmatrix}\in \mathbb{R}^{ND \times ND},
\end{equation}
where $J_{ij} = \frac{\partial f_i(X)}{\partial \mathbf{x}_j} \in \mathbb{R}^{D \times D}$. 
By taking partial derivatives we can show that $J_{ij} = X^\top P^{(i)} \left[E_{ji}XA^\top + XA\delta_{ij}\right] + P_{ij}I$
where $E_{ij} \in \mathbb{R}^{N \times N}$ is a binary matrix with zeros everywhere except the $(i,j)$th entry, $\delta_{ij}$ is the Kronecker delta, and $P^{(i)} \coloneqq \diag(P_{i:}) - P_{i:}^\top P_{i:}$.
So for $i=j$:
\begin{align}
J_{ii} &=X^\top P^{(i)}E_{ii}XA^\top + X^\top P^{(i)}XA + P_{ii}I \nonumber \\
&= P_{ii}\left(\mathbf{x}_i - \textstyle\sum_k P_{ik} \mathbf{x}_k\right)\mathbf{x}_i^\top A^\top + X^\top P^{(i)}XA + P_{ii}I. \label{eq:jac_dot_general}
\end{align}
For the last equality, note $E_{ii}X$ has all rows equal to zero except for the $i$th row given by $\mathbf{x}_i^\top$. We can then verify that $X^\top P^{(i)}E_{ii}X$ simplifies to $P_{ii}(\mathbf{x}_i - \sum_k P_{ik} \mathbf{x}_k)\mathbf{x}_i^\top$.

For vector $p$-norms, $\|J_f\|_p$ is bounded if and only if its entries are bounded, by definition of the operator norm. 
The entries of $X^\top P^{(i)}XA$ are bounded for arbitrary $A$ only if the entries of $X^\top P^{(i)}X$ are bounded.
So let us investigate the entries of this $D\times D$ matrix. 
Writing out each term of the matrix, we observe that it is in fact a covariance matrix of a discrete distribution. Specifically:
\begin{equation} \label{eq:cov_general}
    [X^\top P^{(i)}X]_{lm}  = \textstyle\sum_k P_{ik} x_{kl} x_{km} - \left(\textstyle\sum_k P_{ik}  x_{kl}\right)\left(\textstyle\sum_k P_{ik} x_{km}\right) = \mathrm{Cov}(\mathbb{X}_l,\mathbb{X}_m),
\end{equation}
where $\mathbb{X}$ is a discrete distribution with support at the inputs $\{\mathbf{x}_1,\ldots,\mathbf{x}_N \}$ and probability mass function given by their softmax probabilities $\mathbb{P}(\mathbb{X}=\mathbf{x}_j)=P_{ij}$. 
A consequence of this interpretation is that $P^{(i)}$ is \textit{positive semi-definite} (PSD) since for $D=1$, Equation \eqref{eq:cov_general} becomes $X^\top P^{(i)} X = \mathrm{Var}(\mathbb{X}) \geq 0$, with equality if and only if the $\mathbf{x}_j$ are all equal.

We use this observation to show that the terms of $J_{ii}$ are unbounded, and so \verb!DP-MHA! is \emph{not} Lipschitz.
Consider the case $\mathbf{x}_i=0$. Then $P_{i:}^\top = \softmax{XA\mathbf{x}_i} = \frac{1}{N} \mathds{1}$, i.e.\ we have uniform attention regardless of $\mathbf{x}_{ \neq i}$. 
The first term of $J_{ii}$ in Equation \eqref{eq:jac_dot_general} disappears since $\mathbf{x}_i=\mathbf{0}$, and the last term becomes $\frac{1}{N} I$. For the second term, the entries $[X^\top P^{(i)}X]_{ll} = \mathrm{Var}(\mathbb{X}_l)$ are unbounded since the latter is equal to the sample variance of ${x_{1l},\ldots,x_{Nl}}$, which can be arbitrarily large.

Note that we have shown that single head dot-product self-atttention ($H=1$) is not Lipschitz, but it is clear that this implies multihead self-attention \verb!DP-MHA! is also not Lipschitz, since the output of multihead attention is a linear combination of the outputs of each head.
\end{proof}

\section{Bias term in DP Self-Attention} 
\label{apd:bias}
A natural question to ask is whether we can add bias terms $b^Q$ to $\mathbf{x}_i^\top W^Q$ and $\mathbf{b}^K$ to $\mathbf{x}_j^\top W^K$ to resolve the issue of attention weights $P_{i:}$ becoming uniform when $\mathbf{x}_i=0$. 
The answer is \emph{no} in general. 
It can again be shown that $J_{ii}$ is unbounded when $\mathbf{x}_i$ is chosen such that \smash{$\mathbf{x}_i^\top W^Q + \mathbf{b}^Q=0$} (such a choice is possible assuming $W^Q$ is full rank, a dense set in \smash{$\mathbb{R}^{D \times D/H}$}). Then \smash{$P_{i:}^\top=\frac{1}{N}\mathds{1}$} again, and the diagonal entries of \smash{$X^\top P^{(i)}X$} are unbounded.

\section{Efficient Computation of L2 Self-Attention} \label{apd:l2_att_computation}
Dot-product self-attention only requires a few matrix multiplications to compute the logits (i.e.~the inputs to the softmax) between all pairs of inputs, without having to loop over pairs, hence it can be computed efficiently. 
Similarly, we can show that L2 self-attention can also be computed in an efficient manner. 
Using the identity $\|a-b\|_2^2 = \|a\|_2^2 - 2a^\top b + \|b\|_2^2$ we can compute the logits of L2 attention between all pairs via matrix multiplications and computation of row-wise L2 norms, with negligible overhead compared to dot-product self-attention.
Specifically, for L2 self-attention we can show that
\begin{gather}
P = \softmax{-\frac{\|XW^Q\|_{\text{row}}^2\mathds{1}^\top - 2XW^Q (XW^K)^\top  + \mathds{1} \|XW^K\|_{\text{row}}^{2\top}}{\sqrt{D/H}}}, \label{eq:L2_W}
\end{gather}
where $\|A\|_{\text{row}}^2$ applies the squared L2 norm to each row of $A$, 
so if $A \in \mathbb{R}^{m \times n}$ then $\|A\|_{\text{row}}^2 \in \mathbb{R}^m$.

In Table \ref{tab:time} we show the wall-clock training times for the Transformer models with different attention functions and a varying number of layers. It is evident that the differences between the models are rather small.
\begin{table}[!htb] 
\centering
 \begin{tabular}{|c||c|c|c|c|c|} 
 \hline
  & 1 Layer & 2 Layers & 3 Layers & 4 Layers & 5 Layers \\
 \hline \hline
 Transformer \textbf{(DP)} & 37 & 56 & 77 & 92 & 110 \\ 
 \hline
Transformer \textbf{(L2)} & 35 & 56 & 73 & 99 & 115 \\
 \hline
Transformer, $W^Q=W^K$ \textbf{(L2)} & 39 & 58 & 79 & 91 & 108 \\
 \hline
Transformer,  \textbf{(Contractive-L2)} & 37 & 60 & 81 & 102 & 127 \\
 \hline
\end{tabular}
\caption{Wall clock training times for one epoch of training (seconds)} \label{tab:time}
\end{table}

\section{Proof of Theorem \ref{thm:main}} \label{apd:proof}
Recall the formulation of \verb!L2-MHA!:
\begin{align*}
    F&: \mathbb{R}^{N \times D} \rightarrow \mathbb{R}^{N \times D} \\
    F(X) &= \left[f^1(X)W^{V,1}, \ldots, f^H(X)W^{V,H}\right] W^O \\
    f^h(X) &= P^h X A_h \\
    P^h_{ij} \propto \exp(L_{ij}) &\coloneqq \exp\left(-\frac{\| \mathbf{x}_i^\top W^{Q,h} - \mathbf{x}_j^\top W^{K,h}\|_2^2}{\sqrt{D/H}}\right), \spaces \sum_j P^h_{ij} = 1
\end{align*}
where we have that $W^{Q,h}, W^{K,h}, W^{V,h} \in \mathbb{R}^{D \times D/H}$, $W^O \in \mathbb{R}^{D \times D}$, $P^h \in \mathbb{R}^{N \times N}$ and $A_h \coloneqq W^{Q,h} W^{{Q,h}^\top} / \sqrt{D/H} \in \mathbb{R}^{D \times D}$, and the softmax is applied to each row of the input matrix. 
Recall Equation \eqref{eq:L2_W}:
\begin{equation*}
P^h = \softmax{-\frac{\|XW^{Q,h}\|_{\text{row}}^2\mathds{1}^\top - 2XW^{Q,h} (XW^{K,h})^\top  + \mathds{1} \|XW^{K,h}\|_{\text{row}}^{2^\top}}{\sqrt{D/H}}}.
\end{equation*}

\subsection{L2 self-attention is \emph{not} Lipschitz for general $\boldsymbol{W^Q, W^K}$}
Let us first look at the case of $H=1$ and suppress the index $h$ to reduce clutter. 
Consider the map $\tilde{f}(X) \coloneqq PX$, so $f(X)=\tilde{f}(X)A$.
We need $\tilde{f}$ to be Lipschitz for $f$ and hence $F$ to be Lipschitz.
Note that $P$ is defined as:
\begin{equation*}
P_{ij} \propto \exp(L_{ij}) \coloneqq \exp\left(-\frac{\| \mathbf{x}_i^\top W^Q - \mathbf{x}_j^\top W^K\|_2^2}{\sqrt{D/H}}\right)
\end{equation*}
and the normalisation constant satisfies $\sum_j P_{ij} = 1$, for $P \in \mathbb{R}^{N \times N}$, $X \in \mathbb{R}^{N \times D}$.

For L2 self-attention, we may take partial derivatives and use the chain rule to show that the Jacobian of $\tilde{f}$ is:
\begin{equation}
    J_{\tilde{f}} = \begin{bmatrix}
    \tilde{J}_{11} & \dots & \tilde{J}_{1N} \\
    \vdots & \ddots & \vdots \\
    \tilde{J}_{N1} & \dots & \tilde{J}_{NN} \\
    \end{bmatrix} \in \mathbb{R}^{ND \times ND}
\end{equation}
with
\begin{equation}
    \tilde{J}_{ij} = X^\top P^{(i)} \frac{\partial L_{i:}}{\partial x_j} + P_{ij} I \in \mathbb{R}^{D \times D}
\end{equation}
where
\begin{equation}
    \frac{\partial L_{i:}}{\partial \mathbf{x}_j} = \frac{2}{\sqrt{D/H}}\left[\left(XW^K - \mathds{1} \mathbf{x}_i^\top W^Q \right)W^{Q^\top} \delta_{ij} + \left(E_{ji}XW^Q - E_{jj}XW^K\right)W^{K^\top}\right]
\end{equation}
and
\begin{equation*}
    P^{(i)}  \coloneqq \diag(P_{i:}) - P_{i:}^\top P_{i:} =
    \begin{bmatrix}
    P_{i1}(1-P_{i1}) & -P_{i1} P_{i2} & \dots  & - P_{i1} P_{iN}  \\
    - P_{i2} P_{i1} & P_{i2}(1-P_{i2}) & \dots  & - P_{i2} P_{iN} \\
    \vdots & \vdots & \ddots & \vdots \\
    - P_{iN}P_{i1} & - P_{iN}P_{i2} & \dots  & P_{iN}(1-P_{iN}) 
    \end{bmatrix},
\end{equation*}
\begin{equation*}
    P_{ij} = \frac{\exp\left(-\|\mathbf{x}_i^\top W^Q - \mathbf{x}_j^\top W^K\|_2^2\right)}{\sum_k \exp\left(-\|\mathbf{x}_i^\top W^Q - \mathbf{x}_k^\top W^K\|_2^2\right)}.
\end{equation*}
Recall that $E_{ji} \in \mathbb{R}^{N \times N}$ is a binary matrix with zeros everywhere except the $(j,i)$th entry. Hence $E_{ji}X$ has all rows equal to zero except for the $j$th row given by $\mathbf{x}_i^\top$. We can then verify:
\begin{equation}
    X^\top P^{(i)} E_{ji} X = P_{ij}(\mathbf{x}_j - \sum_k P_{ik} \mathbf{x}_k) \mathbf{x}_i^\top.
\end{equation}
Also note $P^{(i)}$ is symmetric, and each row/colum sums to $0$, i.e.~$P^{(i)} \mathds{1} = \mathds{1}^\top P^{(i)} = 0$.
Hence we may simplify the Jacobian terms as follows:
\begin{align}
    \tilde{J}_{ii} &= \frac{2}{\sqrt{D/H}}\left[X^\top P^{(i)}(XW^K - \mathds{1}\mathbf{x}_i^TW^Q)W^{Q^\top} + X^\top P^{(i)}E_{ii}X(W^Q-W^K)W^{K^\top}\right] + P_{ii} I \nonumber \\
    &= \frac{2}{\sqrt{D/H}}\left[X^\top P^{(i)}(XW^K - \mathds{1}\mathbf{x}_i^TW^Q)W^{Q^\top} + P_{ii}(\mathbf{x}_i - \sum_k P_{ik} \mathbf{x}_k)\mathbf{x}_i^\top(W^Q-W^K)W^{K^\top}\right] + P_{ii} I  \nonumber \\
    &= \frac{2}{\sqrt{D/H}}\left[X^\top P^{(i)}XW^K W^{Q^\top} + P_{ii}(\mathbf{x}_i - \sum_k P_{ik} \mathbf{x}_k)\mathbf{x}_i^\top(W^Q-W^K)W^{K^\top}\right] + P_{ii} I, \label{eq:jii}
\end{align}
and for $i\neq j$:
\begin{align}
    \tilde{J}_{ij} &= \frac{2}{\sqrt{D/H}}X^\top P^{(i)}(E_{ij}XW^Q - E_{jj}XW^K)W^{K^\top} + P_{ij} I \nonumber \\
    &= \frac{2}{\sqrt{D/H}} P_{ij}(\mathbf{x}_j - \sum_k P_{ik} \mathbf{x}_k)(\mathbf{x}_i^\top W^Q - \mathbf{x}_j^\top W^K)W^{K^\top} + P_{ij} I. \label{eq:jij}
\end{align}

We are now ready to show that $\tilde{f}$ is \emph{not} Lipschitz for general $W^Q, W^K$:
\begin{lemma} \label{lemma:tie_weights}
If $W^K \in \mathbb{R}^{D \times D/H}$ is full rank (i.e.~full column rank), and $W^K \neq W^Q$, then $J_{ij}$ has terms that are unbounded for $i \neq j$, hence $\tilde{f}$ is \emph{not} Lipschitz. 
\end{lemma}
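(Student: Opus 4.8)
The plan is to exhibit a single off-diagonal Jacobian block $\tilde{J}_{ij}$ (for fixed $i\neq j$) whose operator norm diverges along a sequence of inputs, which by Theorem~\ref{thm:jacobian} is enough to conclude $\tilde{f}$ is not Lipschitz. The starting point is the explicit formula \eqref{eq:jij}, which I would write as the sum of a rank-one matrix and a bounded term,
\[
\tilde{J}_{ij} = \tfrac{2}{\sqrt{D/H}}\, P_{ij}\, \mathbf{u}\, \mathbf{w}^\top + P_{ij} I,\qquad \mathbf{u} \coloneqq \mathbf{x}_j - \textstyle\sum_k P_{ik}\mathbf{x}_k,\quad \mathbf{w}^\top \coloneqq \big(\mathbf{x}_i^\top W^Q - \mathbf{x}_j^\top W^K\big)W^{K^\top}.
\]
Since $\|P_{ij}I\|_p \le 1$, it suffices to drive the rank-one part, whose spectral norm is $\tfrac{2}{\sqrt{D/H}}P_{ij}\|\mathbf{u}\|\,\|\mathbf{w}\|$, to infinity. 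I would work with $N=2$ and $(i,j)=(1,2)$, where $\mathbf{u}$ collapses to $P_{11}(\mathbf{x}_2-\mathbf{x}_1)$ because $P_{11}+P_{12}=1$.

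The key observation I would exploit is that when $W^Q\neq W^K$ the self-logit $L_{ii}=-\|\mathbf{x}_i^\top(W^Q-W^K)\|_2^2/\sqrt{D/H}$ is no longer forced to be the maximal logit (unlike the tied case, where $L_{ii}=0$ dominates). This lets me send \emph{both} logits $L_{11}$ and $L_{12}$ to $-\infty$ at the same rate, so that $P_{11}=P_{12}=\tfrac12$ stays bounded away from $0$ while the inputs separate arbitrarily far. Concretely, I would fix a unit row vector $\mathbf{e}$ in the (nonzero, since $W^Q\neq W^K$) row space of $W^Q-W^K$, pick $\mathbf{y}$ with $\mathbf{y}^\top(W^Q-W^K)=\mathbf{e}$, and set $\mathbf{x}_1\coloneqq R\mathbf{y}$, so that $\mathbf{a}\coloneqq\mathbf{x}_1^\top(W^Q-W^K)=R\mathbf{e}$. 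I then choose $\mathbf{x}_2$ to solve $\mathbf{x}_2^\top W^K=\mathbf{x}_1^\top W^Q+R\mathbf{e}$; this is solvable precisely because $W^K$ has full column rank (its row space is all of $\mathbb{R}^{1\times D/H}$). With this choice the cross-logit quantity is $\mathbf{d}\coloneqq\mathbf{x}_1^\top W^Q-\mathbf{x}_2^\top W^K=-R\mathbf{e}$, so $\|\mathbf{a}\|=\|\mathbf{d}\|=R$, giving $L_{11}=L_{12}$ and hence $P_{11}=P_{12}=\tfrac12$.

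It then remains to check that the two polynomial factors grow. Using $\mathbf{a}-\mathbf{d}=(\mathbf{x}_2-\mathbf{x}_1)^\top W^K=2R\mathbf{e}$ together with $\|\,\cdot\,W^K\|\le\|\mathbf{x}_2-\mathbf{x}_1\|\,\|W^K\|_2$ gives $\|\mathbf{x}_2-\mathbf{x}_1\|\ge 2R/\|W^K\|_2\to\infty$, hence $\|\mathbf{u}\|=\tfrac12\|\mathbf{x}_2-\mathbf{x}_1\|\to\infty$; and full column rank gives $\|\mathbf{w}\|=\|\mathbf{d}W^{K^\top}\|\ge\sigma_{\min}(W^K)\,R\to\infty$ with $\sigma_{\min}(W^K)>0$. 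Therefore the rank-one part scales like $R^2$, so $\|\tilde{J}_{12}\|_p\to\infty$ and $\tilde{f}$ is not Lipschitz. The main obstacle, and the crux of the argument, is the apparent tension that $\mathbf{d}$ controls both the softmax weight $P_{ij}$ (which one fears will be exponentially suppressed as $\mathbf{d}$ grows) and the magnitude of $\mathbf{w}$ (which must grow): the resolution is that the softmax depends only on logit \emph{differences}, so equalizing $L_{11}$ and $L_{12}$ keeps the attention weights at $\tfrac12$ no matter how negative the logits become. This mirrors the dot-product pathology of Theorem~\ref{thm:dp_not_lipschitz} — near-uniform attention over far-apart points yields unbounded covariance — and it also makes transparent why tying $W^Q=W^K$ removes the freedom to separate the logits, restoring the Lipschitz property.
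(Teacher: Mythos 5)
Your proposal is correct and follows essentially the same route as the paper's own proof: your choice of $\mathbf{x}_2$ making $\mathbf{d}=-R\mathbf{e}=-\mathbf{a}$ is exactly the paper's construction $\mathbf{y}_j=-\mathbf{y}_i$ (specialized to $N=2$), which equalizes the logits so attention stays uniform while the points separate, producing the same rank-one blow-up of order $R^2$ (the paper's $\mathbf{y}_i\mathbf{y}_i^\top$). The only cosmetic difference is that you lower-bound $\|\mathbf{u}\|\,\|\mathbf{w}\|$ directly via $\sigma_{\min}(W^K)$ and $\|W^K\|_2$, whereas the paper premultiplies $\tilde{J}_{ij}$ by $W^{K^\top}$ and tracks the matrix $\tilde{K}_{ij}=-P_{ij}\bigl(\mathbf{y}_j-\sum_k P_{ik}\mathbf{y}_k\bigr)\mathbf{y}_j^\top$, invoking full rank of $W^K$ at the end instead.
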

\begin{proof}

Let us investigate the expression $\tilde{K}_{ij} \coloneqq P_{ij} W^{K^\top}(\mathbf{x}_j - \sum_k P_{ik} \mathbf{x}_k)(\mathbf{x}_i^\top W^Q - \mathbf{x}_j^\top W^K) \in \mathbb{R}^{\frac{D}{H} \times \frac{D}{H}}$ for $i\neq j$, which is related to $\tilde{J}_{ij}$ as follows by Equation \eqref{eq:jij}:

\begin{equation*}
    W^{K^\top} \tilde{J}_{ij} = \left(\frac{2}{\sqrt{D/H}} \tilde{K}_{ij} + P_{ij}I \right)
    W^{K^\top}.
\end{equation*}

It suffices to show that $\tilde{K}_{ij}$ is unbounded to show that $\tilde{J}_{ij}$ is unbounded, since $W^K$ is full rank and $P_{ij} \in [0,1]$. 

Let $\mathbf{y}_j^\top = \mathbf{x}_i^\top W^Q - \mathbf{x}_j^\top W^K$. 
Then we have:
\begin{align*}
    \mathbf{y}_j - \sum_k P_{ik} \mathbf{y}_k 
    &= W^{Q^\top}\mathbf{x}_i - W^{K^\top}\mathbf{x}_j - \sum_k P_{ik} (W^{Q^\top}\mathbf{x}_i - W^{K^\top}\mathbf{x}_k)\\
    &= W^{Q^\top}\mathbf{x}_i - W^{K^\top}\mathbf{x}_j - (W^{Q^\top}\mathbf{x}_i - \sum_k P_{ik} W^{K^\top}\mathbf{x}_k) \\
    &= - W^{K^\top}(\mathbf{x}_j - \sum_k P_{ik} \mathbf{x}_k).
\end{align*}
Hence $\tilde{K}_{ij} = - P_{ij} (\mathbf{y}_j - \sum_k P_{ik} \mathbf{y}_k) \mathbf{y}_j^\top$.
Note $\mathbf{y}_i$ can take an arbitrary value in $\mathbb{R}^{D/H}$, since $W^K \neq W^Q$ and $W^K$ is full-rank.

For all $j \neq i$, let us choose $\mathbf{x}_j$ such that $\mathbf{y}_j = -\mathbf{y}_i$. This is possible for any value of $\mathbf{y}_i$ since $W^K$ is full-rank.
Note $\mathbf{y}_j = - \mathbf{y}_i$ and not $\mathbf{y}_i$.
We then have that $\|\mathbf{y}_j\|_2^2$ is equal for all $j$, hence $P_{ij} \coloneqq \frac{\exp(-\|\mathbf{y}_j\|_2^2)}{\sum_k \exp(-\|\mathbf{y}_k\|_2^2)} = \frac{1}{N}$ for all $j$. 
Then for $i \neq j$, $\tilde{K}_{ij}$ simplifies to
\begin{equation*}
\tilde{K}_{ij} = - \frac{1}{N} \left(-\mathbf{y}_i - \frac{1}{N} (N-2) (-\mathbf{y}_i)\right) (-\mathbf{y}_i)^\top  = - \frac{2N-2}{N^2} \mathbf{y}_i \mathbf{y}_i^\top 
\end{equation*}
whose entries are unbounded since $\mathbf{y}_i$ can be any vector in $\mathbb{R}^{D/H}$ (note we assume $N \geq 2$ for self-attention to be well-defined, hence $2N-2 \neq 0$).
\end{proof}

The intuition for this result is as follows: a reason for \verb!DP-MHA! not being Lipschitz is that for $\mathbf{x}_i=0$,, the attention weights $P_{ij}$ become uniform regardless of the values of $\mathbf{x}_j$ for $j \neq i$. A similar issue arises for \verb!L2-MHA! with $W^Q \neq W^K$ and full-rank $W^K$, as shown above: given any $\mathbf{x}_i$, we can choose $\mathbf{x}_j$ such that the $P_{ij}$ become uniform. 

\subsection{L2 self-attention is Lipschitz for $\boldsymbol{W^Q=W^K}$}

Hence we impose the restriction that $W^K=W^Q$. With this assumption we have
\begin{equation}
P_{ij} \propto \exp\left(-\|(\mathbf{x}_i - \mathbf{x}_j)^\top \sqrt{A}\|_2^2\right)
\end{equation}
where $A=W^Q W^{Q^\top} / \sqrt{D/H} \in \mathbb{R}^{D \times D}$ and $\sqrt{A}$ is chosen such that $A = \sqrt{A}\sqrt{A}^\top $,
in particular $\sqrt{A} \coloneqq W^Q / (D/H)^{\frac{1}{4}}$. The terms in the Jacobian of $\tilde{f}$ simplify to:
\begin{align}
    \tilde{J}_{ii} &= 2 X^\top P^{(i)}XA + P_{ii} I \hspace{2mm}  \text{(note $P^{(i)} \mathds{1} = 0$)},\\
    \tilde{J}_{ij} &= 2 P_{ij}(\mathbf{x}_j - \sum_k P_{ik} \mathbf{x}_k)(\mathbf{x}_i - \mathbf{x}_j)^\top A + P_{ij} I  \hspace{2mm} \text{for $i \neq j$}.
\end{align}
Let the Jacobian of $f(X)$ be:
\begin{equation}
    J_{f} = \begin{bmatrix}
    J_{11} & \dots & J_{1N} \\
    \vdots & \ddots & \vdots \\
    J_{N1} & \dots & J_{NN} \\
    \end{bmatrix} \in \mathbb{R}^{ND \times ND}.
\end{equation}
Since $f(X) = \tilde{f}(X)A$, and by the chain rule $\frac{\partial}{\partial \mathbf{x}_j}[\tilde{f}_i(X)A]=A^\top \frac{\partial \tilde{f}_i(X)}{\partial \mathbf{x}_j}=A \frac{\partial \tilde{f}_i(X)}{\partial \mathbf{x}_j}$ (by symmetry of $A$), we have that $J_{ij} = A \tilde{J}_{ij}$. 
Hence
\begin{align}
    J_{ii} &= 2 AX^\top P^{(i)}XA + P_{ii} A \hspace{2mm}  \text{(note $P^{(i)} \mathds{1} = \mathbf{0}$)},\\
    J_{ij} &= 2 P_{ij} A (\mathbf{x}_j - \sum_k P_{ik} \mathbf{x}_k)(\mathbf{x}_i - \mathbf{x}_j)^\top A + P_{ij} A  \hspace{2mm} \text{for $i \neq j$}.
\end{align}
Noting $\lip_p(f) = \sup_X \|J_f(X)\|_p$, we would like to upper bound $\|J_f\|_p$.

\subsubsection{Upper bound on $\boldsymbol{\lip_{\infty}(F)}$ for L2-MHA}

Consider the choice $p=\infty$, where $\|J_f\|_\infty$ is the maximum absolute row sum of $J_f$. 
A key observation is that if we can bound the $\infty$-norm of the Jacobian of $f_i$, a single output of $f$, (i.e.~a single block row $\|[J_{i1},...,J_{iN}]\|_\infty$ of $J_f$) then this is also a bound on $\|J_f\|_{\infty}$ due to permutation equivariance of self-attention; all block rows have the same maximal $\|\cdot\|_\infty$ when each is optimised over the input $X$. 
Using this, we can prove that $\|J_f\|_\infty$ admits an upper bound that is $O(\log N - \log \log N)$. Below we state and prove lemmas that lead to the proof of this upper bound.

First we analyse the term $\sqrt{A}^\top X^\top P^{(i)}X \sqrt{A}$, that appears in the first term of $J_{ii}$. 
Note that for $Y \coloneqq X \sqrt{A}$, so that the rows of $Y$ are $\mathbf{y}_i^\top \coloneqq \mathbf{x}_i^\top \sqrt{A}$, we have 
\begin{equation}
    \sqrt{A}^\top X^\top P^{(i)}X \sqrt{A}= Y^\top P^{(i)} Y =  \mathrm{Cov}(\mathbb{Y})
\end{equation}
where $\mathbb{P}(\mathbb{Y}=\mathbf{y}_j)=P_{ij} = \exp(-\|\mathbf{y}_j - \mathbf{y}_i\|^2_2)/\sum_k \exp(-\|\mathbf{y}_k - \mathbf{y}_i\|^2_2)$.
The last equality uses the observation in Equation \eqref{eq:cov}.

The central inequality used throughout the proof of the main theorem is the following:
\begin{lemma} \label{lemma:key}
$\Tr(\mathrm{Cov}(\mathbb{Y})) = \sum_j P_{ij}\|\mathbf{y}_j - \sum_k P_{ik} \mathbf{y}_k\|_2^2 \leq \sum_j P_{ij}\|\mathbf{y}_j-\mathbf{y}_i\|_2^2 \leq \phi^{-1}(N-1)$ where $\phi(c) = c \exp(c+1)$ is a one-dimensional invertible function on $\mathbb{R}_{\geq 0}$.
\end{lemma}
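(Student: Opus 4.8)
The plan is to prove the two inequalities separately, since they are of quite different character. The first inequality, $\sum_j P_{ij}\|\mathbf{y}_j - \sum_k P_{ik}\mathbf{y}_k\|_2^2 \le \sum_j P_{ij}\|\mathbf{y}_j - \mathbf{y}_i\|_2^2$, is simply the statement that the mean of a distribution minimises its expected squared distance to a fixed point. Writing $\boldsymbol{\mu} = \sum_k P_{ik}\mathbf{y}_k$ for the mean of $\mathbb{Y}$, the bias--variance decomposition $\sum_j P_{ij}\|\mathbf{y}_j - \mathbf{c}\|_2^2 = \sum_j P_{ij}\|\mathbf{y}_j - \boldsymbol{\mu}\|_2^2 + \|\boldsymbol{\mu} - \mathbf{c}\|_2^2$ holds for every fixed $\mathbf{c}$; taking $\mathbf{c} = \mathbf{y}_i$ and discarding the nonnegative term $\|\boldsymbol{\mu} - \mathbf{y}_i\|_2^2$ gives the claim. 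This step is routine.

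The substance lies in the second inequality. The key reduction is that it only involves the scalars $d_j := \|\mathbf{y}_j - \mathbf{y}_i\|_2^2 \ge 0$, so the problem collapses to a one-dimensional one. Writing $Z = \sum_k e^{-d_k}$ so that $P_{ij} = e^{-d_j}/Z$, and setting $c := \sum_j P_{ij} d_j \ge 0$ (the quantity to be bounded), the crucial structural fact is that $d_i = 0$, which yields $e^{-d_i} = 1$ and hence both $Z \ge 1$ and $\sum_{j\neq i} e^{-d_j} = Z - 1$, while by definition $\sum_{j\ne i} d_j e^{-d_j} = cZ$.

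The main obstacle, and the crux of the whole lemma, is to convert these identities into the self-referential bound $\phi(c) = c\,e^{c+1} \le N-1$. I would do this with a tangent-line inequality for the map $t \mapsto t e^{-t}$ taken at $t = c+1$: one checks that $t e^{-t} \le e^{-(c+1)} + c\, e^{-t}$ for all $t \ge 0$, since the difference of the two sides has its unique stationary point at $t = c+1$, where it vanishes and is a global minimum. Applying this with $t = d_j$ and summing over $j \ne i$ --- crucially excluding the trivial index $i$, which is where all the slack sits --- gives $cZ = \sum_{j\ne i} d_j e^{-d_j} \le (N-1)e^{-(c+1)} + c(Z-1)$. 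The $cZ$ terms cancel, leaving $c \le (N-1)e^{-(c+1)}$, i.e.\ $c\,e^{c+1} \le N-1$. Since $\phi'(c) = (1+c)e^{c+1} > 0$, $\phi$ is strictly increasing on $[0,\infty)$ and hence invertible with increasing inverse, so $c \le \phi^{-1}(N-1)$, completing the argument.

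As a sanity check on tightness I would note that this bound is in fact attained: treating the $d_j$ as free nonnegative variables and maximising $c$, stationarity forces every $d_j$ with $j \ne i$ to equal $1+c$, and substituting back reproduces $\phi(c) = N-1$ exactly. This confirms that the constant cannot be improved by this argument, and is consistent with the asymptotic tightness observed experimentally.
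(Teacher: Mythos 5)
Your proof is correct, and for the crucial final inequality it takes a genuinely different route from the paper's. The first inequality is the same in substance: the paper shifts coordinatewise by $\mathbf{y}_i$ and drops the squared mean ($\mathrm{Var}(\overline{\mathbb{Y}}_j) \leq \mathbb{E}[\overline{\mathbb{Y}}_j^2]$), which is exactly your bias--variance decomposition. For the bound $c \coloneqq \sum_j P_{ij}\|\mathbf{y}_j-\mathbf{y}_i\|_2^2 \leq \phi^{-1}(N-1)$, the paper argues variationally: it sets $z_j = \|\mathbf{y}_j - \mathbf{y}_i\|_2$, asserts that $g(\mathbf{z}) = \sum_j z_j^2 e^{-z_j^2}/\sum_k e^{-z_k^2}$ is bounded and attains its maximum (justified only informally via the decay of $z^2 e^{-z^2}$), derives the first-order conditions to conclude that at a maximum every nonzero $z_j$ satisfies $z_j^2 = 1 + g(\mathbf{z})$, and substitutes to obtain $c\,e^{c+1} = N-1$. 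Your tangent-line inequality $(t-c)e^{-t} \leq e^{-(c+1)}$ --- valid for all $t$ and any fixed $c \geq 0$, so its self-referential use with $c$ equal to the quantity being bounded is legitimate --- summed over $j \neq i$ (correctly exploiting $d_i = 0$; summing over all $j$ would make the estimate vacuous, as you note) yields $\phi(c) \leq N-1$ directly. This buys you a shorter, fully self-contained argument that sidesteps the attainment-of-maximum step, which is the one point where the paper's proof is not airtight: the domain is unbounded, and stationarity analysis alone does not rule out a supremum approached at infinity. What your route costs is motivation --- the tangent point $t = c+1$ looks pulled from thin air unless one already knows the maximizer, which is precisely what your closing stationarity check (and the paper's main argument) supplies; that check also recovers the paper's equality configuration $d_j = 1+c$ for $j \neq i$, confirming that both approaches certify the same, unimprovable constant.
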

\begin{proof}
The first equality holds since $\Tr(\mathrm{Cov}(\mathbb{Y})) = \sum_j \mathrm{Cov}(\mathbb{Y})_{jj} = \sum_j \mathrm{Var}(\mathbb{Y}_j) = \sum_j \mathbb{E}[(\mathbb{Y}_j -\mathbb{E}[\mathbb{Y}_j])^2]$. 
The next inequality holds since $\mathrm{Var}(\mathbb{Y}_j) = \mathrm{Var}(\overline{\mathbb{Y}}_j) = \mathbb{E}[\overline{\mathbb{Y}}_j^2] -\mathbb{E}[\overline{\mathbb{Y}}_j]^2 \leq \mathbb{E}[\overline{\mathbb{Y}}_j^2]$ where $\overline{\mathbb{Y}}= \mathbb{Y} - y_i$. 
The final inequality can be proved as follows.

We would like to bound 
\begin{equation}
    \sum_j P_{ij}\|\mathbf{y}_j-\mathbf{y}_i\|_2^2 = \frac{\sum_j \|\mathbf{y}_j-\mathbf{y}_i\|_2^2 \exp(-\|\mathbf{y}_j-\mathbf{y}_i\|_2^2)}{\sum_k \exp(-\|\mathbf{y}_k-\mathbf{y}_i\|_2^2)}  = \frac{\sum_j z_j^2 \exp(-z_j^2)}{\sum_k \exp(-z_k^2)}
\end{equation}
where $z_j \coloneqq \|\mathbf{y}_j-\mathbf{y}_i\|_2$ (hence $z_i=0$). 
Define:
\begin{equation}
    g(\mathbf{z}) \coloneqq \frac{\sum_j z_j^2 \exp(-z_j^2)}{\sum_k \exp(-z_k^2)} = \frac{\sum_{j \neq i} z_j^2 \exp(-z_j^2)}{1 + \sum_{k \neq i} \exp(-z_k^2)}.
\end{equation}
First note that as $z_j \rightarrow \infty$, $\exp(-z_j^2) \rightarrow 0$ exponentially fast, causing the product $z_j^2 \exp(-z_j^2) \rightarrow 0$.
Hence we expect the above quantity to be bounded and attain its maximum.

Let $h(z_j) \coloneqq \exp(-z_j^2)$ for notational conciseness, and note $h(z_j) > 0$. By taking partial derivatives with the chain rule, we have that for $j \neq i$
\begin{equation}
\frac{\partial g(\mathbf{z})}{\partial z_j} = \frac{2z_j h(z_j)}{(\sum_k h(z_k))^2}\left[(1-z_j^2)\sum_k h(z_k) + \sum_k h(z_k)z_k^2\right].
\end{equation}
Hence the derivative is $0$ if and only if $z_j = 0$ or $(1-z_j^2)\sum_k h(z_k) + \sum_k h(z_k)z_k^2 = 0$, the latter being equivalent to $z_j^2 = 1 + \frac{\sum_k h(z_k)z_k^2}{\sum_k h(z_k)} = 1 + g(\mathbf{z})$. 
Hence at the maximum, the non-zero values among $\{z_j\}_{j=1}^N$ must be equal to one another.
It is clear now that the maximum value $c$ is attained when $z_j^2 = 1 + c$ for $j \neq i$ (and recall $z_i = 0$). 
So $h(z_j) = \exp(-1-c)$ for $j \neq i$.
Substituting this into $g(z)$, and rearranging, we obtain $c \exp(c+1) = N - 1$. Note $\phi(x) \coloneqq x \exp(x+1)$ is increasing for $x > 0$ hence $c = \phi^{-1}(N-1)$.
\end{proof}
Note $\phi(\log N) = (\log N) \exp(\log N + 1) \geq N \log N \geq N -1$ for $N \geq 3$. Since $\phi$ is increasing, we have $\phi^{-1}(N-1) \leq \log(N)$ for $N \geq 3$. In fact, it is known that $\phi^{-1}(N-1) = O(\log N - \log \log N)$ \citep{corless1996lambertw}.

Note the $A$ term in $f(X) = \tilde{f}(X) A$ allows us to use the above inequality, since $Y^\top P^{(i)}Y = \mathrm{Cov}(\mathbb{Y})$ now appears in the terms of $J_f$:
\begin{align}
    J_{ii}
    &= 2 \sqrt{A} [Y^\top P^{(i)}Y]\sqrt{A}^\top + P_{ii} A,  \\ 
    J_{ij},
    &= 2 \sqrt{A} P_{ij}(\mathbf{y}_j - \sum_k P_{ik} \mathbf{y}_k)(\mathbf{y}_i - \mathbf{y}_j)^\top \sqrt{A}^\top + P_{ij} A  \hspace{2mm} \text{for $i \neq j$}.
\end{align}

Using the inequalities $\|BC\| \leq \|B\| \|C\|$, $\|B + C\| \leq \|B\| + \|C\|$ and $\|[A_1, \ldots, A_N]\| \leq \sum_i \|A_i\|$, we have: 
\begin{align*}
\|[J_{i1} &, \ldots, J_{iN}]\|_{\infty}  \\
\leq & \|J_{ii}\|_{\infty} + \sum_{j \neq i} \|J_{ij}\|_{\infty} \\
  \leq & 2 \|\sqrt{A}\|_{\infty} \|Y^\top P^{(i)}Y\|_{\infty} \|\sqrt{A}^\top\|_{\infty} + P_{ii} \|A\|_{\infty} \\
 & + 2 \sum_{j \neq i} \|\sqrt{A}\|_{\infty} \|P_{ij}(\mathbf{y}_j - \sum_k P_{ik} \mathbf{y}_k)(\mathbf{y}_i - \mathbf{y}_j)^\top\|_\infty \|\sqrt{A}^\top\|_{\infty} + P_{ij} \|A\|_{\infty}\\
  = & 2  \|\sqrt{A}\|_{\infty}\|\sqrt{A}^\top\|_{\infty} 
\bigg(\|Y^\top P^{(i)}Y\|_\infty 
 + \sum_{j\neq i} \|P_{ij}(\mathbf{y}_j - \sum_k P_{ik} \mathbf{y}_k)(\mathbf{y}_i - \mathbf{y}_j)^\top\|_\infty\bigg) + \|A\|_{\infty} \\
 = & 2  \frac{\|W^{Q}\|_{\infty}\|W^{Q^\top}\|_{\infty}}{\sqrt{D/H}} 
\bigg(\|Y^\top P^{(i)}Y\|_\infty 
 + \sum_j \|P_{ij}(\mathbf{y}_j - \sum_k P_{ik} \mathbf{y}_k)(\mathbf{y}_i - \mathbf{y}_j)^\top\|_\infty\bigg) + \frac{\|W^Q W^{Q^\top}\|_{\infty}}{\sqrt{D/H}}.
\end{align*}
For the first equality, note that $\sum_j P_{ij}=1$. For the second equality, note that the summand for $j=i$ is $0$ because the term $\mathbf{y}_i - \mathbf{y}_j=\mathbf{0}$. 
Each of the terms in the brackets are bounded by the following lemmas:
\begin{lemma}
$\|Y^\top P^{(i)}Y\|_\infty \leq  \phi^{-1}(N-1) \sqrt{D/H} $ ($\phi$ defined as in Lemma \ref{lemma:key}).
\end{lemma}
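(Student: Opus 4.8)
The plan is to exploit the probabilistic identity $Y^\top P^{(i)} Y = \mathrm{Cov}(\mathbb{Y})$ established above, so that the quantity to bound is the induced $\infty$-operator norm of a genuine $(D/H) \times (D/H)$ covariance matrix. Writing $M := Y^\top P^{(i)} Y = \mathrm{Cov}(\mathbb{Y})$, I would first recall that for matrices $\|M\|_\infty$ equals the maximum absolute row sum, $\|M\|_\infty = \max_l \sum_m |M_{lm}|$, so the task reduces to controlling the off-diagonal entries of $M$ and not merely its diagonal.

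The key structural fact is that $M$ is a covariance matrix, hence positive semi-definite, so its entries obey the Cauchy--Schwarz inequality $|M_{lm}| = |\mathrm{Cov}(\mathbb{Y}_l, \mathbb{Y}_m)| \le \sqrt{\mathrm{Var}(\mathbb{Y}_l)\,\mathrm{Var}(\mathbb{Y}_m)} = \sqrt{M_{ll} M_{mm}}$. Setting $s_l := \sqrt{M_{ll}} = \sqrt{\mathrm{Var}(\mathbb{Y}_l)} \ge 0$, each row sum is then controlled by $\sum_m |M_{lm}| \le s_l \sum_m s_m$, and taking the maximum over $l$ gives $\|M\|_\infty \le (\max_l s_l)\big(\sum_m s_m\big)$.

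Both factors are then bounded by the trace. Since $\max_l s_l \le \sqrt{\sum_l s_l^2} = \sqrt{\Tr(M)}$ and, by a second application of Cauchy--Schwarz over the $D/H$ coordinates, $\sum_m s_m \le \sqrt{D/H}\,\sqrt{\sum_m s_m^2} = \sqrt{D/H}\,\sqrt{\Tr(M)}$, I would obtain $\|M\|_\infty \le \sqrt{D/H}\,\Tr(M)$. Finally I would invoke Lemma \ref{lemma:key}, which gives $\Tr(\mathrm{Cov}(\mathbb{Y})) = \Tr(M) \le \phi^{-1}(N-1)$, yielding the claimed bound $\|Y^\top P^{(i)} Y\|_\infty \le \phi^{-1}(N-1)\sqrt{D/H}$.

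The main obstacle --- and the reason the trace bound of Lemma \ref{lemma:key} is not by itself sufficient --- is passing from a bound on the trace (the sum of the variances, i.e.\ the diagonal) to a bound on the $\infty$-operator norm, which also involves the off-diagonal covariances. The PSD/Cauchy--Schwarz entrywise inequality is precisely what bridges this gap; the only remaining care is to track the dimensional factor so that the two applications of Cauchy--Schwarz combine to give exactly $\sqrt{D/H}$ rather than a looser power of the dimension.
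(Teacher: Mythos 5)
Your proof is correct and follows essentially the same route as the paper: both bound each entry of $\mathrm{Cov}(\mathbb{Y})$ by the product of standard deviations via Cauchy--Schwarz, then control $\max_l \sigma(\mathbb{Y}_l)$ and $\sum_m \sigma(\mathbb{Y}_m)$ by $\sqrt{\Tr(\mathrm{Cov}(\mathbb{Y}))}$ and $\sqrt{D/H}\,\sqrt{\Tr(\mathrm{Cov}(\mathbb{Y}))}$ respectively, before invoking Lemma \ref{lemma:key} on the trace. The dimensional factor $\sqrt{D/H}$ is tracked exactly as in the paper's argument.
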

\begin{proof}
Recall that $Y^\top P^{(i)}Y = \mathrm{Cov}(\mathbb{Y})$. Let $\sigma(\mathbb{Y}_m)$ denote the standard deviation of $\mathbb{Y}_m$. Then $[\mathrm{Cov}(\mathbb{Y})]_{lm} \leq \sigma(\mathbb{Y}_l)\sigma(\mathbb{Y}_m)$.
Hence 
\begin{align*}
\|\mathrm{Cov}(\mathbb{Y})\|_{\infty} = \max_l \sum_m \left|[\mathrm{Cov}(\mathbb{Y})]_{lm}\right| 
& \leq  \max_l \sigma(\mathbb{Y}_l) \sum_m \sigma(\mathbb{Y}_m) \\
& \leq \sqrt{\frac{D}{H}} \sum_m \sigma^2(\mathbb{Y}_m)  = \sqrt{\frac{D}{H}} \Tr(\mathrm{Cov}(\mathbb{Y})) \\
& \leq \sqrt{\frac{D}{H}} \phi^{-1}(N-1),
\end{align*}
since $\sum_m \sigma(\mathbb{Y}_m) \leq \sqrt{\frac{D}{H}} \sqrt{\sum_m \sigma^2(\mathbb{Y}_m)}$ (by e.g.~using the Cauchy--Schwartz inequality on $[\sigma(\mathbb{Y}_1), \ldots, \sigma(\mathbb{Y}_{D/H})]$ and $\mathds{1}$) and $\max_l \sigma(\mathbb{Y}_l) \leq \sqrt{\sum_m \sigma^2(\mathbb{Y}_m)}$, and the last inequality is from Lemma \ref{lemma:key}. 
\end{proof}

\begin{lemma} \label{lemma:low_rank}
$\sum_j \|P_{ij}(\mathbf{y}_j - \sum_k P_{ik} \mathbf{y}_k)(\mathbf{y}_i - \mathbf{y}_j)^\top\|_\infty \leq  \phi^{-1}(N-1) \sqrt{D/H}$.
\end{lemma}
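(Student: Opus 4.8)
The plan is to exploit the fact that each summand is the $\infty$-norm of a rank-one matrix, which factorises into a product of vector norms, and then to reduce everything to the two quantities already controlled by Lemma \ref{lemma:key}. Writing $\bar{\mathbf{y}} \coloneqq \sum_k P_{ik}\mathbf{y}_k$ for the softmax-weighted mean, the starting observation is that for any $\mathbf{a}, \mathbf{b} \in \mathbb{R}^{D/H}$ one has $\|\mathbf{a}\mathbf{b}^\top\|_\infty = \|\mathbf{a}\|_\infty \|\mathbf{b}\|_1$, since the maximum absolute row sum of $\mathbf{a}\mathbf{b}^\top$ equals $\max_l |a_l| \sum_m |b_m|$. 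Applying this with $\mathbf{a} = \mathbf{y}_j - \bar{\mathbf{y}}$ and $\mathbf{b} = \mathbf{y}_i - \mathbf{y}_j$ rewrites the quantity to bound as $\sum_j P_{ij}\,\|\mathbf{y}_j - \bar{\mathbf{y}}\|_\infty\, \|\mathbf{y}_i - \mathbf{y}_j\|_1$ (the $j=i$ term vanishing as noted).

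Next I would pass from the $\infty$- and $1$-norms to the $2$-norm via the elementary inequalities $\|\mathbf{v}\|_\infty \le \|\mathbf{v}\|_2$ and $\|\mathbf{v}\|_1 \le \sqrt{D/H}\,\|\mathbf{v}\|_2$ (the latter by Cauchy--Schwarz against $\mathds{1}$ in $\mathbb{R}^{D/H}$), giving the upper bound
\begin{equation*}
\sqrt{D/H}\sum_j P_{ij}\,\|\mathbf{y}_j - \bar{\mathbf{y}}\|_2\, \|\mathbf{y}_i - \mathbf{y}_j\|_2 .
\end{equation*}
Treating $P_{ij}$ as a probability weight, I would then apply the weighted Cauchy--Schwarz inequality over $j$ to split the sum:
\begin{equation*}
\sum_j P_{ij}\,\|\mathbf{y}_j - \bar{\mathbf{y}}\|_2\, \|\mathbf{y}_i - \mathbf{y}_j\|_2 \le \sqrt{\textstyle\sum_j P_{ij}\|\mathbf{y}_j - \bar{\mathbf{y}}\|_2^2}\ \sqrt{\textstyle\sum_j P_{ij}\|\mathbf{y}_i - \mathbf{y}_j\|_2^2} .
\end{equation*}

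The first factor under the roots is exactly $\Tr(\mathrm{Cov}(\mathbb{Y})) = \sum_j P_{ij}\|\mathbf{y}_j - \sum_k P_{ik}\mathbf{y}_k\|_2^2$, and the second is $\sum_j P_{ij}\|\mathbf{y}_j - \mathbf{y}_i\|_2^2$; both are at most $\phi^{-1}(N-1)$ by Lemma \ref{lemma:key} — the first matching its leftmost expression (the trace of the covariance) and the second its intermediate expression. Hence the product of the two square roots is at most $\phi^{-1}(N-1)$, and multiplying by the $\sqrt{D/H}$ prefactor yields the claimed bound $\phi^{-1}(N-1)\sqrt{D/H}$.

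I expect no serious obstacle: the whole argument is a chain of standard norm inequalities once the rank-one factorisation $\|\mathbf{a}\mathbf{b}^\top\|_\infty = \|\mathbf{a}\|_\infty\|\mathbf{b}\|_1$ is spotted. The only point requiring slight care is arranging the Cauchy--Schwarz split so that its two factors match the two \emph{distinct} quantities bounded in Lemma \ref{lemma:key} — recognising $\sum_j P_{ij}\|\mathbf{y}_j - \bar{\mathbf{y}}\|_2^2$ as the trace of the covariance and $\sum_j P_{ij}\|\mathbf{y}_i - \mathbf{y}_j\|_2^2$ as the intermediate bound — so that both collapse to $\phi^{-1}(N-1)$ and the constant is not inflated.
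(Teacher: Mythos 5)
Your proof is correct and essentially identical to the paper's: both start from the rank-one identity $\|\mathbf{u}\mathbf{v}^\top\|_\infty = \|\mathbf{u}\|_\infty\|\mathbf{v}\|_1$, pass to $2$-norms via $\|\cdot\|_\infty \le \|\cdot\|_2$ and $\|\cdot\|_1 \le \sqrt{D/H}\,\|\cdot\|_2$ (which is where the $\sqrt{D/H}$ arises), and finish with Cauchy--Schwarz so that the two resulting factors are exactly the trace-of-covariance and displacement quantities bounded by $\phi^{-1}(N-1)$ in Lemma \ref{lemma:key}. The only difference is the order of operations --- the paper applies Cauchy--Schwarz first by absorbing $\sqrt{P_{ij}}$ into each factor and converts norms afterwards, while you convert norms first and use the weighted form of Cauchy--Schwarz --- which is the same computation.
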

\begin{proof}
Note $\|\mathbf{u}\mathbf{v}^\top\|_{\infty} = \|\mathbf{u}\|_{\infty} \|\mathbf{v}\|_1$ for real vectors $\mathbf{u},\mathbf{v}$. Hence
\begin{align*}
    \sum_j \|P_{ij}(\mathbf{y}_j - \sum_k P_{ik} \mathbf{y}_k)(\mathbf{y}_i - \mathbf{y}_j)^\top\|_\infty & = \sum_j P_{ij} \|\mathbf{y}_j - \sum_k P_{ik} \mathbf{y}_k\|_\infty \|\mathbf{y}_i - \mathbf{y}_j\|_1 \\
    & = \mathbf{a}^\top \mathbf{b} \leq \|\mathbf{a}\|_2 \|\mathbf{b}\|_2,
\end{align*}
where $a_j = \sqrt{P_{ij}} \|\mathbf{y}_j - \sum_k P_{ik} \mathbf{y}_k\|_\infty$, $b_j = \sqrt{P_{ij}} \|\mathbf{y}_i - \mathbf{y}_j\|_1$.

Note $a_j \leq c_j \coloneqq  \sqrt{P_{ij}} \|\mathbf{y}_j - \sum_k P_{ik} \mathbf{y}_k\|_2$ since $\|\mathbf{u}\|_\infty \leq \|\mathbf{u}\|_2$ for vector $\mathbf{u}$. Hence $\|\mathbf{a}\|_2 \leq \|\mathbf{c}\|_2$.

Also $b_j \leq \sqrt{\frac{D}{H}} d_j \coloneqq  \sqrt{\frac{D}{H}} \sqrt{P_{ij}} \|\mathbf{y}_i - \mathbf{y}_j\|_2$ since $\|\mathbf{u}\|_1 \leq \sqrt{\frac{D}{H}}\|\mathbf{u}\|_2$ for $\mathbf{u} \in \mathbb{R}^{D/H}$ (e.g.~by the Cauchy--Schwartz inequality on $[|\mathbf{u}_1|, \ldots, |\mathbf{u}_{D/H}|]$ and $\mathds{1}$). Hence $\|b\|_2 \leq \sqrt{\frac{D}{H}}\|d\|_2$.

Note $\|c\|_2^2 = \sum_j P_{ij} \|y_j - \sum_k P_{ik} y_k\|_2^2 = \Tr(\mathrm{Cov}(\mathbb{Y})) \leq \phi^{-1}(N-1)$ from Lemma \ref{lemma:key},
and $\|d\|_2^2 =  \sum_j P_{ij} \|y_i - y_j\|_2^2 \leq  \phi^{-1}(N-1)$ also from Lemma \ref{lemma:key}.
Hence $\|a\|_2 \|b\|_2 \leq \sqrt{\frac{D}{H}} \|c\|_2 \|d\|_2 \leq \sqrt{\frac{D}{H}} \phi^{-1}(N-1)$.
\end{proof}

Putting the above lemmas altogether, with the observation $\sup_X \|J_f(X)\|_\infty = \sup_X \|[J_{i1}(X), \ldots, J_{iN}(X)]\|_\infty$ by permutation invariance of $\|J_f\|_\infty$ (since $f$ is permutation equivariant and $\|\cdot\|_\infty$ is the maximum absolute row sum), we have
\begin{align}
\|J_f\|_{\infty}
& \leq 4\|W^Q\|_{\infty}\|W^{Q^\top}\|_{\infty} \phi^{-1}(N-1)
+ \frac{\|W^Q W^{Q^\top}\|_{\infty}}{\sqrt{D/H}} \nonumber\\
& \leq \|W^Q\|_{\infty}\|W^{Q^\top}\|_{\infty} \left(4\phi^{-1}(N-1) + \frac{1}{\sqrt{D/H}}\right) \label{ineq:infty}\\
& \leq \|W^Q\|_{\infty} \|W^{Q^\top}\|_{\infty} \left(4\log N + \frac{1}{\sqrt{D/H}}\right), \nonumber
\end{align}
where the last inequality holds for $N \geq 3$.

The full multihead attention map that combines the heads $f^h(X)$ is:
\begin{equation*}
F: X \mapsto \left[f^1(X)W^{V,1}, \ldots f^H(X)W^{V,H}\right] W^O = g(X) W^V W^O
\end{equation*}
where $g:X \mapsto [f^1(X),\ldots,f^H(X)]$, $W^O \in \mathbb{R}^{D \times D}$ and
\begin{equation*}
    W^V = \begin{bmatrix}
    W^{V,1} & \dots & 0 \\
    \vdots & \ddots & \vdots \\
    0 & \dots & W^{V,H} \\
    \end{bmatrix} \in \mathbb{R}^{DH \times D}.
\end{equation*}
Note the Jacobian $J_g$ is a block matrix whose rows are $J_{f^h}$, hence $\|J_g\|_{\infty} = \max_h \|J_{f^h}\|_{\infty}$, and similarly $\|W^{V^\top}\|_{\infty} = \max_h \|W^{{V,h}^\top}\|_{\infty}$. Hence we have
\begin{equation*}
    \lip_{\infty}(F) \leq \max_h \|J_{f^h}\|_{\infty} \max_h \|W^{{V,h}^\top}\|_{\infty} \|W^{O^\top}\|_{\infty}.
\end{equation*}

Combining this with Inequality (\ref{ineq:infty}), we have:
\begin{equation*}
    \lip_{\infty}(F)  \leq \left(4 \phi^{-1}(N-1) + \frac{1}{\sqrt{D/H}}\right) \max_h \|W^{Q,h}\|_{\infty} \|W^{{Q,h}^\top}\|_{\infty} \max_h \|W^{{V,h}^\top}\|_{\infty} \ \|W^{O^\top}\|_{\infty}.
\end{equation*}

\subsubsection{Upper bound on $\boldsymbol{\lip_2(F)}$ for L2-MHA}
For $p=2$, we use the following lemma:
\begin{lemma} \label{lemma:block_rows}
Let A be a block matrix with block rows $A_1, \ldots, A_N$. Then $\|A\|_2 \leq \sqrt{\sum_i \|A_i\|_2^2}$, and equality holds if and only if the first right singular vectors of the $A_i$ align.
\end{lemma}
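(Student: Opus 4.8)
The plan is to work directly from the variational definition of the operator $2$-norm, $\|A\|_2 = \sup_{\|\mathbf{v}\|_2 = 1} \|A\mathbf{v}\|_2$, where $\mathbf{v}$ ranges over unit vectors in the common column space shared by all the blocks. The key structural observation is that stacking blocks by rows means $A\mathbf{v}$ is itself the vertical stack of the $A_i\mathbf{v}$, so that $\|A\mathbf{v}\|_2^2 = \sum_i \|A_i\mathbf{v}\|_2^2$. This decoupling reduces the whole statement to controlling each term of this sum separately.

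For the inequality, I would bound each summand by $\|A_i\mathbf{v}\|_2 \le \|A_i\|_2 \|\mathbf{v}\|_2 = \|A_i\|_2$, using $\|\mathbf{v}\|_2 = 1$ and the definition of the induced norm of $A_i$. This gives $\|A\mathbf{v}\|_2^2 \le \sum_i \|A_i\|_2^2$ uniformly over the unit sphere; taking the supremum over $\mathbf{v}$ and then a square root yields $\|A\|_2 \le \sqrt{\sum_i \|A_i\|_2^2}$.

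For the equality characterization I would argue both directions. Since the unit sphere is compact and $\mathbf{v} \mapsto \|A\mathbf{v}\|_2$ is continuous, the supremum defining $\|A\|_2$ is attained at some unit vector $\mathbf{v}^*$. If equality holds in the bound, then $\sum_i \|A_i\mathbf{v}^*\|_2^2 = \|A\|_2^2 = \sum_i \|A_i\|_2^2$; combined with the termwise inequalities $\|A_i\mathbf{v}^*\|_2 \le \|A_i\|_2$, this forces equality in every term, i.e.\ $\|A_i\mathbf{v}^*\|_2 = \|A_i\|_2$ for all $i$, which is precisely the statement that the single direction $\mathbf{v}^*$ is a first right singular vector of each $A_i$ simultaneously. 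Conversely, if the $A_i$ share a common first right singular vector $\mathbf{v}^*$, evaluating at $\mathbf{v} = \mathbf{v}^*$ gives $\|A\mathbf{v}^*\|_2^2 = \sum_i \|A_i\mathbf{v}^*\|_2^2 = \sum_i \|A_i\|_2^2$, so the upper bound is achieved.

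Rather than a deep obstacle, the main thing to handle with care is the equality analysis: justifying existence of a maximizer $\mathbf{v}^*$ by compactness, and translating ``$\mathbf{v}^*$ attains $\|A_i\|_2$ for every $i$'' into alignment of the first right singular vectors, while accounting for degenerate cases. In particular, a zero block makes every unit vector trivially a first right singular vector, and a repeated top singular value makes the maximizing direction non-unique; neither breaks the argument, but the phrasing of ``alignment'' should be read as the existence of a single common maximizing direction $\mathbf{v}^*$ so that these cases are absorbed cleanly.
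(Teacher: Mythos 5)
Your proposal is correct and follows essentially the same route as the paper's proof: decompose $\|A\mathbf{v}\|_2^2 = \sum_i \|A_i\mathbf{v}\|_2^2$ and exchange the supremum with the sum to bound each block separately. The only difference is that you spell out the equality case (compactness gives a maximizer $\mathbf{v}^*$, termwise equality forces $\mathbf{v}^*$ to be a common first right singular vector, and conversely), which the paper simply asserts in one line --- a welcome elaboration, including your correct handling of the degenerate cases.
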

\begin{proof}
\begin{equation*}
\|A\|_2^2 = \left\Vert \begin{bmatrix} A_1 \\ \vdots \\ A_N \\ \end{bmatrix}\right\Vert_2^2 = \sup_{\|\mathbf{x}\|_2=1} \left\Vert\begin{bmatrix} A_1 \\ \vdots \\ A_N \\ \end{bmatrix} \mathbf{x}\right\Vert_2^2 = \sup_{\|\mathbf{x}\|_2=1} \sum_i \|A_i \mathbf{x}\|_2^2 \leq \sum_i \sup_{\|\mathbf{x}\|_2=1} \|A_i \mathbf{x}\|_2^2 = \sum_i \|A_i\|_2^2.
\end{equation*}
Note that equality holds if and only if the first right singular vectors of the $A_i$ align.
\end{proof}
Hence a bound on the spectral norm of each block row of $J_f$ can give us an $O(\sqrt{N})$ bound on $\|J_f\|_2$, which may be loose, and it remains an open question as to whether this bound can be tightened.

To bound the $\|\cdot\|_2$ norm of each row of $J_f$, we use the following lemmas:
\begin{lemma}
$\|Y^\top P^{(i)}Y\|_2 \leq \phi^{-1}(N-1)$
\end{lemma}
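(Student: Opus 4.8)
The plan is to leverage the structural fact, already established just before this lemma, that $Y^\top P^{(i)} Y = \mathrm{Cov}(\mathbb{Y})$, the covariance matrix of the discrete distribution $\mathbb{Y}$ with $\mathbb{P}(\mathbb{Y}=\mathbf{y}_j)=P_{ij}$. A covariance matrix is symmetric and positive semi-definite, and for any such matrix the operator $2$-norm coincides with the largest eigenvalue: $\|Y^\top P^{(i)} Y\|_2 = \lambda_{\max}(\mathrm{Cov}(\mathbb{Y}))$. This is the single observation that makes the $p=2$ case tighter than the $p=\infty$ case, where we had to pay an extra $\sqrt{D/H}$ factor.

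First I would record that all eigenvalues of $\mathrm{Cov}(\mathbb{Y})$ are non-negative. Consequently the largest eigenvalue is bounded above by the sum of all eigenvalues, which is the trace:
\begin{equation*}
\|Y^\top P^{(i)} Y\|_2 = \lambda_{\max}(\mathrm{Cov}(\mathbb{Y})) \leq \sum_m \lambda_m(\mathrm{Cov}(\mathbb{Y})) = \Tr(\mathrm{Cov}(\mathbb{Y})).
\end{equation*}
Then I would simply invoke Lemma \ref{lemma:key}, which states exactly that $\Tr(\mathrm{Cov}(\mathbb{Y})) = \sum_j P_{ij}\|\mathbf{y}_j - \sum_k P_{ik}\mathbf{y}_k\|_2^2 \leq \phi^{-1}(N-1)$, to conclude $\|Y^\top P^{(i)} Y\|_2 \leq \phi^{-1}(N-1)$.

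I do not anticipate a genuine obstacle here, since the hard analytic work — the bound on the trace of the covariance — was already carried out in Lemma \ref{lemma:key} via the optimisation argument over the $z_j = \|\mathbf{y}_j - \mathbf{y}_i\|_2$. The only thing to be careful about is justifying the two elementary inequalities cleanly: that the spectral norm of a symmetric PSD matrix equals its top eigenvalue, and that the top eigenvalue is dominated by the trace when the spectrum is non-negative. Both follow immediately from positive semi-definiteness of $\mathrm{Cov}(\mathbb{Y})$, so the proof is essentially a one-line corollary of Lemma \ref{lemma:key} once the PSD structure is noted.
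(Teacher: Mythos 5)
Your proposal is correct and follows essentially the same route as the paper's own proof: both identify $Y^\top P^{(i)}Y$ with $\mathrm{Cov}(\mathbb{Y})$, use symmetry to equate the spectral norm with $\lambda_{\max}$, bound $\lambda_{\max}$ by the trace via positive semi-definiteness, and conclude with Lemma \ref{lemma:key}. No further changes are needed.
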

\begin{proof}
$\|Y^\top P^{(i)}Y\|_2=\|\mathrm{Cov}(\mathbb{Y})\|_2 = \lambda_{\max}(\mathrm{Cov}(\mathbb{Y})) \leq \Tr(\mathrm{Cov}(\mathbb{Y})) \leq \phi^{-1}(N-1)$, where the first equality holds by symmetry of $\mathrm{Cov}(\mathbb{Y})$ and the next holds by $\mathrm{Cov}(\mathbb{Y})$ being positive semi-definite, so all its eigenvalues are non-negative, and hence the maximal eigenvalue is bounded by the sum of the eigenvalues, equal to its trace. The final inequality is from Lemma \ref{lemma:key}.
\end{proof}

\begin{lemma}
$\sum_j \|P_{ij}(\mathbf{y}_j - \sum_k P_{ik} \mathbf{y}_k)(\mathbf{y}_i - \mathbf{y}_j)^\top\|_2 \leq  \phi^{-1}(N-1)$
\end{lemma}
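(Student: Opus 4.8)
The plan is to mirror the proof of Lemma~\ref{lemma:low_rank}, but to exploit the fact that staying entirely in the $2$-norm is cleaner: whereas the $\infty$-norm case forced conversions between the $\infty$-, $1$- and $2$-norms (each costing a $\sqrt{D/H}$ factor), the spectral norm of a rank-one matrix factorises exactly with no such overhead. Concretely, I would first invoke the identity $\|\mathbf{u}\mathbf{v}^\top\|_2 = \|\mathbf{u}\|_2\,\|\mathbf{v}\|_2$ for real vectors $\mathbf{u},\mathbf{v}$ (the single nonzero singular value of a rank-one matrix), which turns the summand into a product of two vector $2$-norms:
\begin{equation*}
\sum_j \|P_{ij}(\mathbf{y}_j - \textstyle\sum_k P_{ik}\mathbf{y}_k)(\mathbf{y}_i-\mathbf{y}_j)^\top\|_2 = \sum_j P_{ij}\,\|\mathbf{y}_j - \textstyle\sum_k P_{ik}\mathbf{y}_k\|_2\,\|\mathbf{y}_i-\mathbf{y}_j\|_2.
\end{equation*}

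Next I would recognise the right-hand side as an inner product and split it via Cauchy--Schwarz, choosing the split so that each resulting factor matches a quantity already bounded in Lemma~\ref{lemma:key}. Setting $a_j \coloneqq \sqrt{P_{ij}}\,\|\mathbf{y}_j - \sum_k P_{ik}\mathbf{y}_k\|_2$ and $b_j \coloneqq \sqrt{P_{ij}}\,\|\mathbf{y}_i-\mathbf{y}_j\|_2$, the sum equals $\mathbf{a}^\top\mathbf{b} \leq \|\mathbf{a}\|_2\,\|\mathbf{b}\|_2$. The payoff of this particular pairing is that $\|\mathbf{a}\|_2^2 = \sum_j P_{ij}\|\mathbf{y}_j - \sum_k P_{ik}\mathbf{y}_k\|_2^2 = \Tr(\mathrm{Cov}(\mathbb{Y}))$ and $\|\mathbf{b}\|_2^2 = \sum_j P_{ij}\|\mathbf{y}_i-\mathbf{y}_j\|_2^2$ are \emph{exactly} the two expressions appearing in Lemma~\ref{lemma:key}, each bounded above by $\phi^{-1}(N-1)$.

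Combining these gives $\mathbf{a}^\top\mathbf{b} \leq \sqrt{\phi^{-1}(N-1)}\cdot\sqrt{\phi^{-1}(N-1)} = \phi^{-1}(N-1)$, which is the claim, and notably without any residual $\sqrt{D/H}$ factor precisely because we never leave the $2$-norm. I do not anticipate a genuine obstacle here; the proof is essentially a streamlined version of the $\infty$-norm argument. The only point requiring care is selecting the symmetric Cauchy--Schwarz split above, so that both factors reduce directly to the two bounds in Lemma~\ref{lemma:key}, rather than a split that would reintroduce the cross-norm conversions (and hence extra dimension-dependent constants) that encumbered the $\infty$-norm case.
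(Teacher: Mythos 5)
Your proposal is correct and is essentially identical to the paper's own proof, which likewise uses the rank-one identity $\|\mathbf{u}\mathbf{v}^\top\|_2 = \|\mathbf{u}\|_2\|\mathbf{v}\|_2$ and then applies Cauchy--Schwarz to exactly the vectors you call $\mathbf{a}$ and $\mathbf{b}$ (the paper's $\mathbf{c}$ and $\mathbf{d}$ from the $\infty$-norm lemma), bounding both factors by $\phi^{-1}(N-1)$ via Lemma \ref{lemma:key}. Your observation that staying in the $2$-norm avoids the $\sqrt{D/H}$ conversion factors matches why the paper's $2$-norm bound carries no dimension-dependent constant here.
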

\begin{proof}
Directly use Cauchy--Schwartz on $c$ and $d$ in the proof of Lemma \ref{lemma:low_rank}. 
\end{proof}
Again using the inequalities $\|BC\| \leq \|B\| \|C\|$, $\|B + C\| \leq \|B\| + \|C\|$ and $\|[A_1, \ldots, A_N]\| \leq \sum_i \|A_i\|$, with the additional equality $\|B^\top\|_2 = \|B\|_2$, we have the bound: 
\begin{align*}
&\|[J_{i1}, \ldots, J_{iN}]\|_2 \\
 & \leq  2  \frac{\|W^Q\|_2\|W^{Q^\top}\|_2}{\sqrt{D/H}} 
\bigg(\|Y^\top P^{(i)}Y\|_2
 + \sum_j \|P_{ij}(\mathbf{y}_j - \sum_k P_{ik} \mathbf{y}_k)(\mathbf{y}_i - \mathbf{y}_j)^\top\|_2 \bigg) + \frac{\|W^Q W^{Q^\top}\|_2}{\sqrt{D/H}} \\
 & \leq  4\phi^{-1}(N-1) \frac{\|W^Q\|_2^2}{\sqrt{D/H}}  + \frac{\|W^Q W^{Q^\top}\|_2}{\sqrt{D/H}} \\
 & \leq  \frac{\|W^Q\|_2^2}{\sqrt{D/H}} \bigg(4\phi^{-1}(N-1)+1 \bigg).
\end{align*}
Using Lemma \ref{lemma:block_rows}, we have that
\begin{align}
    \|J_f\|_2 & \leq \frac{\sqrt{N}\|W^Q\|_2^2}{\sqrt{D/H}} \bigg(4\phi^{-1}(N-1)+1 \bigg) \label{ineq:2} \\
    & \leq \frac{\sqrt{N}\|W^Q\|_2^2}{\sqrt{D/H}}(4\log N+1). \nonumber
\end{align}
To obtain the final result for the full multihead self-attention $F$, we need a final lemma:
\begin{lemma} \label{lemma:block_cols}
Let A be a block matrix with block columns $A_1, \ldots, A_N$. Then $\|A\|_2 \leq \sqrt{\sum_i \|A_i\|_2^2}$.
\end{lemma}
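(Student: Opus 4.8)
The plan is to reduce the claim to the block-rows version, Lemma \ref{lemma:block_rows}, which is already established, by passing to the transpose. The two facts I will invoke are that the spectral norm is invariant under transposition, $\|B\|_2 = \|B^\top\|_2$ for any matrix $B$, and that transposing a block matrix whose block columns are $A_1, \ldots, A_N$ produces a matrix whose block rows are exactly $A_1^\top, \ldots, A_N^\top$.

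Concretely, I would write $A = [A_1, \ldots, A_N]$, so that $A^\top$ is the vertical stacking of $A_1^\top, \ldots, A_N^\top$. Applying Lemma \ref{lemma:block_rows} to $A^\top$ gives $\|A^\top\|_2 \leq \sqrt{\sum_i \|A_i^\top\|_2^2}$. Substituting $\|A\|_2 = \|A^\top\|_2$ on the left and $\|A_i^\top\|_2 = \|A_i\|_2$ on the right then yields $\|A\|_2 \leq \sqrt{\sum_i \|A_i\|_2^2}$, which is precisely the assertion.

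As a self-contained alternative that avoids citing the previous lemma, I could argue directly. Partition a unit vector $\mathbf{x}$ conformally as $\mathbf{x} = [\mathbf{x}_1; \ldots; \mathbf{x}_N]$ with $\sum_i \|\mathbf{x}_i\|_2^2 = 1$, so that $A\mathbf{x} = \sum_i A_i \mathbf{x}_i$. The triangle inequality gives $\|A\mathbf{x}\|_2 \leq \sum_i \|A_i\|_2 \|\mathbf{x}_i\|_2$, and a single application of Cauchy--Schwarz to the vectors $(\|A_i\|_2)_i$ and $(\|\mathbf{x}_i\|_2)_i$ bounds this by $\sqrt{\sum_i \|A_i\|_2^2}\,\sqrt{\sum_i \|\mathbf{x}_i\|_2^2} = \sqrt{\sum_i \|A_i\|_2^2}$. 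Taking the supremum over unit $\mathbf{x}$ finishes the proof.

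There is essentially no obstacle here: the statement is the exact mirror image of Lemma \ref{lemma:block_rows}, so the only points needing care are the bookkeeping that transposition swaps block rows with block columns and the standard invariance $\|B\|_2 = \|B^\top\|_2$. The direct argument additionally makes transparent why this bound, like its block-rows counterpart, need not be tight: it chains a triangle inequality with a Cauchy--Schwarz step, and each is an equality only under alignment conditions on the $A_i$ (and on their action on the corresponding $\mathbf{x}_i$).
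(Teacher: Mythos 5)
Your proposal is correct, and your primary route is genuinely different from the paper's. The paper proves Lemma \ref{lemma:block_cols} directly: it writes a unit vector in conformally partitioned form so that $\|A\mathbf{x}\|_2 = \|\sum_i A_i \mathbf{x}_i\|_2$, applies the triangle inequality, substitutes $\mathbf{x}_i = \lambda_i \mathbf{e}_i$ with $\|\mathbf{e}_i\|_2 = 1$ and $\sum_i \lambda_i^2 = 1$ so that the supremum over each $\mathbf{e}_i$ produces $\|A_i\|_2$, and finishes with Cauchy--Schwarz on $[\lambda_1,\ldots,\lambda_N]$ and $[\|A_1\|_2,\ldots,\|A_N\|_2]$ --- which is, step for step, your ``self-contained alternative''; your version merely absorbs the substitution into the single bound $\|A_i\mathbf{x}_i\|_2 \le \|A_i\|_2 \|\mathbf{x}_i\|_2$ before one application of Cauchy--Schwarz, which is marginally cleaner bookkeeping of the same argument. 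Your main argument, by contrast, reduces the claim to Lemma \ref{lemma:block_rows} via $\|A\|_2 = \|A^\top\|_2$ together with the observation that transposition turns block columns into block rows; this is a route the paper does not take. It buys economy --- no new inequality work at all --- and even a little extra information: the equality condition recorded in Lemma \ref{lemma:block_rows} transfers, since the first right singular vectors of the $A_i^\top$ are the first left singular vectors of the $A_i$, so the bound is attained exactly when those align, a fact the paper's direct proof of Lemma \ref{lemma:block_cols} does not state (its triangle-inequality step obscures it). The direct argument's compensating virtue, as you note yourself, is that it makes transparent where slack can enter. Both proofs are sound, and either would serve in the paper.
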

\begin{proof}
\begin{align*}
\|A\|_2 &= \|[A_1, \ldots, A_N]\|_2 = \sup_{\sum_i\|\mathbf{x}_i\|^2_2=1}  \left\Vert [A_1, \ldots, A_N] \begin{bmatrix} \mathbf{x}_1\\ \vdots \\ \mathbf{x}_N \\ \end{bmatrix} \right\Vert_2^2 = \sup_{\sum_i\|\mathbf{x}_i\|^2_2=1} \|\sum_i A_i \mathbf{x}_i\|_2 \\ 
& \leq \sup_{\sum_i\|\mathbf{x}_i\|^2_2=1} \sum_i \|A_i \mathbf{x}_i\|_2 = \sup_{\|\mathbf{e}_i\|_2=1, \sum_i \lambda_i^2 =1} \sum_i \lambda_i \|A_i \mathbf{e}_i\|_2 = \sup_{\sum_i \lambda_i^2 =1} \sum_i \lambda_i \|A_i\|_2 \\
& \leq \sqrt{\sum_i \|A_i\|_2^2},
\end{align*}
where we are using the substitution $\mathbf{x}_i = \lambda_i \mathbf{e}_i$, and the last inequality holds by e.g.~Cauchy--Schwartz inequality on $[\lambda_1, \ldots, \lambda_N]$ and $[\|A_1\|_2, \ldots, \|A_N\|_2]$.
\end{proof}
Recall that 
\begin{equation*}
F: X \mapsto \left[f^1(X)W^{V,1}, \ldots, f^H(X)W^{V,H}\right] W^O.
\end{equation*}
Since $\|f^h(X)W^{V,h}\|_2 \leq \|J_{f^h}\|_2 \|W^{V,h}\|_2$, by Lemma \ref{lemma:block_cols} we have that
\begin{equation*}
    \left\|[f^1(X)W^{V,1}, \ldots, f^H(X)W^{V,H}]\right\|_2 \leq \sqrt{\sum_h \|J_{f^h}\|_2^2 \|W^{V,h}\|_2^2}
\end{equation*} and hence
\begin{equation}
    \lip_2(F) 
    \leq \left(\sqrt{\sum_h \|J_{f^h}\|_2^2 \|W^{V,h}\|_2^2}\right) \|W^O\|_2.
\end{equation}
Combining this with Inequality (\ref{ineq:2}), we have:
\begin{equation*}
    \lip_2(F) \leq \frac{\sqrt{N}}{\sqrt{D/H}}
    \left(4 \phi^{-1}(N-1) + 1 \right) \left(\sqrt{\textstyle\sum_h \|W^{Q,h}\|_2^2\, \|W^{V,h}\|_2^2}\right) \|W^O\|_2.
\end{equation*}

\section{The Case with Masking} \label{apd:masking}
Since self-attention is often used with \textit{masking}, a natural question is how masking affects the derived bounds. In self-attention (for any choice of attention function), masking is implemented as follows: given a set of mask indices $\mathcal{M}  
\subset \{1, \ldots, N\} \times \{1, \ldots, N\}$, the logits (i.e.~the inputs to the softmax) are set to $-\infty$ at the mask indices. That is,
\begin{equation*}
    L_{ij}= 
\begin{cases}
    \tilde{L}_{ij} & \text{if } (i,j) \notin \mathcal{M}\\
    -\infty        & \text{if } (i,j) \in \mathcal{M}
\end{cases}
\end{equation*}
where $\tilde{L}_{ij}$ is the original logit (e.g.~for L2 self-attention, $\tilde{L}_{ij} = -(\mathbf{x}_i - \mathbf{x}_j)^\top A (\mathbf{x}_i - \mathbf{x}_j)$). 

Masking implies $f_i(X)$ is not a function of $\mathbf{x}_j$ for $(i,j) \in \mathcal{M}$, hence  $J_{ij} = 0$ for $(i,j) \in \mathcal{M}$.
Thus $f_i(X)$ is equal to the $i$th output for self-attention with inputs restricted to $\{\mathbf{x}_j: (i,j) \notin \mathcal{M}\}$, the unmasked inputs with respect to the $i$th output. 
Hence $J_{ij}$ will no longer contribute to the bound on $\|[J_{i1}, \ldots, J_{iN}]\|$, and hence the bound for the unmasked case will continue to hold as long as $(i,i) \in \mathcal{M}$ i.e.~$\mathbf{x}_i$ attends to itself (this is necessary for the proof of Lemma \ref{lemma:key} to hold). 
The bound can in fact be tightened by replacing $N$ with $|\{\mathbf{x}_j: (i,j) \notin \mathcal{M}\}|$, the number of unmasked inputs with respect to the $i$th output.

\section{Experimental Details} \label{apd:experimental_details}
For the experiment in Section \ref{sec:asymptotic}, showing the asymptotic tightness of the upper bound on $\lip_{\infty}(F)$ where $F$ is \verb!L2-MHA!, we fix all free parameters of $F$ (namely $W^Q, W^V$) to be the identity, and only optimise the input $X$. 
We use $50$ random initialisations of $X$ for each $N$, where $X_{ij} \sim \unif [-c, c]$ for $c \sim \unif [0,10]$ (we observed that having $c$ itself be random improves optimisation). We display the top $5$ results for each value of $N$ after optimising each random initialisation till convergence using Adam \citep{kingma2014adam} with a learning rate of $0.1$. 

For the experiments in Section \ref{sec:invertible_self-attention}, we comparing the performance of the original Transformer and the Transformer with Lipschitz/invertible self-attention at character-level language modelling on the Penn Treebank dataset \citep{marcus1993building}.\footnote{We use the standard training-validation-test split, and the dataset can be found at e.g.\ \url{https://github.com/harvardnlp/TextFlow/tree/master/data/ptb}.}
Each training example is a sentence represented as a variable-length sequence of characters, and examples are batched according to length such that padding is minimised, with the maximum sequence length set to $288$. 
All models are autoregressive, outputting the logits for the categorical likelihood predicting the next character, and are trained using maximum likelihood (cross-entropy loss) with a batch size of $64$. 
The LSTM models have the dimensionality of the hidden state equal to the dimensionality $D$ of the cell state (the usual default implementation). 
The Transformer models are trained with a varying number of blocks (number of layers) with $H=8$ heads and $D=512$, tuning hyperparameters for dropout rate in $\{0,0.1,0.2\}$ and base learning rate $\gamma \in \{0.2,0.4,0.6,0.8,1.0,1.5,2.0\}$ with number of warmup iterations $w \in \{1000,2000,4000,8000\}$ for the standard custom learning rate schedule in \citet{vaswani2017attention}:
\begin{equation*}
  \epsilon_t = \frac{\gamma}{\sqrt{D}} \min(t^{-1/2}, t w^{-3/2}),
\end{equation*}
where $\epsilon_t$ is the learning rate at training iteration $t$. Hence the learning rate linearly increases from $0$ to $(Dw)^{-1/2}$ over $w$ iterations, then decays proportionally to $t^{-1/2}$.
We use Glorot Uniform initialisation \citep{glorot2010understanding} for all weights ($U\left[-\sqrt{\frac{1}{d_{in} + d_{out}}}, \sqrt{\frac{1}{d_{in} + d_{out}}}\right]$), except for weights in \verb!L2-MHA! that are initialised from $U\left[-\frac{s}{\sqrt{D}},\frac{s}{\sqrt{D}}\right]$, and $s$ is a hyperparameter. For $D=512$, we used $s=\frac{1}{2^4}$. All experiments were done in Tensorflow 1.14 \citep{abadi2016tensorflow} on single Nvidia Tesla V100 GPUs.

\section{Numerical Invertibility of MHA Residual Map} \label{apd:numerical_invertibility}
Following Section \ref{sec:numerical-invertibility}, Figure \ref{fig:trained_dp_mha_invertibility} confirms that numerical invertibility does not hold for trained weights for dot-product multihead self-attention (DP-MHA) (obtained from one-layer Transformer (DP) model used for Figure \ref{fig:ptb}), similar to the randomly initialised weight case.
Figure \ref{fig:mha_invertibility} shows additional results for different values of $N$ and $D$. 

\begin{figure}[!htb]
    \centering
    \includegraphics[width=0.35\textwidth]{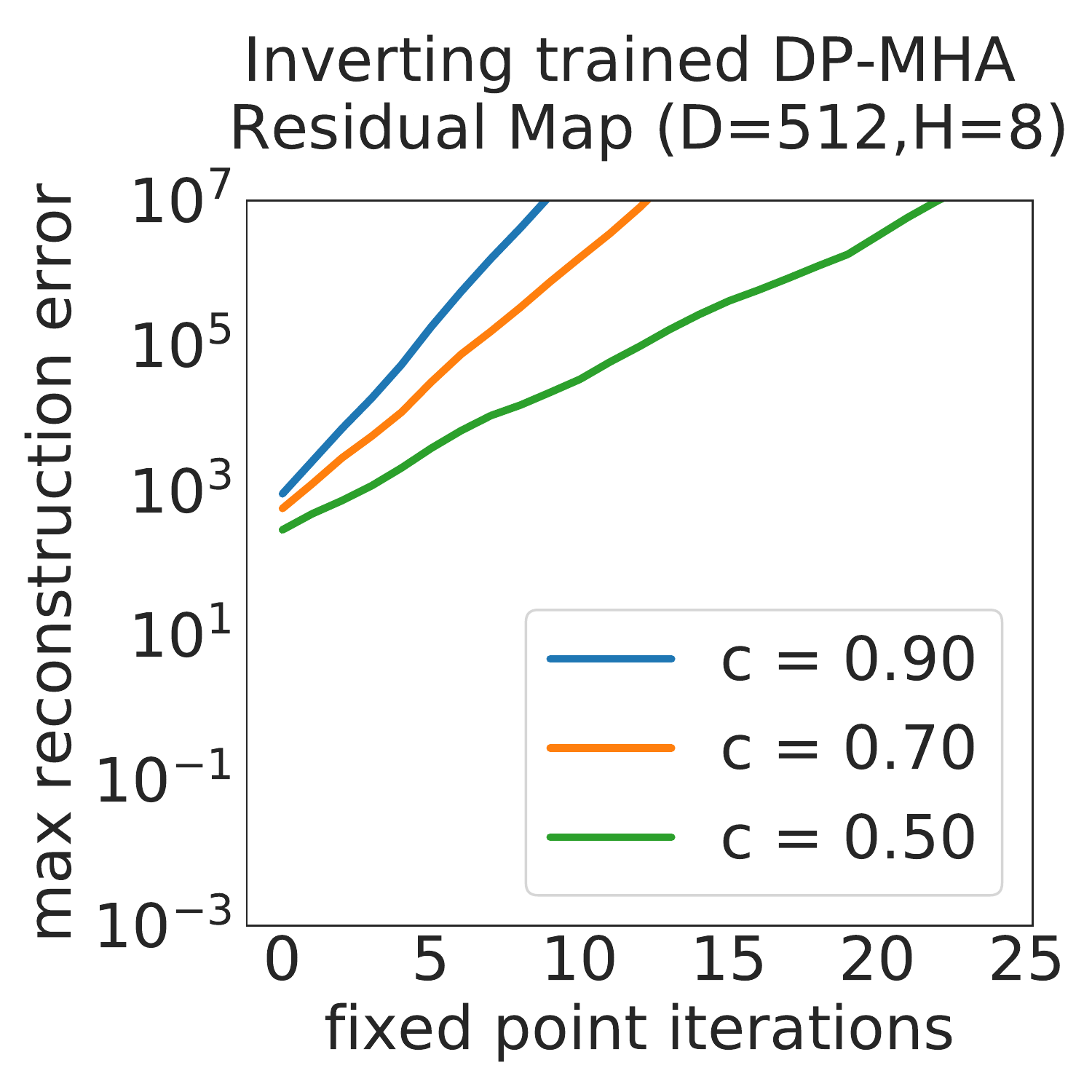}
    \caption{Invertibility of $g(\mathbf{x})=\mathbf{x} + cf(\mathbf{x})$ for trained DP-MHA $f$.}
    \label{fig:trained_dp_mha_invertibility}
\end{figure}

\begin{figure}[!htb]
    \centering
    \includegraphics[width=\columnwidth]{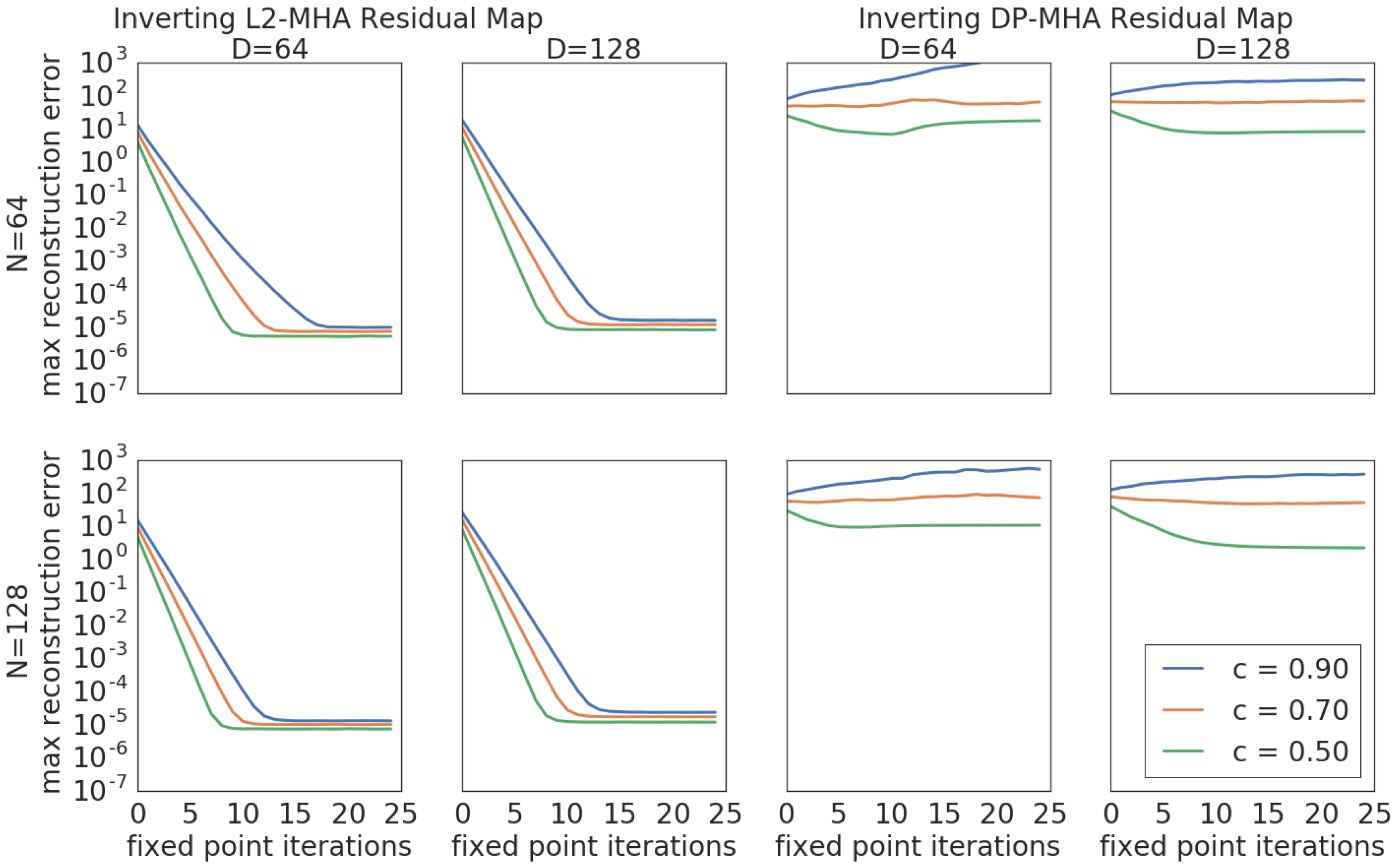}
    \caption{Numerical invertibility of $g(\mathbf{x})= \mathbf{x} + c f(\mathbf{x})$ where $f$ is L2-MHA(left) or DP-MHA  (right), for different values of $N$ and $D$.}
    \label{fig:mha_invertibility}
\end{figure}

\newpage
\section{Behaviour of Lower Bound on $\boldsymbol{\lip_2(F)}$}
\label{apd:jacobian_opt2}
\begin{figure}[!htb]
    \centering
    \includegraphics[width=0.6\columnwidth]{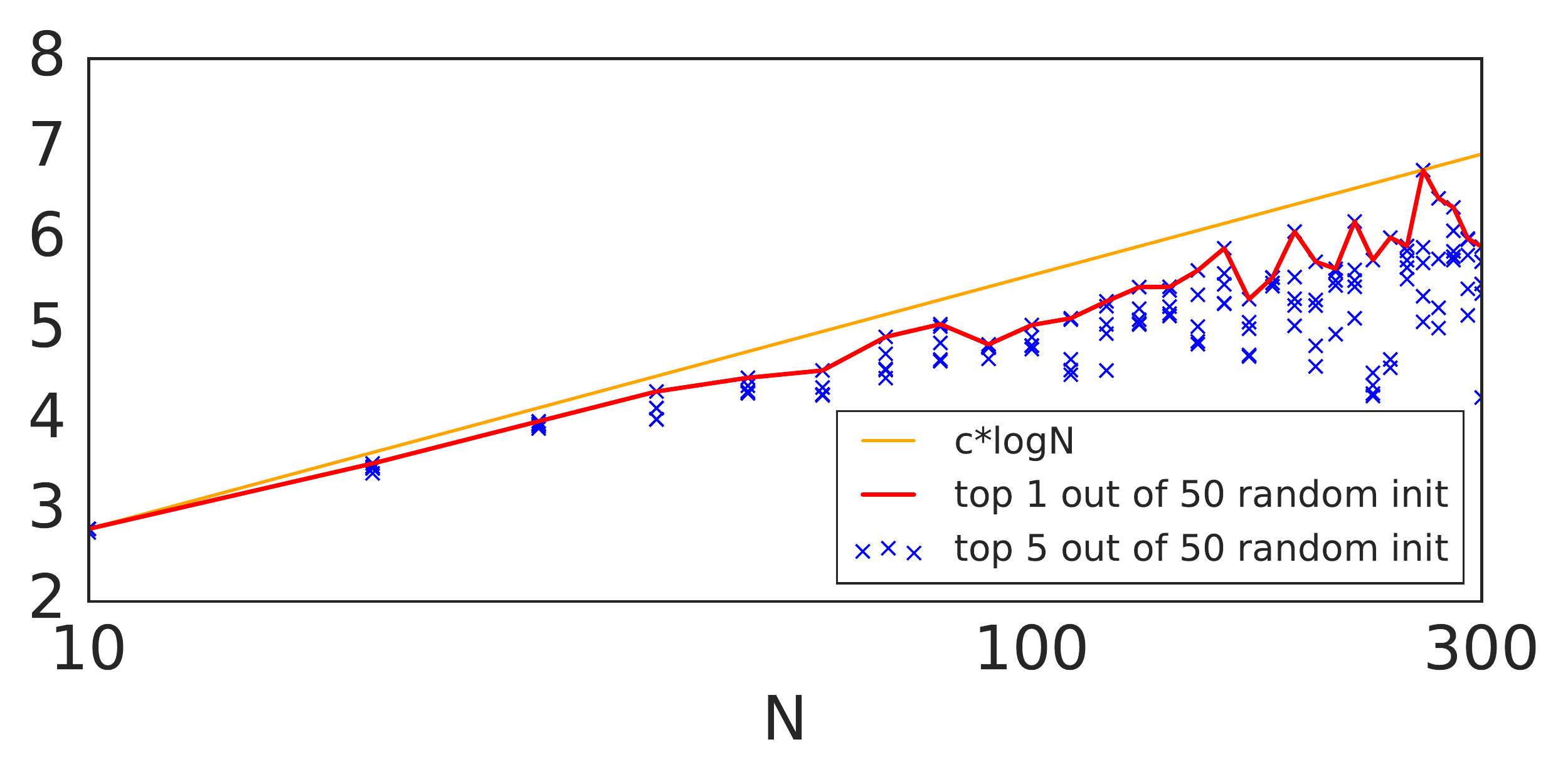}
    \caption{Lower bound on $\lip_{2}(F)$ where $F$ is L2-MHA, with $D=1$ and varying $N$, obtained by optimising $\|J_F(X)\|_{2}$ with respect to $X$, with $50$ random initialisations of $X$ for each $N$.}
    \label{fig:jacobian_opt2}
\end{figure}
In Figure \ref{fig:jacobian_opt2}, we show the lower bound on $\lip_{2}(F)$ obtained by optimising $\|J_F(X)\|_{2}$ using the same optimisation procedure as for Figure \ref{fig:jacobian_opt} of Section \ref{sec:asymptotic}. Here the optimisation is more difficult, evident in the variance of the top $5$ values, and the trend is less clear, but it appears that $\lip_{2}(f)$ grows at a rate of $O(\log N)$. The message is less clear here, and there are at least two possibilities: 
\begin{enumerate}[label=(\arabic*), leftmargin=*]
    \item The optimisation is difficult even for small values of $N$, hence Figure \ref{fig:jacobian_opt2} shows a loose lower bound.
    \item If the lower bound is tight, this suggests that the $O(\sqrt{N} \log N)$ bound in Theorem \ref{thm:main} is not asymptotically tight, and could be improved to $O(\log N)$ (or $O(\log N - \log \log N)$ as for $p=\infty$).
\end{enumerate}

\section{Optimising the norm of the Jacobian of DP-MHA}
\label{apd:jacobian_opt_dp}
In Figure \ref{fig:trained_dp_mha_lower_bound}, we show how the norm of the Jacobian $\|J_f(X)\|_{\infty}$ for \verb!DP-MHA! $f$ keeps increasing when being optimised with respect to $X$. This is a useful sanity check validating our theoretical result of Theorem \ref{thm:dp_not_lipschitz}, that \verb!DP-MHA! is \emph{not} Lipshchitz. The oscillations are likely due to momentum term of Adam optimizer that was used to optimise the norm. 

\begin{figure}[!htb]
    \centering
    \includegraphics[width=0.4\columnwidth]{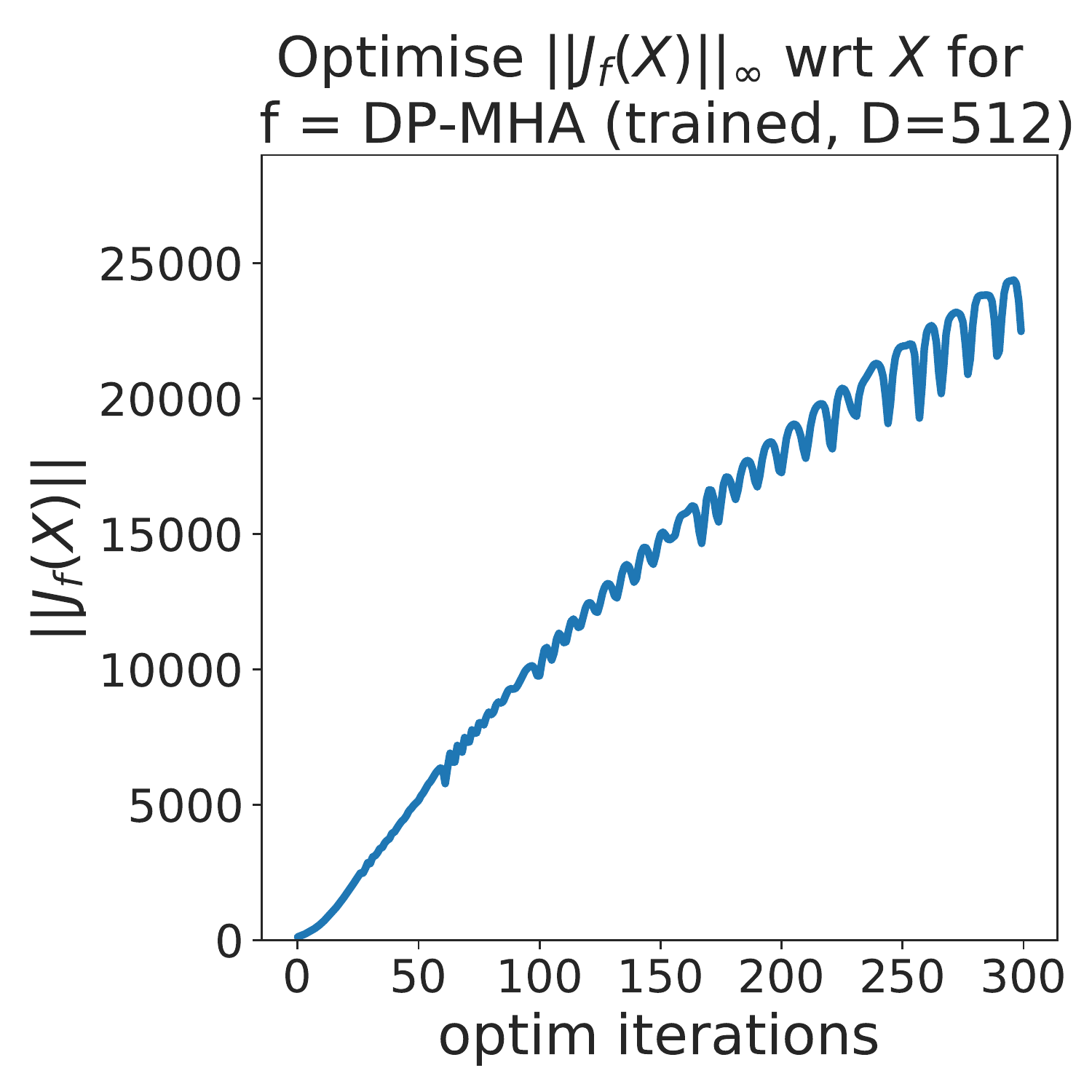}
    \caption{Optimise $\|J_f(X)\|_{\infty}$ w.r.t.~$X$ for trained DP-MHA $f$.}
    \label{fig:trained_dp_mha_lower_bound}
\end{figure}

\section{Experiment tying keys and queries of L2-MHA but preserving parameter count}
\label{apd:wq_wk_experiment}
In Figure \ref{fig:ptb} of Section \ref{sec:invertible_self-attention}, we have shown that there is a clear reduction in performance when tying the keys and queries. To test whether this can be attributed to the reduction in parameter count, we tried doubling the number of columns of $W^Q$ when the keys and queries are shared (i.e. from $D/H$ to $2D/H$) so that the shared model has the same number of parameters as the unshared model. In Figure \ref{fig:ptb_wq_wk}, the third column shows results for shared \verb!L2-MHA!, but with the same number of parameters as the unshared \verb!L2-MHA! i.e.~without tying the keys and queries. The performance is similar to the second column (tying with a reduced number of parameters), suggesting that there is an inherent limitation in expressiveness to tying the keys and queries, and that the reduction in number of parameters is an insufficient explanation this phenomenon.

\begin{figure}[!htb]
    \centering
    \includegraphics[width=\columnwidth]{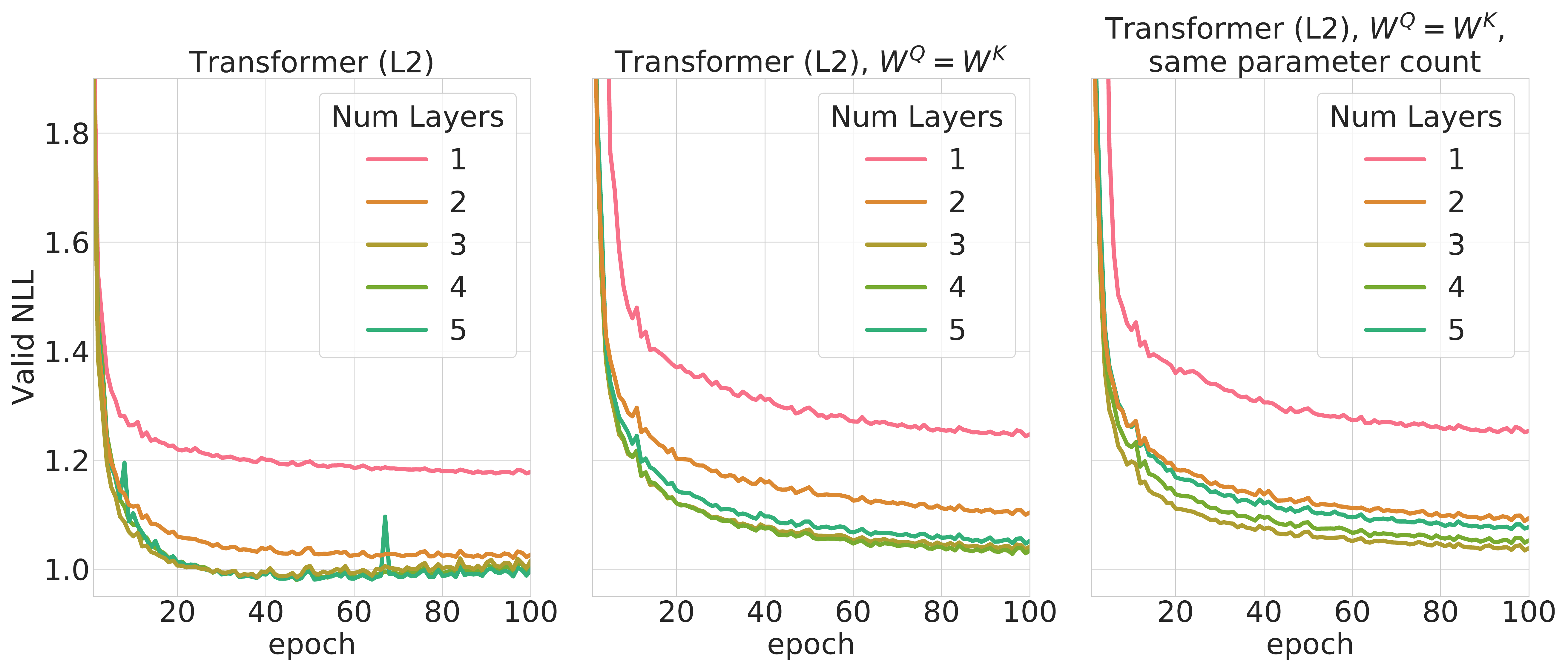}
    \caption{Experiment tying keys/queries but preserving parameter count.}
    \label{fig:ptb_wq_wk}
\end{figure}

\section{Training curves for fixed learning rate DP-MHA vs L2-MHA} \label{apd:stability}

\begin{figure}[!ht]
    \centering
    \includegraphics[width=\columnwidth]{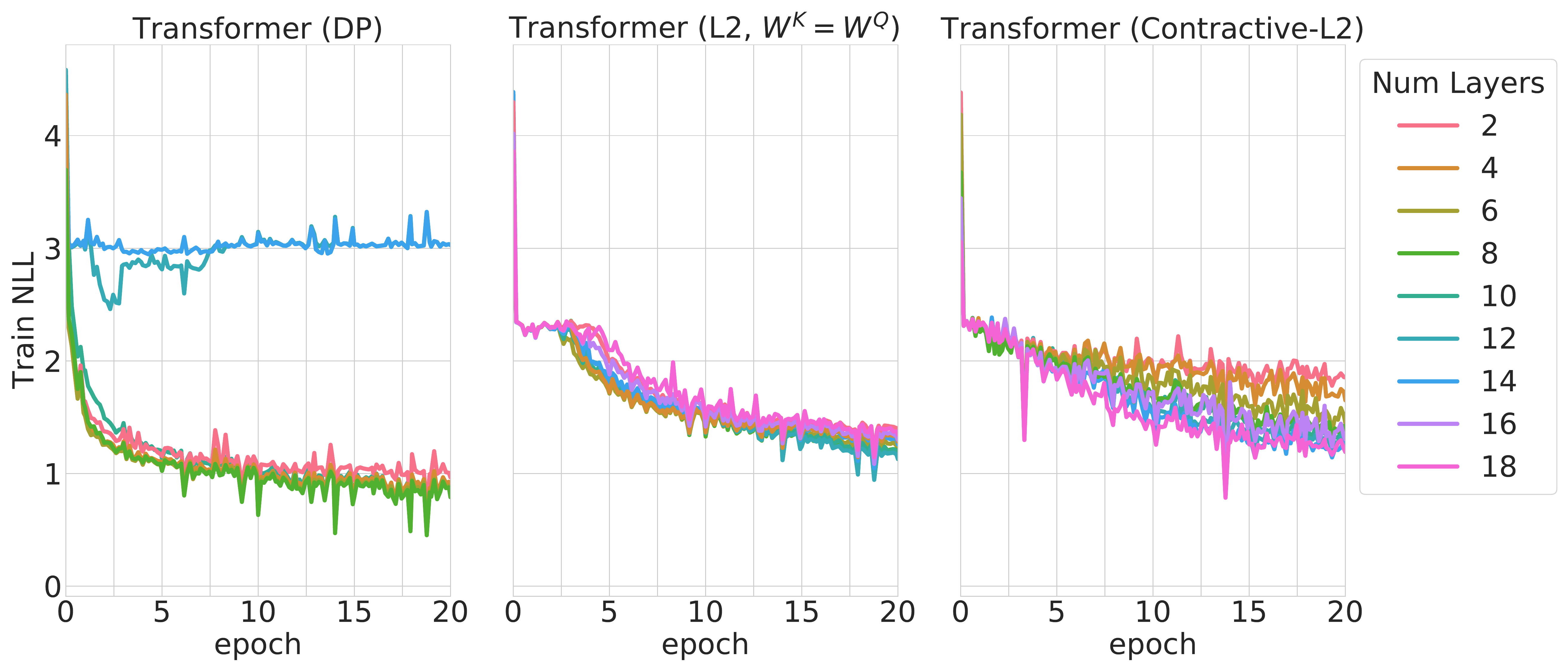}
    \caption{Train NLL for Transformer (DP), Transformer (L2) and Transformer (Contractive-L2)}
    \label{fig:stability}
\end{figure}

\section{The Lipschitz constant of LayerNorm}
\label{apd:layernorm}
In this section, we show that \verb!LayerNorm! is Lipschitz, with a loose bound on its Lipschitz constant w.r.t. to the $\infty$-norm.
\verb!LayerNorm! is defined as follows:
\begin{align*}
    \text{LN}(\mathbf{x}) &= \frac{\mathbf{x}-\mu(\mathbf{x})}{\sqrt{\sigma^2(\mathbf{x}) + \epsilon}} \odot \boldsymbol\gamma + \boldsymbol\beta \\
    \mu(\mathbf{x}) &= \frac{1}{D} \sum_{d=1}^D x_d \\
    \sigma^2(\mathbf{x}) &= \frac{1}{D}\sum_{d=1}^D (x_d - \mu(\mathbf{x}))^2 
\end{align*}
where $\mathbf{x}, \boldsymbol\beta, \boldsymbol\gamma \in \mathbb{R}^D$. We will omit dependence on $x$ to write $\mu, \sigma^2$ in cases when there is no ambiguity to reduce clutter.

In the trivial case where $x_d$ are all equal or when $D=1$, $\mathbf{x} = \mu$ hence $LN(\mathbf{x})=\boldsymbol\beta$, so its Lipschitz constant is 0. Thus let us assume $D > 2$ and not all $x_d$ are equal.

First let us compute the derivative of $\mu$ and $\sigma^2$ w.r.t $x$:
\begin{align*}
    \frac{\partial \mu}{\partial \mathbf{x}} &= \frac{1}{D} \mathds{1}^\top \\
    \frac{\partial \sigma^2}{\partial \mathbf{x}} &= \frac{1}{D} \sum_d 2(x_d - \mu) \frac{\partial}{\partial \mathbf{x}}(x_d - \mu) \\
    &= \frac{2}{D} \sum_d (x_d - \mu)(\mathbf{e}_d - \frac{1}{D}\mathds{1})^\top \\
    &= \frac{2}{D} \bigg[\sum_d (x_d - \mu) \mathbf{e}_d - \frac{1}{D} \mathds{1} \sum_d (x_d - \mu)\bigg]^\top \\
    &= \frac{2}{D}\sum_d (x_d - \mu) \mathbf{e}_d^\top \\
    &= \frac{2}{D}(\mathbf{x} - \mu)^\top
\end{align*}
where $\mathbf{e}_d \in \mathbb{R}^D$ is a one-hot vector with $1$ at the $d$th element. Note the penultimate equality holds because $\sum_d (x_d - \mu) = 0$.

Now the derivative of $\text{LN}(\mathbf{x})_d$, the $d$th element of $\text{LN}(\mathbf{x})$, w.r.t.$\mathbf{x}$ is
\begin{align*}
    \frac{\partial \text{LN}(\mathbf{x})_d}{\partial \mathbf{x}} &= \gamma_d \bigg[\frac{\partial}{\partial \mathbf{x}}(x_d - \mu)(\sigma^2 + \epsilon)^{-\frac{1}{2}} + (x_d - \mu)\Big(-\frac{1}{2}(\sigma^2 + \epsilon)^{-\frac{3}{2}}\Big)\frac{\partial \sigma^2}{\partial \mathbf{x}}\bigg] \\
    &= \gamma_d (\sigma^2 + \epsilon)^{-\frac{1}{2}} \bigg[(\mathbf{e}_d - \frac{1}{D}\mathds{1})^\top - \frac{1}{2} (x_d - \mu)(\sigma^2 + \epsilon)^{-1} \frac{2}{D}(\mathbf{x} - \mu)^\top \bigg] \\
    &= \gamma_d (\sigma^2 + \epsilon)^{-\frac{1}{2}} \bigg[(\mathbf{e}_d - \frac{1}{D}\mathds{1})^\top - \frac{1}{D} (\sigma^2 + \epsilon)^{-1} (x_d - \mu)(\mathbf{x} - \mu)^\top \bigg].
\end{align*}

Hence
\begin{align*}
    \frac{\partial \text{LN}(\mathbf{x})}{\partial \mathbf{x}} &= (\sigma^2 + \epsilon)^{-\frac{1}{2}} \bigg[ \text{diag}(\boldsymbol\gamma) - \frac{1}{D}\boldsymbol\gamma \mathds{1}^\top - \frac{1}{D} (\sigma^2 + \epsilon)^{-1}\text{diag}(\boldsymbol\gamma)(\mathbf{x} - \mu)(\mathbf{x} - \mu)^\top \bigg].
\end{align*}

Note
\begin{equation*}
    \text{diag}(\boldsymbol\gamma) - \frac{1}{D}\boldsymbol\gamma \mathds{1}^\top = \begin{bmatrix}
    \gamma_1 (D-1)/D & -\gamma_1 / D & \dots & -\gamma_1 / D \\
    - \gamma_2 / D & \gamma_2 (D-1)/D & \dots & -\gamma_2 / D \\
    \vdots & \vdots & \ddots & \vdots \\
    -\gamma_D / D & -\gamma_D / D & \dots & \gamma_D (D-1)/D
    \end{bmatrix},
\end{equation*}

hence
\begin{equation} \label{eq:first_terms_inf_norm}
    \left\Vert \text{diag}(\boldsymbol\gamma) - \frac{1}{D}\boldsymbol\gamma \mathds{1}^\top \right\Vert_{\infty} = \frac{2(D-1)}{D}\max_d |\gamma_d|,
\end{equation}
recalling that $\left\Vert \cdot \right\Vert_{\infty}$ is the maximum absolute row sum.

Let $z_d \coloneqq x_d - \mu$. Hence $\sum_d z_d = 0$, $\sigma^2 = \frac{1}{D} \sum_d z_d^2$ and
\begin{equation*}
    \mathrm{Cov}(\mathbf{x}) = 
    (\mathbf{x} - \mu)(\mathbf{x} - \mu)^\top =
    \begin{bmatrix}
    z_1^2 & \dots & z_1 z_D \\
    \vdots & \ddots & \vdots \\
    z_D z_1 & \dots & z_D^2 \\
    \end{bmatrix}.
\end{equation*}

Hence
\begin{equation*}
\frac{\left\Vert \mathrm{Cov}(\mathbf{x})\right\Vert_{\infty}}{\sigma^2} = \frac{\max_d |z_d| \sum_{d'} |z_{d'}|}{\frac{1}{D}\sum_d z_d^2}.    
\end{equation*}
Noting that this expression is scale-invariant in $\mathbf{z}$, we may assume WLOG $\max_d |z_d| = z_D = 1$, since we are assuming not all $x_d$ are equal and hence at least one $z_d$ is non-zero.

The expression now becomes
\begin{equation} \label{eq:cov_expression}
\frac{\left\Vert \mathrm{Cov}(\mathbf{x})\right\Vert_{\infty}}{\sigma^2} =  D\bigg(\frac{1 + \sum_{d < D} |z_d|}{1 + \sum_{d<D} z_d^2}\bigg). 
\end{equation}
Since all terms $|z_d| \leq 1$ are bounded, this continuous expression reaches a global maximum for some value of $\mathbf{z}$ with $z_D = 1$.

It is easy to see that at the global maximum, $z_d \neq 0$ $\forall d$:
suppose this were to be true, WLOG $z_1 = 0$.
Then let us see how the quantity \eqref{eq:cov_expression} changes when $z_1=0$ is increased by $0< \delta < 1$ and $z_D=1$ is decreased by $\delta$, keeping the sum constant.
It is easy to see that the numerator $\sum_d |z_d|$ stays constant, but the denominator $\sum_d z_d^2$ changes by $2\delta^2 - 2\delta < 0$.
Since for small $\delta$, the numerator of \eqref{eq:cov_expression} stays constant but the denominator decreases, the quantity \eqref{eq:cov_expression} increases, contradicting that the global max is obtained for $z_1 = 0$.
Hence we may assume that $z_d \neq 0$ $\forall d$.

Hence the quantity \eqref{eq:cov_expression} (in particular, $\sum_d {|z_d|}$) is differentiable at the global maximum, at which the partial derivatives of the following Lagrangian are zero:
\begin{equation*}
    \mathcal{L}(z_1,\ldots,z_{D-1}, \lambda) = \frac{1 + \sum_{d < D} |z_d|}{1 + \sum_{d<D} z_d^2} - \lambda(\sum_{d<D} z_d + 1).
\end{equation*}
From now on let us write $\sum$ for $\sum_{d < D}$ below to reduce clutter. Setting $\frac{\partial \mathcal{L}}{\partial z_k} = 0$ and noting $\frac{d|z_k|}{dz_k} = \text{sgn}(z_k)$, we obtain
\begin{align*}
    & \frac{\text{sgn}(z_k)(1 + \sum z_d^2) - 2z_k(1 + \sum |z_d|)}{(1+\sum z_d^2)^2} - \lambda = 0 \\
    \iff & \text{sgn}(z_k)(1 + \sum z_d^2) - 2z_k(1 + \sum |z_d|) = \lambda (1+\sum z_d^2)^2 \\
    \iff & z_k = \frac{\text{sgn}(z_k)(1 + \sum z_d^2) - \lambda (1+\sum z_d^2)^2}{2(1 + \sum |z_d|)} \\
    \iff & z_k = \frac{(\text{sgn}(z_k) - \lambda (1+\sum z_d^2))(1 + \sum z_d^2)}{2(1 + \sum |z_d|)}
\end{align*}
Hence at the global maximum, $z_k$ takes one of two values $a > 0$ and $b < 0$.
Further we have that
\begin{align} \label{eq:max_expression}
    \frac{1 + \sum|z_d|}{1 + \sum z_d^2} = \frac{\text{sgn}(z_k) - \lambda (1+\sum z_d^2)}{2z_k}
\end{align}
If both $a$ and $b$ are among the $z_k$, we have that $\frac{1 - \lambda (1+\sum z_d^2)}{2a} = \frac{- 1 - \lambda (1+\sum z_d^2)}{2b}$.
Solving for $\lambda(1+\sum z_d^2)$ and plugging it in back to Equation \eqref{eq:max_expression}, we get:
\begin{align*}
    \frac{1 + \sum|z_d|}{1 + \sum z_d^2} = \frac{1}{a-b}
\end{align*}
Since $a > 0$, $b < 0$ and $\sum z_d = -1$, $a-b$ is minimised when only one of the $z_d$ is $a$ and the rest are $b$. Hence a crude lower bound on $a-b$ is $\frac{1}{D-2}$, giving a bound:
\begin{equation} \label{eq:second_term_inf_norm}
    \frac{\left\Vert \mathrm{Cov}(\mathbf{x})\right\Vert_{\infty}}{\sigma^2}
    \leq D(D-2)
\end{equation}

However we conjecture that the true global maximum is attained when $z_d = - \frac{1}{D-1}$ $\forall d < D$ (i.e. all the $z_d$ for $d < D$ are equal to $b < 0$), for which it is easy to show that $\frac{1 + \sum_{d < D} |z_d|}{1 + \sum_{d<D} z_d^2} = 2(D-1)/D$.

Putting together the above, we have:
\begin{align*}
    \left\Vert \frac{\partial \text{LN}(\mathbf{x})}{\partial \mathbf{x}} \right\Vert_{\infty} &= (\sigma^2 + \epsilon)^{-\frac{1}{2}} \left\Vert   \text{diag}(\boldsymbol\gamma) - \frac{1}{D}\boldsymbol\gamma \mathds{1}^\top - \frac{1}{D} (\sigma^2 + \epsilon)^{-1}\text{diag}(\boldsymbol\gamma)(\mathbf{x} - \mu)(\mathbf{x} - \mu)^\top \right\Vert_{\infty} \\
    &\leq \epsilon^{-\frac{1}{2}} \bigg( \left\Vert \text{diag}(\boldsymbol\gamma) - \frac{1}{D}\boldsymbol\gamma \mathds{1}^\top \right\Vert_{\infty} + \frac{1}{D}\left\Vert  \text{diag}(\boldsymbol\gamma) \right\Vert_{\infty} \left\Vert  (\sigma^2 + \epsilon)^{-1}(\mathbf{x} - \mu)(\mathbf{x} - \mu)^\top \right\Vert_{\infty} \bigg) \\
    &\leq \epsilon^{-\frac{1}{2}} \bigg( \left\Vert \text{diag}(\boldsymbol\gamma) - \frac{1}{D}\boldsymbol\gamma \mathds{1}^\top \right\Vert_{\infty} + \frac{1}{D}\left\Vert  \text{diag}(\boldsymbol\gamma) \right\Vert_{\infty} \left\Vert  \mathrm{Cov}(\mathbf{x}) / \sigma^2 \right\Vert_{\infty} \bigg) \\
    &\leq \epsilon^{-\frac{1}{2}} \bigg( \frac{2(D-1)}{D}\max_d |\gamma_d| + \frac{1}{D} \max_d |\gamma_d| D(D-2) \bigg) \\
    &= \epsilon^{-\frac{1}{2}} \max_d |\gamma_d| \bigg(\frac{2(D-1)}{D} + D-2 \bigg) \\
    &= \epsilon^{-\frac{1}{2}} \max_d |\gamma_d| \bigg(\frac{D^2-2}{D}\bigg).
\end{align*}

}
\end{document}